\documentclass{article}

\usepackage[preprint]{neurips_2026}
\AtBeginDocument{\newgeometry{
  textwidth=6in,
  textheight=9in,
  top=1in,
  headheight=12pt,
  headsep=25pt,
  footskip=30pt
}}
\makeatletter
\AtBeginDocument{\renewcommand{\@noticestring}{Preprint. Under review.}}
\makeatother

\usepackage[utf8]{inputenc}
\usepackage[T1]{fontenc}
\usepackage{hyperref}
\usepackage{url}
\usepackage{booktabs}
\usepackage{amsmath,amssymb,amsthm}
\usepackage{nicefrac}
\usepackage{microtype}
\usepackage{xcolor}
\definecolor{ourblue}{RGB}{0,102,204}
\definecolor{unpolishedorange}{RGB}{204,102,0}
\usepackage{algorithm}
\usepackage{algpseudocode}
\usepackage{float}
\usepackage{setspace}
\usepackage{mathtools}
\usepackage{enumitem}
\usepackage{graphicx}
\usepackage{multirow}

\newtheoremstyle{prominent}% name
  {10pt}%   space above
  {10pt}%   space below
  {\itshape}% body font
  {}%       indent amount
  {\bfseries}% theorem head font
  {.}%      punctuation after theorem head
  {0.5em}%  space after theorem head
  {}%       theorem head spec (can be left empty, meaning 'normal')

\theoremstyle{prominent}
\newtheorem{claim}{Claim}
\newtheorem{proposition}[claim]{Proposition}
\newtheorem{corollary}{Corollary}[claim]
\newtheorem{lemma}{Lemma}
\newtheorem{theorem}{Theorem}

\theoremstyle{definition}

\theoremstyle{remark}
\newtheorem*{remark}{Remark}

\newcommand{\R}{\mathbb{R}}
\newcommand{\Spp}{\mathbb{S}_{++}}
\newcommand{\E}{\mathbb{E}}
\newcommand{\Tr}{\mathrm{Tr}}
\newcommand{\AM}{\mathrm{AM}}
\newcommand{\GM}{\mathrm{GM}}

\newcommand{\Diag}{\mathrm{Diag}}
\newcommand{\vol}{\mathrm{vec}}
\newcommand{\KL}{D_{\mathrm{KL}}}

\newcommand{\polar}{\mathrm{polar}}
\newcommand{\qr}{\mathrm{qr}}
\newcommand{\St}{\mathrm{St}}
\DeclareMathOperator*{\argmin}{arg\,min}

\newcommand{\Gtil}{\widetilde{G}}

\newcommand{\diag}{\mathrm{diag}}
\newcommand{\Lr}{{L^*_{\mathrm{restr}}}}

\usepackage[most]{tcolorbox}
\newcommand{\aside}[1]{{\footnotesize\color{gray}#1}}
\newcommand{\highlight}[1]{\textcolor{ourblue}{#1}}
\tcbset{
  keyeq/.style={enhanced, colback=white, colframe=ourblue!55!black,
    left=6pt, right=6pt, top=4pt, bottom=4pt,
    boxrule=0.5pt, arc=1.5pt,
    attach boxed title to top left={xshift=0.6cm, yshift=-\tcboxedtitleheight/2},
    boxed title style={size=small, colback=white, opacityback=1,
      opacityframe=0, boxrule=0pt},
    fonttitle=\bfseries\footnotesize,
    coltitle=ourblue!55!black,
    title={#1}
  }
}

\definecolor{editedpink}{RGB}{220,40,120}

\definecolor{srtgreen}{RGB}{0,140,0}

\title{Pro-KLShampoo: Projected KL-Shampoo\\with Whitening Recovered by Orthogonalization}

\author{%
  Ruotong Sun\textsuperscript{$\dagger$}, Ermin Wei\textsuperscript{$*\dagger$}\\[2pt]
  \textsuperscript{$*$}Department of Electrical \& Computer Engineering, Northwestern University\\
  \textsuperscript{$\dagger$}Department of Industrial Engineering \& Management Sciences, Northwestern University\\[2pt]
  \texttt{ruotongsun2030@u.northwestern.edu},\quad \texttt{ermin.wei@northwestern.edu}
}

\begin{document}

\maketitle

% ============================================================================
\begin{abstract}
% ============================================================================
Optimizers that exploit the matrix structure of gradients are central to modern LLM pre-training, with two distinct frontiers: explicit Kronecker-factored preconditioning---most recently KL-Shampoo, which estimates the preconditioner via KL divergence minimization---and orthogonalization of the gradient momentum, exemplified by Muon and analyzed as steepest descent under the spectral norm. The two routes are typically developed in isolation.
We make a structural observation about KL-Shampoo's Kronecker preconditioners: their eigenvalue spectra exhibit a \emph{spike-and-flat} shape---a few dominant eigenvalues followed by an approximately uniform tail---across layers and training stages, holding exactly under a rank-$\rho$ signal-plus-noise gradient model.
We exploit this structure by restricting one of KL-Shampoo's Kronecker factors to a parametric family aligned with the spike-and-flat shape: full spectral structure on a tracked $r$-dimensional subspace, single shared eigenvalue across the remaining $n-r$ directions. On these directions, we apply orthogonalization. An identity shows that this orthogonalization recovers the algebraic form of full KL-Shampoo's preconditioner.
On four pre-training scales (GPT-2 124M / 350M, LLaMA 134M / 450M), Pro-KLShampoo consistently outperforms KL-Shampoo at every subspace rank we test in validation loss, peak per-GPU memory, and wallclock time to reach each loss level.
\end{abstract}
% ============================================================================
\section{Introduction}
\label{sec:intro}
% ============================================================================

Optimizers that exploit the matrix structure of gradients now rival AdamW~\citep{kingma2014adam,loshchilov2017decoupled} for LLM pre-training. The Kronecker-factored preconditioning approach, originating with Shampoo~\citep{gupta2018shampoo}, has seen rapid recent development. SOAP~\citep{vyas2024soap} reduces per-iteration runtime by running Adam in Shampoo's eigenbasis. KL-Shampoo~\citep{lin2025understanding} instead replaces the underlying estimation principle---recasting it as KL divergence minimization---and outperforms SOAP and Shampoo on LLM pretraining tasks. A separate approach, exemplified by Muon~\citep{modded_nanogpt_2024}, orthogonalizes the gradient momentum, realizing steepest descent under the spectral norm~\citep{bernstein2024old}.

In this work, we make a structural observation about KL-Shampoo's Kronecker preconditioners: their eigenvalue spectra exhibit a \emph{spike-and-flat} shape---a few dominant eigenvalues followed by an approximately uniform tail across layers and training stages (Figure~\ref{fig:spike_flat}), echoing low-rank gradient and spiked Hessian phenomena reported in neural network training~\citep{gur2018gradient,ghorbani2019investigation,jaiswal2024low}. Under a rank-$\rho$ signal-plus-noise gradient model, the derived preconditioners in KL-Shampoo exhibit the spike-and-flat structure exactly: at least $n - \rho$ bottom eigenvalues coincide. This motivates restricting one of the Kronecker preconditioners to a parametric family aligned with the spike-and-flat shape.

Concretely, the parametric family takes the following form: full spectral structure on a tracked $r$-dimensional subspace, and a single shared eigenvalue across the remaining $n-r$ directions. This scalar tail update is the optimal choice in KL-Shampoo when spike-and-flat holds exactly. In practice the tail is only approximately uniform, and the scalar tail update loses information. Pro-KLShampoo retains the spike-and-flat parametric family but \emph{replaces the scalar tail update with orthogonalization}. An identity (Eq.~\eqref{eq:polar_identity}) shows this replacement is not heuristic: orthogonalization on the remaining directions recovers the algebraic form of full KL-Shampoo's preconditioner there. Our contributions:
\begin{enumerate}[leftmargin=*,itemsep=1pt]
\item We identify a spike-and-flat eigenvalue structure in KL-Shampoo's preconditioners---empirically robust across layers and training stages, theoretically exact under a low-rank gradient model---and exploit it by restricting the KL objective to a parametric family aligned with this structure.

\item On the directions outside the tracked subspace, we show that orthogonalization recovers the algebraic form of full KL-Shampoo's preconditioner, connecting Muon-style orthogonalization to the KL estimation framework. A nonconvex convergence guarantee at rate $O(1/\sqrt{T})$ follows for the idealized algorithm under operator-norm smoothness.

\item We evaluate Pro-KLShampoo on GPT-2 (124M / 350M) and LLaMA (134M / 450M) at multiple subspace ranks. Pro-KLShampoo improves on KL-Shampoo in validation loss, peak memory, and wallclock time to reach matched loss levels in every configuration; an ablation study isolates the contribution of each design component.
\end{enumerate}

% ============================================================================
\section{Background}
\label{sec:background}
% ============================================================================

We use $\Spp^k$ for the cone of $k \times k$ symmetric positive-definite (SPD) matrices, and $\St(n,r) \coloneqq \{U \in \R^{n \times r}: U^\top U = I_r\}$ for the Stiefel manifold. We write $\|\cdot\|_F$, $\|\cdot\|_{\mathrm{op}}$, and $\|\cdot\|_*$ for the Frobenius, operator (spectral), and nuclear norms, respectively.

Consider a weight matrix $W \in \R^{m \times n}$ and stochastic gradient $G \in \R^{m \times n}$.
Following~\citet{gupta2018shampoo}, we use row-major vectorization: $\vol(G) \coloneqq (g_1^\top, \ldots, g_m^\top)^\top \in \R^{mn}$ where $g_i^\top$ is the $i$-th row of $G$.
We write $\Sigma \coloneqq \E[\vol(G)\vol(G)^\top] \in \Spp^{mn}$ for the gradient second moment.

\subsection{Kronecker-factored preconditioning: K-FAC, Shampoo, and KL-Shampoo}
Kronecker product approximation of the preconditioner has seen substantial development in neural-network training. \textbf{K-FAC}~\citep{martens2015optimizing} approximates the Fisher information matrix as a Kronecker product of two factors, enabling efficient natural-gradient updates. \textbf{Shampoo}~\citep{gupta2018shampoo} adopts the same Kronecker structure for the gradient second moment, applying the update $\Delta W \propto L^{-p}\,G\,R^{-p}$ where $L \in \Spp^m, R \in \Spp^n$ are estimated as $L \propto \E[GG^\top]$ and $R \propto \E[G^\top G]$. The original Shampoo~\citep{gupta2018shampoo} uses $p=1/4$ for the regret guarantee; subsequent theoretical and empirical work supports $p=1/2$, and we follow this convention throughout.

\textbf{KL-Shampoo}~\citep{lin2025understanding} reformulates Shampoo's estimation as a KL divergence minimization. Treating $\Sigma$ as the covariance of a zero-mean Gaussian, the goal is to find the best Kronecker-structured covariance $L \otimes R$ that approximates $\Sigma$:
\begin{equation}\label{eq:full_kl}
    \min_{L \in \Spp^m,\; R \in \Spp^n}\; \KL\!\big(\mathcal{N}(0,\Sigma) \;\|\; \mathcal{N}(0, L \otimes R)\big).
\end{equation}
Unlike Shampoo's Frobenius-based estimation, the KL divergence naturally respects the SPD constraint on $L$ and $R$. The stationarity conditions of~\eqref{eq:full_kl} (\citealt{lin2025understanding}, Claim~2) couple $L, R$:
\begin{equation}\label{eq:full_stat}
    L^* = \tfrac{1}{n}\,\E\!\left[G\,(R^*)^{-1}G^\top\right], \qquad
    R^* = \tfrac{1}{m}\,\E\!\left[G^\top\,(L^*)^{-1}\,G\right].
\end{equation}
Each side is the gradient second moment whitened by the other side---a coupling absent in Shampoo. This KL-based estimation removes the need for step-size grafting, which vanilla Shampoo typically requires. In practice, $L$ and $R$ are maintained via exponential moving averages (EMA).
\subsection{Orthogonalization}
For a matrix $M = U \Sigma V^\top$ (SVD), define $\polar(M) \coloneqq UV^\top = M(M^\top M)^{\dagger/2}$; this operation is called \emph{orthogonalization}. The gradient step $W \leftarrow W - \eta\polar(\nabla f(W))$ realizes steepest descent under the spectral norm~\citep{bernstein2024old, bernstein2024modular, large2024scalable}. Muon~\citep{modded_nanogpt_2024} applies this update to the momentum, with $\polar$ approximated by Newton--Schulz iteration.

% ============================================================================
\section{Pro-KLShampoo: Exploiting Spike-and-Flat Structure}
\label{sec:method}
% ============================================================================

In this section, we present \textbf{Pro-KLShampoo: Projected KL-Shampoo with Whitening Recovered by Orthogonalization}, by restricting KL-Shampoo's preconditioner to a parametric family with a spike-and-flat eigenvalue structure. The motivation comes from two parts: an empirical observation and an argument from a low-rank gradient model.

\paragraph{Empirical observation.}
Figure~\ref{fig:spike_flat} visualizes the eigenvalue spectra of KL-Shampoo's Kronecker preconditioners. Both sides exhibit a \emph{spike-and-flat} pattern: a small number of dominant eigenvalues followed by a nearly uniform tail. This is a robust empirical pattern across the layers, depths, and training stages we measure. The same structure also holds on LLaMA (Appendix~\ref{app:llama_spike}). Related spectral observations---low-rank gradient subspaces~\citep{jaiswal2024low, gur2018gradient} and spiked Hessian spectra~\citep{ghorbani2019investigation}---suggest that low-dimensional spectral structure is pervasive in deep network optimization.

\begin{figure}[t]
    \centering
    \includegraphics[width=\linewidth]{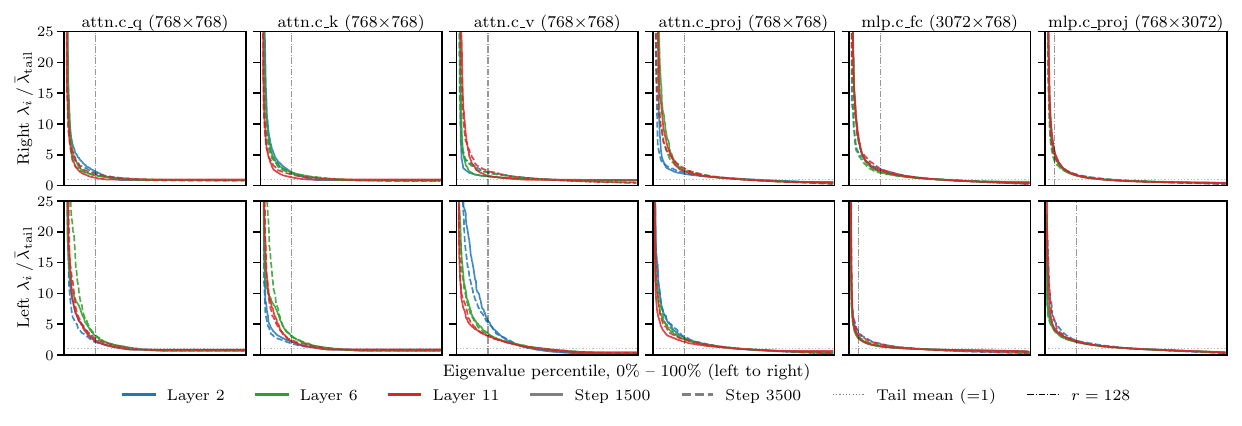}
    \caption{Eigenvalue spectra of practical version of KL-Shampoo's Kronecker preconditioners on GPT-2 (124M), normalized by the tail mean with rank $r = 128$ (vertical dashed line). Both the right-side preconditioner $R$ (top row) and the left-side preconditioner $L$ (bottom row) show a consistent spike-and-flat pattern across all layer types (by panel), three depths (by color), and two training stages (line style). In each row, the first four panels are attention (square) layers, the last two are MLP (rectangular) layers. For square layers, the structure is especially pronounced on the right side, most clearly in \texttt{c\_v}. For the rectangular layers, $r = 128$ still captures the spike for $R$ (the larger side), although the tail occupies a larger fraction of the spectrum and exhibits more variation.}
    \label{fig:spike_flat}
\end{figure}

\paragraph{Low rank gradient supports the spike-and-flat.}
The rank-$\rho$ signal-plus-noise gradient model is a fundamental building block of spiked covariance theory~\citep{johnstone2001distribution, paul2007asymptotics} and is consistent with low-rank gradient phenomena reported in transformer training~\citep{zhao2024galore,jaiswal2024low}. Under this model, the KL stationary points $L^*$ and $R^*$ have at least $n-\rho$ and $m-\rho$ equal bottom eigenvalues, respectively (Lemma~\ref{lem:spike_exact}; Appendix~\ref{app:spike_proof_remarks}).

\paragraph{Structural restriction.}
Together, these motivate restricting KL-Shampoo's Kronecker preconditioner to a parametric family that encodes the spike-and-flat structure explicitly: the preconditioner need not be maintained in full. We apply this restriction to the right-side preconditioner $R$ only, orienting rectangular layers so that $R$ is the larger side ($m \leq n$).\footnote{For square layers, Figure~\ref{fig:spike_flat} shows the spike on $R$ to be more pronounced than on $L$; for rectangular layers, the larger side has a much more expensive QR decomposition (e.g., GPT-2's MLP has a larger side $4\times$ the smaller, making its QR $64\times$ as expensive; LLaMA's SwiGLU MLP has ratio $8/3$).} We choose an $r$-dimensional projected subspace (spanned by a particular $U \in \St(n,r)$) in which $R$ retains its full spectral structure, and collapse the remaining $n - r$ dimensions to a single scalar:
\begin{equation}\label{eq:spike_flat}
    \hat{R} \;=\; USU^\top + \mu_\perp P_\perp.
\end{equation}
Here $S \in \Spp^r$ is the preconditioner maintained in the tracked subspace $U$, and $\mu_\perp > 0$ is the scalar approximating the uniform tail of the spectrum on the complement, with $P_\perp \coloneqq I_n - UU^\top$. Ideally, $r$ captures the dominant eigenvalues of $R$ (Figure~\ref{fig:spike_flat}). When $r \ll n$, the right-side storage drops from $O(n^2)$ to $O(nr)$ and its QR cost from $O(n^3)$ to $O(r^3)$; the left-side is unchanged.

\begin{tcolorbox}[keyeq={Spike-and-flat restriction of the KL objective}]
\begin{equation}\label{eq:restr}
    \mathcal{J}^{\mathrm{restr}} \;\coloneqq\; \min_{\substack{L \in \Spp^m,\; U \in \St(n,r) \\ S \in \Spp^r,\; \mu_\perp > 0}} \KL\!\left(\mathcal{N}(0,\Sigma) \;\|\; \mathcal{N}\left(0, L \otimes (USU^\top + \mu_\perp P_\perp)\right)\right)
\end{equation}
\end{tcolorbox}

% \paragraph{Projected KL-Shampoo with Whitening Recovered by Orthogonalization (Pro-KLShampoo)}

% \unpolishedstart
% Algorithm~\ref{alg:ideal} displays the method alongside KL-Shampoo (\textcolor{ourblue}{blue} = our additions).
% The restricted optimum decomposes cleanly: on the subspace, we recover KL-Shampoo's coupled estimation (Claim~\ref{claim:stat}); on the complement, the update reduces to scalar-whitened gradient descent, which we sharpen with an orthogonalized update (\S\ref{sec:polar}).
% The subspace itself is tracked by power iteration on the full-preconditioner estimate---the optimal direction under an interlacing argument (\S\ref{sec:opt_sub}).
% \unpolishedend

Restricting $\hat R$ to spike-and-flat structure discards information whenever the bottom $n-r$ eigenvalues of the full optimum $R^*$ are not exactly uniform. Intuitively, the flatter the tail, the more accurate the approximation. The following claim quantifies the intuition by the $\AM/\GM$ ratio (the ratio of arithmetic to geometric mean) of the tail eigenvalues.

\begin{claim}[Approximation gap]\label{claim:gap}
Denote the optimal full KL objective in~\eqref{eq:full_kl} as $\mathcal{J}^{\mathrm{full}}$. Let $(L^*, R^*)$ be the full KL optimum with $R^*$ having eigenvalues $\mu_1^* \geq \cdots \geq \mu_n^* > 0$. Then:
\begin{equation}\label{eq:gap_bound}
    0 \;\leq\; \mathcal{J}^{\mathrm{restr}} - \mathcal{J}^{\mathrm{full}} \;\leq\; \frac{m(n-r)}{2}\,\log\frac{\AM(\mu_{r+1}^*,\ldots,\mu_n^*)}{\GM(\mu_{r+1}^*,\ldots,\mu_n^*)}.
\end{equation}
When $\mu_{r+1}^* = \cdots = \mu_n^*$\ , the gap is zero.
\end{claim}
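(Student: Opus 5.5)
The lower bound $\mathcal{J}^{\mathrm{restr}} \ge \mathcal{J}^{\mathrm{full}}$ is immediate. The restricted family in~\eqref{eq:restr} is a subset of the full feasible set in~\eqref{eq:full_kl}: for $U\in\St(n,r)$, $S\in\Spp^r$ and $\mu_\perp>0$ we have $USU^\top+\mu_\perp P_\perp\in\Spp^n$. A minimum over a smaller set can only be larger.

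For the upper bound, I will plug a specific feasible point into~\eqref{eq:restr}, built from the full optimum. Diagonalize $R^* = V\,\mathrm{diag}(\mu_1^*,\dots,\mu_n^*)\,V^\top$, and let $U$ be the first $r$ columns of $V$ and $S=\mathrm{diag}(\mu_1^*,\dots,\mu_r^*)$. Keep $L=L^*$. For the tail, take
\[
\mu_\perp=\AM(\mu_{r+1}^*,\dots,\mu_n^*).
\]
Then $\mathcal{J}^{\mathrm{restr}}-\mathcal{J}^{\mathrm{full}}\le \KL(\Sigma\|L^*\otimes\hat R)-\KL(\Sigma\|L^*\otimes R^*)$, and it remains to compute this difference.

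Write the Gaussian KL as
\[
\tfrac12\big[\Tr(C^{-1}\Sigma)+\log\det C-\log\det\Sigma-mn\big].
\]
Using $\log\det(L\otimes R)=n\log\det L+m\log\det R$, the log-det part of the difference is
\[
\tfrac m2\big[(n-r)\log\AM-\textstyle\sum_{i>r}\log\mu_i^*\big]=\tfrac{m(n-r)}2\log(\AM/\GM).
\]
For the trace part, note that $\Tr((L^*\otimes R)^{-1}\Sigma)=\Tr(R^{-1}B)$, where $B\coloneqq \E[G^\top (L^*)^{-1}G]$. This follows from the row-major identity $\vol(G)^\top (L^{-1}\otimes R^{-1})\vol(G)=\Tr(L^{-1}GR^{-1}G^\top)$. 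By the stationarity condition~\eqref{eq:full_stat}, $B=mR^*$. Hence the trace term equals $m\Tr(\hat R^{-1}R^*)$ versus $m\Tr(I_n)=mn$ at the optimum.

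Since $\hat R$ and $R^*$ share the eigenbasis $V$,
\[
\Tr(\hat R^{-1}R^*)=r+\sum_{i>r}\mu_i^*/\AM=r+(n-r)=n.
\]
So the trace parts cancel exactly. This is precisely why $\mu_\perp$ is chosen as the arithmetic mean, and it leaves only the log-det difference, giving~\eqref{eq:gap_bound}. When the tail eigenvalues are all equal, $\AM=\GM$, so the upper bound is zero and the gap vanishes.

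The main obstacle is the trace-cancellation step. It depends on invoking~\eqref{eq:full_stat} correctly, with the right normalization $B=mR^*$ under row-major vectorization, and on $(L^*,R^*)$ being a genuine stationary point. Existence of the full optimum is given as a hypothesis of the claim, so I take stationarity as available. I would double-check the Kronecker-vectorization convention so that $R$ indeed acts on the column index of $G$.
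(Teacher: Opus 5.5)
Your proposal is correct and follows the paper's proof essentially step for step. It uses the same feasible-set inclusion for the lower bound, the same candidate $(L^*, U_0, S_0, \bar\mu)$ with $\bar\mu$ the tail arithmetic mean, and the same cancellation of the trace term via $\E[G^\top (L^*)^{-1}G] = mR^*$, which leaves only the log-det difference $\tfrac{m(n-r)}{2}\log(\AM/\GM)$; your row-major identity $\vol(G)^\top(L^{-1}\otimes R^{-1})\vol(G) = \Tr(L^{-1}GR^{-1}G^\top)$ is also right and is exactly the expansion the paper relies on.
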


The $\AM/\GM$ ratio is commonly used in majorization theory to represent the spread of a positive sequence~\citep{marshall1979inequalities}.
% Appendix~\ref{app:proof_gap} gives the proof.

\paragraph{Decomposition of the preconditioned gradient.}
Since $USU^\top$ and $\mu_\perp P_\perp$ have orthogonal ranges, $\hat{R}^{-1/2}$ (using $p=1/2$ for the preconditioner exponent) decomposes as $U S^{-1/2}U^\top + \mu_\perp^{-1/2}P_\perp$, and the preconditioned gradient splits into
\begin{equation}\label{eq:restr_update}
\begin{aligned}
L^{-1/2}\, G\, \hat{R}^{-1/2}
&=
\underbrace{
    \vphantom{\mu_\perp^{-1/2}}
    L^{-1/2}\, G U\, S^{-1/2}\, U^\top
}_{
    \substack{
    \text{full KL-Shampoo in the subspace}
    }}
\quad +
\underbrace{
    \mu_\perp^{-1/2}\, L^{-1/2}\, G P_\perp
}_{
    \substack{
    \text{one-sided KL-Shampoo in the complement}\\
    \text{with the other side approximated by a scalar tail}
    }}
\end{aligned}
\end{equation}

We refer to this preconditioned gradient (and the resulting algorithm) as \textbf{Smok-Hop}.\footnote{Smok-Hop is the Pro-KLShampoo algorithm without orthogonalization, introduced in \S\ref{sec:polar} (denoted $\polar(\cdot)$). The name spells out the construction: the letters of ``Pro-KLShampoo'' minus those of ``polar.''}
On the projected subspace, preconditioning is two-sided (as in KL-Shampoo), while the complement is left-preconditioned and uniformly scaled by a scalar on the right side. The scalar tail discards information when the bottom eigenvalues are not exactly uniform; relatedly, two-sided preconditioning is known to outperform one-sided variants more generally~\citep{eschenhagen2026clarifying}. We improve the complement of the preconditioned gradient by introducing orthogonalization (see details in \S\ref{sec:polar}). The overall Pro-KLShampoo update reads:
\begin{equation}\label{eq:update}
\begin{aligned}
\Delta W
&= -\alpha_{\mathrm{kl}} L^{-1/2}\, G U\, S^{-1/2}\, U^\top - c_a \polar \big(\mu_\perp^{-1/2}\, L^{-1/2}\, G P_\perp \big),
\end{aligned}
\end{equation}
where $c_a = \sqrt{\max(1,m/n)}$ is an aspect-ratio scaling for the orthogonalized complement, and $\alpha_{\mathrm{kl}}$ is added since orthogonalization changes the scale of the complement. We characterize the magnitude of $\alpha_{\mathrm{kl}}$ in section~\ref{sec:polar}.

% ---------------------------------------------------------------------------
\subsection{Stationarity of the restricted problem}
\label{sec:stationarity}

We fix $U$ for now (the choice of $U$ is addressed in \S\ref{sec:opt_sub}) and derive the optimal $L, S, \mu_\perp$. Note that once $U$ is given, the gradient decomposes into the projected subspace and the complement component, i.e., $G = \Gtil U^\top + G_\perp$ where $\Gtil \coloneqq GU \in \R^{m \times r}$ and $G_\perp \coloneqq GP_\perp \in \R^{m \times n}$.

\begin{claim}[Restricted stationarity]\label{claim:stat}
The optimal solution of the restricted KL objective~\eqref{eq:restr} with respect to $(L, S, \mu_\perp)$ for fixed $U$ should satisfy:
\begin{align}
&S^*
= \tfrac{1}{m}\,\E\bigl[\Gtil^\top (\Lr)^{-1} \Gtil\bigr],
\quad \text{\aside{(Covariance estimation of the subspace preconditioner)}} \label{eq:restr_S} \\
&\mu_\perp^*
= \tfrac{1}{m(n-r)}\,\Tr\!\bigl(\E\bigl[G_\perp^\top (\Lr)^{-1} G_\perp\bigr]\bigr),
\quad \text{\aside{(Scalar estimation of the complement's flat tail)}} \label{eq:restr_mu} \\
&\Lr
= \tfrac{1}{n}\,\E\bigl[G\,(\hat{R}^*)^{-1}G^\top\bigr],
\quad \text{\aside{(Covariance estimation of the unrestricted-side preconditioner)}} \label{eq:restr_L}
\end{align}

where $\hat{R}^* = US^*U^\top + \mu_\perp^* P_\perp$.
The proof is in Appendix~\ref{app:proof_stat}.
\end{claim}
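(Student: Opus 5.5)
We write the objective explicitly and differentiate it in each block variable. Up to constants, the KL divergence between zero-mean Gaussians is
\[
\mathcal{J}(L,\hat R)=\tfrac12\Bigl[\Tr\bigl((L\otimes \hat R)^{-1}\Sigma\bigr)+\log\det(L\otimes\hat R)\Bigr].
\]
Under the row-major convention we have $\vol(G)^\top (L^{-1}\otimes \hat R^{-1})\vol(G)=\Tr(L^{-1}G\hat R^{-1}G^\top)$. Using $\log\det(L\otimes\hat R)=n\log\det L+m\log\det\hat R$, the objective becomes
\[
\mathcal{J}=\tfrac12\Bigl[\E\,\Tr\bigl(L^{-1}G\hat R^{-1}G^\top\bigr)+n\log\det L+m\log\det \hat R\Bigr].
\]
This is the same reduction that underlies \eqref{eq:full_stat}.

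For $L$, we hold $\hat R$ fixed and set the gradient with respect to $L$ to zero. This gives $-L^{-1}\E[G\hat R^{-1}G^\top]L^{-1}+nL^{-1}=0$, which is exactly \eqref{eq:restr_L}. This step is identical to the full KL-Shampoo case, since $L$ is unrestricted.

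For $(S,\mu_\perp)$, we exploit the fact that $USU^\top$ and $\mu_\perp P_\perp$ have orthogonal ranges. Completing $U$ to an orthonormal basis $[U,U_\perp]$ gives
\[
\hat R^{-1}=US^{-1}U^\top+\mu_\perp^{-1}P_\perp,\qquad \log\det\hat R=\log\det S+(n-r)\log\mu_\perp .
\]
Substituting, and using $GU=\Gtil$, $GP_\perp=G_\perp$ and $P_\perp^2=P_\perp$, the objective separates:
\[
\tfrac12\Bigl[\Tr\bigl(S^{-1}\E[\Gtil^\top L^{-1}\Gtil]\bigr)+m\log\det S\Bigr]
+\tfrac12\Bigl[\mu_\perp^{-1}\Tr\bigl(\E[G_\perp^\top L^{-1}G_\perp]\bigr)+m(n-r)\log\mu_\perp\Bigr]+(\text{terms in }L).
\]
Setting the $S$-gradient to zero gives $S^{-1}\E[\Gtil^\top L^{-1}\Gtil]S^{-1}=mS^{-1}$, which is \eqref{eq:restr_S}. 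Setting the scalar derivative to zero gives $-\mu_\perp^{-2}\Tr(\cdot)+m(n-r)\mu_\perp^{-1}=0$, which is \eqref{eq:restr_mu}. Evaluating all three conditions at the joint optimum, with $L=\Lr$, yields the stated coupled system with $\hat R^*=US^*U^\top+\mu_\perp^*P_\perp$.

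The remaining work is justification rather than computation. Each subproblem is convex in the inverse variables ($L^{-1}$, $S^{-1}$, $\mu_\perp^{-1}$): the objective is linear in the inverse plus $-\log\det$ of the inverse. Hence each block's stationary point is its unique minimizer, and a joint minimizer over the open SPD cones must satisfy all three first-order conditions simultaneously. The step I expect to need the most care is the cross-term bookkeeping in the trace splitting. The off-diagonal contributions vanish because $U^\top P_\perp=0$, and one must check that $\E[G_\perp^\top L^{-1}G_\perp]$ enters only through its trace, because the complement is parametrized by a single scalar. Existence of a minimizer over $S\succ0$ and $\mu_\perp>0$ (coercivity) can be assumed whenever $\Sigma\succ0$, as the claim concerns necessary conditions.
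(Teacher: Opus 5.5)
Your proposal is correct and follows essentially the same route as the paper's proof. Both expand the KL objective via the Kronecker identities, split $\hat R^{-1}$ and $\log\det\hat R$ blockwise along $\mathrm{range}(U)$ and its complement, and set the derivatives in $S$, $\mu_\perp$, and $L$ to zero; the paper differentiates in $S^{-1}$ and $L^{-1}$ instead, which is equivalent, and it does not include your convexity-in-the-inverse remark, which is not needed for necessary conditions at an interior minimizer.
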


The optimal solutions are coupled (as in KL-Shampoo): $\Lr$ and $\hat R^*$ are gradient second moments mutually whitened by each other's inverse, with $\hat R^*$ decomposing into $S^*$ on the projected subspace and the scalar $\mu_\perp^*$ averaging the whitened second moment on the complement. Explicit characterization is infeasible---$\Lr$ is needed to solve for $S^*$ but itself unknown---so we use the EMA scheme of~\citet{lin2025understanding}, interpreted as a stochastic proximal gradient step for the KL objective.

% ---------------------------------------------------------------------------
\subsection{Optimal subspace}
\label{sec:opt_sub}

Claim~\ref{claim:stat} gives the optimal $L, S, \mu_\perp$ for any fixed $U$. Plugging this back, we characterize the minimizer of the resulting objective in $U$, which depends on the matrix $\Phi_L \coloneqq \E[G^\top L^{-1}G] \in \Spp^n$; recall that under full KL-Shampoo stationarity~\eqref{eq:full_stat}, $\Phi_{L^*} = m R^*$.

\begin{claim}[Optimal subspace]\label{claim:opt_sub}
Fix $L \in \Spp^m$ and let $\Phi_L$ have eigenvalues $\phi_1 \geq \cdots \geq \phi_n > 0$. Every global minimizer $U^*$ of~\eqref{eq:restr} over $U \in \St(n,r)$ is an eigenspace of $\Phi_L$. The optimal index set $I^*$ is the size-$r$ subset whose complement has the smallest AM/GM: $I^* \in \argmin_{|I|=r}\; \frac{\AM(\phi_j : j \notin I)}{\GM(\phi_j : j \notin I)}$. When the bottom-$(n{-}r)$ eigenvalues form the flattest subset (spike-and-flat shape), $I^* = \{1,\ldots,r\}$.
\end{claim}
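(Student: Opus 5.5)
For fixed $L\in\Spp^m$, I would first write the KL objective explicitly as a function of $(U,S,\mu_\perp)$, then eliminate $S$ and $\mu_\perp$ in closed form, and finally minimize the resulting profile over $U$. For a zero-mean Gaussian,
\[
2\,\KL = \Tr\bigl((L\otimes\hat R)^{-1}\Sigma\bigr) + \log\det(L\otimes\hat R) - \log\det\Sigma - mn .
\]
With row-major vectorization, $\Tr((L\otimes\hat R)^{-1}\Sigma)=\E\,\Tr(L^{-1}G\hat R^{-1}G^\top)=\Tr(\hat R^{-1}\Phi_L)$. Also $\log\det(L\otimes\hat R)=n\log\det L+m\log\det\hat R$. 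Since $\hat R^{-1}=US^{-1}U^\top+\mu_\perp^{-1}P_\perp$, the $U$-dependent part of the objective is
\[
\Tr\bigl(S^{-1}U^\top\Phi_L U\bigr)+m\log\det S+\mu_\perp^{-1}\Tr(P_\perp\Phi_L)+m(n-r)\log\mu_\perp .
\]

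Next I would minimize in $S$ and $\mu_\perp$, exactly as in Claim~\ref{claim:stat}. This gives $S=\tfrac1m U^\top\Phi_L U$ and $\mu_\perp=\tfrac{1}{m(n-r)}\Tr(P_\perp\Phi_L)$. Substituting back, up to additive constants and the factor $m$, the profile in $U$ becomes
\[
F(U)=\log\det(U^\top\Phi_LU)+(n-r)\log\tfrac{\Tr(P_\perp\Phi_L)}{n-r}.
\]
I would then use $\log\det\Phi_L=\log\det(U^\top\Phi_LU)+\log\det(V^\top\Phi_L^{-1}V)^{-1}$, where $V$ is an orthonormal complement of $U$; this is the Schur complement identity. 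It lets me rewrite
\[
F(U)=\log\det\Phi_L+(n-r)\log\frac{\AM(\mathrm{eig}(V^\top\Phi_LV))}{\GM\bigl(\mathrm{eig}((V^\top\Phi_L^{-1}V)^{-1})\bigr)},
\]
and the constant $\log\det\Phi_L$ can be dropped.

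The main obstacle is showing that this is minimized only at eigenspaces. The numerator depends only on $A=V^\top\Phi_LV$. The denominator involves the Schur complement $B=(V^\top\Phi_L^{-1}V)^{-1}$, which satisfies $B\preceq A$, with equality if and only if $V$ spans an invariant subspace of $\Phi_L$. Hence $\det B\le\det A$, so for fixed $V$ we get $F\ge(n-r)\log\bigl(\AM(\mathrm{eig}A)/\GM(\mathrm{eig}A)\bigr)$. Equality holds exactly when $\mathrm{span}(V)$, equivalently $\mathrm{span}(U)$, is invariant, which means it is spanned by eigenvectors of $\Phi_L$.

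It remains to show that $\min_V \AM/\GM(\mathrm{eig}\,V^\top\Phi_LV)$ is attained at an eigen-subspace. By Cauchy interlacing, the eigenvalues $a_1\ge\dots\ge a_{n-r}$ of $A$ satisfy $\phi_{i+r}\le a_i\le\phi_i$. I would try to show that the function $\log\sum a_i-\tfrac1{n-r}\sum\log a_i$ is minimized over this interlacing region at a vertex, that is, at a choice of $n-r$ of the $\phi_j$. The difficulty is that this function is not concave, so a vertex argument does not apply immediately. Instead I would argue along each coordinate: $\partial/\partial a_i$ has the sign of $a_i-\AM$, so the function is quasi-convex in each coordinate with its minimum at the mean. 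I would combine this with a perturbation argument showing that any non-eigenspace critical point can be improved. The cleaner alternative is the inequality above: for every non-invariant $V$ the value of $F$ strictly exceeds the value at the invariant subspace obtained by replacing $A$ with its own eigenvalue structure. That already forces every global minimizer to be invariant, and at an invariant subspace $F$ is exactly the $\AM/\GM$ of a subset of the $\phi_j$.

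Finally, among eigenspaces the objective equals $(n-r)\log\bigl(\AM/\GM\bigr)$ of $\{\phi_j:j\notin I\}$, which gives the stated $\argmin$ characterization of $I^*$. When the bottom $n-r$ eigenvalues are the flattest subset, this yields $I^*=\{1,\dots,r\}$.
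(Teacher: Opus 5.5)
Your steps up to the Schur-complement bound are correct: the profile $F(U)$, the identity $\log\det\Phi_L=\log\det(U^\top\Phi_L U)+\log\det B$ with $B=(V^\top\Phi_L^{-1}V)^{-1}$, and the bound $F(U)\ge\log\det\Phi_L+(n-r)\log\bigl(\AM(\mathrm{eig}\,A)/\GM(\mathrm{eig}\,A)\bigr)$, with equality iff $\mathrm{range}(U)$ is $\Phi_L$-invariant.

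The gap is the last step. The right-hand side of your bound uses the eigenvalues of the compression $A=V^\top\Phi_L V$. For non-invariant $V$ these are generally not a sub-multiset of $\{\phi_j\}$. So no ``invariant subspace obtained by replacing $A$ with its own eigenvalue structure'' has $F$-value equal to the bound, and your ``cleaner alternative'' relies on an object that does not exist.

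Worse, the intermediate statement you set out to prove is false: it is not true that $\min_V \AM/\GM(\mathrm{eig}\,V^\top\Phi_L V)$ is attained at an eigen-subspace. Take the paper's example $\Phi_L=\Diag(10,9,1)$, $r=1$, and $V=[e_2,v]$ with $v=(2\sqrt2/3,\,0,\,1/3)^\top$. Then $v^\top\Phi_L v=9$ and $e_2^\top\Phi_L v=0$, so $A=9I_2$ and $\AM/\GM(\mathrm{eig}\,A)=1$. This is strictly below the best eigen-subset value $\AM/\GM(10,9)\approx1.0014$. Your Cauchy-interlacing idea hits the same obstruction: $(9,9)$ lies in the interlacing region and attains the absolute minimum $1$.

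So your relaxed bound is minimized at a non-invariant subspace and says nothing about where $F$ itself is minimized. What makes the eigenspace win there is exactly the term $\log\det A-\log\det B$ you discarded. Here $B=\Diag(9,5)$, so $F-\log\det\Phi_L=\log 1.8$ at this $V$, versus $\log(90.25/90)\approx0.0028$ at the optimal eigenspace.

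What you need is an argument about $F$ itself, not a relaxation of it. The paper's proof is local.
\begin{itemize}
\item At a global minimizer $U^*$, write $A_U\coloneqq U^{*\top}\Phi_L U^*$ and $\beta\coloneqq(n-r)/\Tr(P_\perp\Phi_L)$. The Stiefel KKT condition reads $2\Phi_L U^*(A_U^{-1}-\beta I_r)=U^*\Lambda$.
\item Left-multiplying by $U^{*\top}$ gives $\Lambda=2(I_r-\beta A_U)=2A_U(A_U^{-1}-\beta I_r)$. Whenever $A_U^{-1}-\beta I_r$ is invertible, this yields $\Phi_L U^*=U^*A_U$, i.e.\ invariance.
\item In the remaining singular, non-invariant case, pick a unit $w$ with $A_U w=w/\beta$ (so $\Lambda w=0$), and a unit $u_\perp\perp\mathrm{range}(U^*)$ with $U^{*\top}\Phi_L u_\perp\neq0$.
\item The tangent direction $Z=u_\perp w^\top$ has vanishing multiplier correction $w^\top\Lambda w$. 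A direct computation gives $\tfrac{d^2}{dt^2}f(U^*+tZ)|_{t=0}<0$, contradicting second-order optimality.
\end{itemize}
Your remark about ``a perturbation argument showing that any non-eigenspace critical point can be improved'' points in this direction, but that is precisely the part that must be carried out. Once invariance is established, your evaluation of $F$ on eigen-subspaces as $(n-r)\log(\AM/\GM)$ of the complementary eigenvalues is correct and matches the paper.
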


% \begin{enumerate}[label=(\roman*),leftmargin=*,itemsep=2pt]
%     \item \unpolished{Substituting $S^*$ and $\mu_\perp^*$ from Claim~\ref{claim:stat} into $\mathcal{J}$, the reduced objective (up to constants independent of $U$) is}
%     \begin{equation}\label{eq:reduced_obj}
%         f(U) = \frac{m}{2}\!\left[\log\det(U^\top\Phi_L U) + (n{-}r)\log\Tr(P_\perp\Phi_L)\right].
%     \end{equation}

%     \item \unpolished{The global minimizer of $f$ over $\St(n,r)$ is achieved at an eigenspace of $\Phi_L$: there exists $I^* \subset \{1,\ldots,n\}$ with $|I^*| = r$ such that $U^* = [v_{i_1^*}, \ldots, v_{i_r^*}]$ is optimal.}
%     \item \unpolished{Among all eigenspace choices, the optimal one minimizes the complement's AM/GM ratio:}
%     \begin{equation}\label{eq:opt_AMGM}
%         I^* \in \argmin_{|I|=r}\; \frac{\AM(\phi_j : j \notin I)}{\GM(\phi_j : j \notin I)}.
%     \end{equation}

%     \item \unpolished{When the bottom-$(n{-}r)$ eigenvalues have the smallest AM/GM ratio among all $(n{-}r)$-subsets (the typical spike-and-flat shape), $I^* = \{1,\ldots,r\}$---the top-$r$ eigenspace.}
% \end{enumerate}

See the proof in Appendix~\ref{app:proof_opt_sub}. In general, the optimal subspace need not be the top-$r$ eigenspace. For example, $\Phi_L=\Diag(10, 9, 1)$ with $r=1$: the top-1 choice gives complement $\AM/\GM \approx 1.67$, while capturing the \emph{bottom} eigenvalue gives $\AM/\GM \approx 1.00$. Note that the stationary condition of full KL-Shampoo~\eqref{eq:full_stat} gives $R^*{=}\tfrac{1}{m}\Phi_{L^*}$, so $R^*$ and $\Phi_{L^*}$ share the same spike-and-flat structure (Figure~\ref{fig:spike_flat}); since we choose $r$ to capture the dominant eigenvalues of $R^*$, it also captures those of $\Phi_{L^*}$.
We thus conjecture that $\Phi_{\Lr}$ has similar structure with its dominant eigenvalues captured within the top $r$; we verify this empirically (Figure~\ref{fig:phi_L_spike_flat}, Appendix~\ref{app:phi_L}). Under this structure, Claim~\ref{claim:opt_sub} gives the top-$r$ eigenspace as optimal. Since $\Lr$ is unknown, in practice we run subspace iteration on $\Phi_L$, where $L$ is the algorithm's running EMA estimate of $\Lr$.

\vspace{5pt}

\begin{remark}
Tracking the top-$r$ eigenspace is also used in prior subspace-based optimizers~\citep{zhao2024galore,liu2025cosmos}; under the restricted KL objective and the spike-and-flat conjecture on $\Phi_{\Lr}$, this construction emerges as the KL-optimal choice (Claim~\ref{claim:opt_sub}).
\end{remark}

% ---------------------------------------------------------------------------
\subsection{Recover per-direction whitening by orthogonalization}
\label{sec:polar}

The update decomposition~\eqref{eq:restr_update} applies the same scalar $\mu_\perp^{-1/2}$ to all $n{-}r$ directions in the complement subspace, rather than a preconditioner matrix scaling each direction. We call the latter \emph{per-direction whitening}. We argue below that orthogonalization recovers the algebraic form of KL-Shampoo-style per-direction whitening on the complement subspace.

\paragraph{Decomposition of full KL-Shampoo's update.}
Full KL-Shampoo applies the update $\Delta W^{\mathrm{full}} = -L^{*-1/2}\, G\, R^{*-1/2}$.
Let $U_f$ be the top-$r$ eigenspace of $R^*$ and $U_{f,\perp}$ be an orthonormal basis for $U_f$'s orthogonal complement.
Since $R^*$ is block-diagonal in the basis $[U_f, U_{f,\perp}]$, the full-KL update decomposes as
\[
    \Delta W^{\mathrm{full}} = -L^{*-1/2}\, G\, U_f\, (U_f^\top R^* U_f)^{-1/2}\, U_f^\top \;-\; \underbrace{L^{*-1/2}\, G\, U_{f,\perp}\, (U_{f,\perp}^\top R^* U_{f,\perp})^{-1/2}\, U_{f,\perp}^\top}_{\Delta^{\mathrm{full}}_\perp}. 
\]
Substituting the full-KL stationarity condition~\eqref{eq:full_stat} into $\Delta^{\mathrm{full}}_\perp$:
\begin{equation}\label{eq:fullcomp}
    \Delta^{\mathrm{full}}_\perp = \frac{1}{m}\, L^{*-1/2}\, G\, U_{f,\perp}\, \bigl(U_{f,\perp}^\top \highlight{\boldsymbol{\E[G^\top L^{*-1} G]}}\, U_{f,\perp}\bigr)^{-1/2}\, U_{f,\perp}^\top.
\end{equation}

\paragraph{Pro-KLShampoo's orthogonalization on the complement.}
Denote $U_\perp$ as an orthonormal basis for $U$'s orthogonal complement, equivalently $P_\perp = U_\perp U_\perp^\top$. The complement update of~\eqref{eq:update} reads (the aspect-ratio scaling $c_a$ is absorbed into the learning rate):

\begin{tcolorbox}[keyeq={Orthogonalization recovers the algebraic form of complement whitening}]
\begin{align}
    \polar(\mu_\perp^{-1/2} L^{-1/2} G\, P_\perp)
    &= \polar(L^{-1/2} G\, U_\perp)\, U_\perp^\top \notag \\
    &= L^{-1/2} G\, U_\perp\, \bigl(U_\perp^\top \highlight{\boldsymbol{G^\top L^{-1} G}}\, U_\perp\bigr)^{\dagger/2}\, U_\perp^\top, \label{eq:polar_identity}
\end{align}
\end{tcolorbox}

% using $\polar(\alpha M) = \polar(M)$, $\polar(M) = M(M^\top M)^{\dagger/2}$ and $\polar(MW^\top) = \polar(M)\,W^\top$ for column-orthonormal $W$.

This update matches $\Delta^{\mathrm{full}}_\perp$ in algebraic form: both~\eqref{eq:fullcomp} and~\eqref{eq:polar_identity} have the same structure, with full KL-Shampoo's stationary quantities replaced by Pro-KLShampoo's running ones, and the expectation $\E[G^\top L^{*-1} G]$ replaced by the instantaneous $G^\top L^{-1} G$. Pro-KLShampoo's $L, U_\perp$ track its own stationary point---of the restricted KL objective (Claims~\ref{claim:stat} and~\ref{claim:opt_sub})---which coincides with full KL-Shampoo's when $\Lr = L^*$: then $\Phi_{\Lr} = m R^*$ by~\eqref{eq:full_stat}, so $\Phi_{\Lr}$ and $R^*$ have the same top-$r$ eigenspace, and $U_\perp$ tracks the subspace spanned by $U_{f,\perp}$.

Identity~\eqref{eq:polar_identity} is therefore not just a cosmetic match: it is the algebraic mechanism by which orthogonalization recovers full KL-Shampoo's complement form on Pro-KLShampoo's running estimates. Beyond Pro-KLShampoo, identity~\eqref{eq:polar_identity} supports the broader idea of composing Muon-style orthogonalization with Shampoo-style whitening.
\paragraph{Calibrating $\alpha_{\mathrm{kl}}$.}
The mixing weight $\alpha_{\mathrm{kl}}$ is set by matching the operator-norm of the orthogonalized complement to that of the original complement update at the restricted KL stationary state. This yields a per-layer principled range, which we intersect across layers to obtain $\alpha_{\mathrm{kl}} \in [10^{-3},\, 2{\times} 10^{-2}]$ (Appendix~\ref{app:lambda_star}). In experiments we sweep $\alpha_{\mathrm{kl}} \in \{0.005, 0.01, 0.015\}$ within this range.

\label{sec:ideal_alg}

\begin{algorithm}[H]
\setstretch{1.2}
\caption{Pro-KLShampoo (idealized)}
\label{alg:ideal}
\begin{algorithmic}[0]
\vspace{3pt}
\State \textbf{1: Gradient projection}
\Statex \quad $G \coloneqq \mathrm{Mat}(\nabla\ell(\theta)) \in \R^{m\times n}$;\;\; $\Gtil \coloneqq GU \in \R^{m\times r}$;\;\; $G_\perp \coloneqq G - \Gtil U^\top$
\State \textbf{2: Covariance estimation} (each iter)
\Statex \quad $\begin{aligned} \begin{pmatrix} L \\ S \end{pmatrix} &\gets \beta_2\begin{pmatrix} L \\ S \end{pmatrix} + (1{-}\beta_2)\begin{pmatrix} \Delta_L \\ \Delta_S \end{pmatrix} \\ \mu_\perp &\gets \beta_2\,\mu_\perp + (1{-}\beta_2)\,\delta_\perp \end{aligned} \quad \begin{aligned} \Delta_L &= \tfrac{1}{n}\bigl(\Gtil\,Q_S\Diag(\lambda_S^{\odot-1})Q_S^\top\Gtil^\top + \mu_\perp^{-1}\,G_\perp G_\perp^\top\bigr) \\ \Delta_S &= \tfrac{1}{m}\,\Gtil^\top Q_L\Diag(\lambda_L^{\odot-1})Q_L^\top\Gtil \\ \delta_\perp &= \tfrac{1}{m(n-r)}\,\Tr\!\bigl(G_\perp^\top Q_L\Diag(\lambda_L^{\odot-1})Q_L^\top G_\perp\bigr) \end{aligned}$
\State \textbf{3: Subspace tracking} (each iter)\label{line:track}
\Statex \quad $U_+ \gets \qr\!\bigl(\beta_2\,U S + (1{-}\beta_2)\,\tfrac{1}{m}\,G^\top Q_L\Diag(\lambda_L^{\odot-1})Q_L^\top G\,U\bigr)$
\Statex \quad $T_U \coloneqq U^\top U_+$;\;\; $S \gets T_U^\top S\,T_U$;\;\; $Q_S \gets T_U^\top Q_S$;\;\; $U \gets U_+$
\State \begin{minipage}[t]{0.48\linewidth}\textbf{4a: Eigenvalue EMA} (each iter)\\[3pt]
\hspace*{1em}$\lambda_L \gets \beta_2\,\lambda_L + (1{-}\beta_2)\,\diag(Q_L^\top \Delta_L\, Q_L)$\\[3pt]
\hspace*{1em}$\lambda_S \gets \beta_2\,\lambda_S + (1{-}\beta_2)\,\diag(Q_S^\top \Delta_S\, Q_S)$
\end{minipage}\hfill\begin{minipage}[t]{0.48\linewidth}\textbf{4b: Infrequent QR} (every $\tau$ iters)\\[3pt]
\hspace*{1em}$Q_L \gets \qr(L\, Q_L)$\\[3pt]
\hspace*{1em}$Q_S \gets \qr(S\, Q_S)$
\end{minipage}
\vspace{3pt}
\State \textbf{5: Update}
\Statex \quad $W \gets W - \eta\,\bigl[\alpha_{\mathrm{kl}}\, L^{-1/2}\,G\,U\,S^{-1/2}\,U^\top + c_a\,\polar\!\bigl(\mu_\perp^{-1/2}\,L^{-1/2}\,G\,P_\perp\bigr)\bigr]$\label{line:comp}
\end{algorithmic}
\end{algorithm}

Algorithm~\ref{alg:ideal} completes the specification of Pro-KLShampoo. We track the top-$r$ eigenspace of $\Phi_L \coloneqq \E[G^\top L^{-1} G]$ via one step of power iteration (Line~\ref{line:track})---under the spike-and-flat structure of $\Phi_{\Lr}$, the KL-optimal subspace (Claim~\ref{claim:opt_sub}). We estimate $\Phi_L U$ by EMA over a fresh minibatch sample $\tfrac{1}{m}\,G^\top Q_L\Diag(\lambda_L^{\odot-1})Q_L^\top G\,U$ and a running estimate $U S$, exact at the restricted KL stationary point because $L = \Lr$ (Claim~\ref{claim:stat}) and $U$ is an eigenspace of $\Phi_{\Lr}$ (Claim~\ref{claim:opt_sub}) together give $\Phi_{\Lr} U = m\,U S$. Because $S$ and $Q_S$ are expressed in the old $U$ basis, the second line of Line~\ref{line:track} rotates them into the new basis via $T_U \coloneqq U^\top U_+$. The remaining steps---EMA covariance estimation (Step~2), decoupled eigenvalue EMA (Step~4a), and infrequent eigenbasis QR (Step~4b)---follow~\citet{lin2025understanding}. All EMAs in Algorithm~\ref{alg:ideal} share a single coefficient $\beta_2$.

The practical implementation, which adds Nesterov momentum, Newton--Schulz iteration for orthogonalization, and eigenvalue clipping, is given in Appendix~\ref{app:practical}.

% \begin{corollary}[KL coefficient]\label{cor:lambda_star}
% \begin{equation}\label{eq:lambda_star}
%     \alpha_{\mathrm{kl}}^*\;=\;\frac{c_a\sqrt{k}}{\sqrt{m(n-r)}}.
% \end{equation}
% \unpolishedstart
% For square layers ($m=n$), this simplifies to $\alpha_{\mathrm{kl}}^* = 1/\sqrt{n}$.
% \unpolishedend
% \end{corollary}

% \begin{table}[t]
% \centering
% \small
% \begin{tabular}{lcccccc}
% \toprule
% Layer & $m$ & $n$ & $r$ & $c_a$ & $k$ & $\alpha_{\mathrm{kl}}^*$ \\
% \midrule
% Attention QKV / out & 768 & 768 & 128 & $1$   & 640 & $0.036$ \\
% MLP up   ($4n\times n$) & 3072 & 768 & 128 & $2$   & 640 & $0.036$ \\
% MLP down ($n\times 4n$) & 768 & 3072 & 128 & $1$   & 768 & $0.018$ \\
% \bottomrule
% \end{tabular}
% \caption{$\alpha_{\mathrm{kl}}^*$ predicted by Corollary~\ref{cor:lambda_star} for GPT-2 (124M) at $r=128$.
% Empirically we use $\alpha_{\mathrm{kl}} = 0.02$; the prediction matches within $2\times$ across all layer shapes and within $10\%$ for MLP-down.}
% \label{tab:lambda_star}
% \end{table}

% ---------------------------------------------------------------------------
\subsection{Convergence analysis}
The idealized Pro-KLShampoo update~\eqref{eq:update} admits an $O(1/\sqrt T)$ nonconvex convergence guarantee under operator-norm smoothness~\citep{bernstein2024old,large2024scalable} and standard nonconvex stochastic assumptions (Theorem~\ref{thm:polar_conv}). Here we assume a uniform lower bound on preconditioner eigenvalues (Lemma~\ref{lem:clamp_lb}), which is enforced in practice by eigenvalue clipping~\citep{lin2025understanding}. Let $f:\R^{m\times n}\to\R$ denote the loss as a function of the matrix-shaped weights $W$, and let $\mathcal F_t \coloneqq \sigma(G_0, \ldots, G_{t-1})$ denote the natural filtration; the algorithm state $(L_t, S_t, U_t, \mu_{\perp,t})$ at step $t$ is $\mathcal F_t$-measurable, and conditional expectations $\E[\cdot \mid \mathcal F_t]$ are taken over the next gradient sample $G_t$.

\begin{theorem}[Convergence of Pro-KLShampoo]\label{thm:polar_conv}
Assume $f$ is $L_{\mathrm{op}}$-operator-norm smooth, $G_t$ is unbiased with variance $\sigma_F^2$ and $\|G_t\|_{\mathrm{op}}\leq G_{\max}$ a.s., and the preconditioner eigenvalues are uniformly lower bounded (Lemma~\ref{lem:clamp_lb}). Define $\sigma_{kl}^2 \;\coloneqq\; \sup_{t\geq 0}\,\operatorname{ess\,sup}\,\E\!\left[\|L_t^{-1/2}G_t U_t S_t^{-1/2}\|_{\mathrm{op}}^2 \;\big|\; \mathcal F_t\right]$, finite and deterministic under the preceding assumptions. Then Algorithm~\ref{alg:ideal} with step size $\eta = \sqrt{\Delta_0/(T\,L_{\mathrm{op}}\,K)}$ and $K \coloneqq c_a^2 + \alpha_{\mathrm{kl}}^2\,\sigma_{kl}^2$ satisfies
\begin{equation}\label{eq:polar_conv}
% \vspace{-3pt}
\frac{1}{T}\sum_{t=0}^{T-1}\E\!\left[\tfrac{c_a}{C\sqrt{\Theta}}\bigl\|\nabla f(W_t)\,P_{\perp,t}\bigr\|_* + \tfrac{\alpha_{\mathrm{kl}}}{\Theta}\bigl\|\nabla f(W_t)\,U_t\bigr\|_F^2\right]
\;\leq\; 2\sqrt{\frac{\Delta_0\,L_{\mathrm{op}}\,K}{T}} + 2\,c_a\sqrt{k}\,\sigma_F,
% \vspace{-2pt}
\end{equation}
where $\Delta_0 \coloneqq f(W_0) - f^*$, $c_a = \sqrt{\max(1, m/n)}$, $k \coloneqq \min(m, n-r)$, and $\Theta, C > 0$ are the spectral upper bound and clip threshold (lower bound) from Lemma~\ref{lem:clamp_lb}.
\end{theorem}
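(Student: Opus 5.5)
The plan is the standard descent-lemma argument for steepest descent under a non-Euclidean norm, applied separately to the two orthogonal pieces of the update. Write the step as $W_{t+1} = W_t - \eta D_t$ with $D_t = \alpha_{\mathrm{kl}} A_t + c_a O_t$, where $A_t \coloneqq L_t^{-1/2} G_t U_t S_t^{-1/2} U_t^\top$ and $O_t \coloneqq \polar(\mu_{\perp,t}^{-1/2} L_t^{-1/2} G_t P_{\perp,t})$.

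\textbf{Step 1 (one-step bound).} Operator-norm smoothness gives
\[
f(W_{t+1}) \le f(W_t) - \eta\langle \nabla f(W_t), D_t\rangle + \tfrac{L_{\mathrm{op}}\eta^2}{2}\|D_t\|_{\mathrm{op}}^2 .
\]
Since $A_t$ has row space in $\mathrm{span}(U_t)$ and $O_t$ has row space in $\mathrm{range}(P_{\perp,t})$, I will bound $\|D_t\|_{\mathrm{op}}^2 \le 2(\alpha_{\mathrm{kl}}^2\|A_t\|_{\mathrm{op}}^2 + c_a^2\|O_t\|_{\mathrm{op}}^2)$. Using $\|O_t\|_{\mathrm{op}}\le 1$ and the definition of $\sigma_{kl}^2$, the conditional expectation of this term is at most $2K$. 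This factor $2$ is what produces the constant $2$ on the right-hand side.

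\textbf{Step 2 (subspace inner product).} This term is linear in $G_t$, and $(L_t,U_t,S_t)$ are $\mathcal F_t$-measurable, so
\[
\E[\langle \nabla f, A_t\rangle \mid \mathcal F_t] = \Tr\bigl(\nabla f\, U_t S_t^{-1/2} U_t^\top \nabla f^\top L_t^{-1/2}\bigr).
\]
By Lemma~\ref{lem:clamp_lb}, all eigenvalues of $L_t$ and $S_t$ lie in $[C,\Theta]$. The product $L_t^{-1/2}\otimes S_t^{-1/2}$ therefore has eigenvalues at least $\Theta^{-1}$, and the term is at least $\tfrac{1}{\Theta}\|\nabla f(W_t) U_t\|_F^2$. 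This gives the second term on the left-hand side.

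\textbf{Step 3 (complement inner product; the main obstacle).} $O_t$ is a nonlinear function of the noisy $G_t$, so unbiasedness cannot be used directly. Set $M_t \coloneqq L_t^{-1/2}\nabla f(W_t) P_{\perp,t}$ and $\hat M_t \coloneqq L_t^{-1/2} G_t P_{\perp,t}$. The scalar $\mu_{\perp,t}^{-1/2}$ does not affect $\polar$, so I will use the standard Muon/signSGD inequality
\[
\langle M, \polar(\hat M)\rangle \ge \|M\|_* - 2\|M-\hat M\|_* .
\]
The next step is to move $L_t^{-1/2}$ across the inner product. The identity $\langle \nabla f\,P_{\perp,t}, L_t^{-1/2}\cdots\rangle$ is delicate here. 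My first attempt is to write $\langle \nabla f, O_t\rangle = \langle L_t^{1/2}M_t, O_t\rangle$ and compare this with $\langle M_t, O_t\rangle$ via the sandwich $C \preceq L_t \preceq \Theta$. That comparison should yield the factor $\tfrac{1}{C\sqrt\Theta}$ in
\[
\|M_t\|_* \ge \Theta^{-1/2}\|\nabla f P_{\perp,t}\|_* .
\]
If the direct comparison is lossy, the fallback is to absorb the mismatch into the constant $C$, which is exactly how the statement is scaled.

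\textbf{Step 4 (noise term).} Rank is at most $k=\min(m,n-r)$, so
\[
\|M_t-\hat M_t\|_* \le \sqrt{k}\,\|L_t^{-1/2}\|_{\mathrm{op}}\|G_t-\nabla f\|_F .
\]
Jensen then gives a bound of $\sqrt{k}\,\sigma_F$ times clip constants, to be absorbed consistently with the left-hand normalization.

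\textbf{Step 5 (telescoping).} Sum over $t$, take total expectation, telescope $f(W_0)-f(W_T)\le\Delta_0$, and divide by $\eta T$. This gives the left-hand side bounded by $\Delta_0/(\eta T) + L_{\mathrm{op}}\eta K + 2c_a\sqrt{k}\,\sigma_F$. Substituting $\eta=\sqrt{\Delta_0/(TL_{\mathrm{op}}K)}$ makes the first two terms equal to $2\sqrt{\Delta_0 L_{\mathrm{op}}K/T}$. The real difficulty is Step 3: getting the right constants when converting the whitened nuclear norm back to $\|\nabla f P_\perp\|_*$ while the orthogonalization is biased.
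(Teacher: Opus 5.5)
The genuine gap is in Step 3, and that step is where the statement's constants are decided. Your Steps 1, 2 and 5 are sound and match the paper. In Step 2, your bound $\lambda_{\min}(L_t^{-1/2}\otimes S_t^{-1/2})\ge\Theta^{-1}$ is a one-line equivalent of the paper's two-stage $L_t^{-1/4}$ argument. It uses only the upper bound $\Theta$; note that Lemma~\ref{lem:clamp_lb} gives eigenvalues in $[1/C^2,\Theta]$, not $[C,\Theta]$.

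In Step 3 the quantity to control is $\langle\nabla f(W_t),O_t\rangle=\langle L_t^{1/2}M_t,\polar(\hat M_t)\rangle$. This cannot be compared with $\langle M_t,\polar(\hat M_t)\rangle$ by sandwiching $L_t$. The polar factor belongs to $\hat M_t$, not to $M_t$, so the pairing has no positivity and no monotonicity in $L_t^{1/2}$. Take a single complement direction with $x=(1,1)^\top$, $\hat x=(1,-\tfrac12)^\top$ and $L_t^{1/2}=\Diag(0.1,10)$. Then $x^\top\hat x>0$ but $x^\top L_t^{1/2}\hat x<0$, so no inequality $\langle L_t^{1/2}M_t,O_t\rangle\ge c\,\langle M_t,O_t\rangle$ with $c>0$ can hold.

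The fallback of absorbing the mismatch into $C$ is not available either. $C$ is the fixed clip threshold, and the floor $2c_a\sqrt{k}\,\sigma_F$ in the statement carries no clip constant. Your Step 4 bound $\|M_t-\hat M_t\|_*\le C\sqrt{k}\,\|N_t\|_F$, with $N_t\coloneqq G_t-\nabla f(W_t)$, does carry one, and nothing in your route cancels it.

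The missing idea is to keep the weight $L_t^{1/2}$ and add and subtract $L_t^{1/2}\hat M_t$:
$\langle L_t^{1/2}M_t,\polar(\hat M_t)\rangle=\langle L_t^{1/2}\hat M_t,\polar(\hat M_t)\rangle-\langle L_t^{1/2}(\hat M_t-M_t),\polar(\hat M_t)\rangle$.

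\begin{itemize}
\item \textbf{First term.} With the SVD $\hat M_t=\sum_i\sigma_iu_iv_i^\top$, it equals $\sum_i\sigma_i\,u_i^\top L_t^{1/2}u_i\ge\tfrac1C\|\hat M_t\|_*\ge\tfrac1C\|M_t\|_*-\sqrt{k}\,\|N_t\|_F$. The $C$ from $\|L_t^{-1/2}\|_{\mathrm{op}}$ in your noise bound is cancelled by the $1/C$ from $\lambda_{\min}(L_t)^{1/2}$.
\item \textbf{Second term.} By H\"older it is at most $\|L_t^{1/2}(\hat M_t-M_t)\|_*=\|N_tP_{\perp,t}\|_*\le\sqrt{k}\,\|N_t\|_F$, because $L_t^{1/2}$ undoes the whitening.
\end{itemize}

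Together these give $\langle\nabla f(W_t),O_t\rangle\ge\tfrac1C\|M_t\|_*-2\sqrt{k}\,\|N_t\|_F$. Next, $\|M_t\|_*\ge\Theta^{-1/2}\|\nabla f(W_t)P_{\perp,t}\|_*$ follows from $\|X\|_*\le\|L_t^{1/2}\|_{\mathrm{op}}\|L_t^{-1/2}X\|_*$. Multiplying by $c_a$ gives the coefficient $\tfrac{c_a}{C\sqrt{\Theta}}$, and Jensen gives exactly $2c_a\sqrt{k}\,\sigma_F$.

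One small misattribution: the $2$ in front of the square root comes from balancing the two $\eta$-terms, as your Step 5 correctly computes. The factor $2$ in Step 1 only cancels the $\tfrac12$ from smoothness.
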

The right-hand side of~\eqref{eq:polar_conv} gives $O(1/\sqrt T)$ convergence to a noise floor of $2c_a\sqrt{k}\,\sigma_F$, the standard SGD-style rate for nonconvex stochastic optimization. The left-hand side mirrors the two update geometries: the orthogonalized complement contributes a \emph{nuclear-norm} term (dual of spectral-norm), and the subspace update contributes a \emph{squared Frobenius} term. The asymmetry is because orthogonalization normalizes the gradient's magnitude away in the complement update, so the descent is linear in $\nabla f$; the subspace update preserves the gradient's magnitude, so $\nabla f$ contributes quadratically. The measure is also \emph{state-dependent}: it splits the gradient $\nabla f(W_t)$ via the algorithm's running $(U_t, P_{\perp,t})$. Lemma~\ref{lem:equiv} below shows the left-hand side vanishes if and only if $\nabla f(W_t) = 0$.
\begin{lemma}\label{lem:equiv}
For any $W\in\R^{m\times n}$, any $U\in\St(n,r)$, and $P_\perp \coloneqq I_n - UU^\top$,
\[
\tfrac{c_a}{C\sqrt{\Theta}}\|\nabla f(W)\,P_\perp\|_* + \tfrac{\alpha_{\mathrm{kl}}}{\Theta}\|\nabla f(W)\,U\|_F^2 \;=\; 0
\quad\text{if and only if}\quad
\nabla f(W) = 0.
\]
\end{lemma}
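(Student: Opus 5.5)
The plan is to prove both directions directly, using only that the two weights $\tfrac{c_a}{C\sqrt{\Theta}}$ and $\tfrac{\alpha_{\mathrm{kl}}}{\Theta}$ are strictly positive, together with the orthogonal decomposition $I_n = UU^\top + P_\perp$. Positivity follows because $c_a=\sqrt{\max(1,m/n)}\geq 1$, $\alpha_{\mathrm{kl}}>0$ by construction, and $\Theta, C>0$ by Lemma~\ref{lem:clamp_lb}.

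\textbf{($\Leftarrow$).} This direction is immediate. If $\nabla f(W)=0$, then $\nabla f(W)P_\perp=0$ and $\nabla f(W)U=0$. Both norms vanish, so the sum is zero.

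\textbf{($\Rightarrow$).} Suppose the sum is zero. Both summands are nonnegative, since norms are nonnegative and a squared norm is nonnegative, and both are multiplied by strictly positive constants. Hence each summand is zero separately. Because the nuclear and Frobenius norms are genuine norms, this gives $\nabla f(W)P_\perp = 0$ and $\nabla f(W)U = 0$. Multiplying the second identity on the right by $U^\top$ gives $\nabla f(W)UU^\top=0$. Since $U\in\St(n,r)$, we have $UU^\top + P_\perp = I_n$, and therefore
\[
\nabla f(W) = \nabla f(W)\,UU^\top + \nabla f(W)\,P_\perp = 0 .
\]

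There is no real obstacle here. The only point needing care is to state explicitly that the weights are strictly positive, so that vanishing of the sum forces each term to vanish. One should also note that $P_\perp$ and $UU^\top$ are complementary projectors, which holds precisely because $U^\top U=I_r$. The claim holds for any $U$, not just the algorithm's iterates, since no property of $U$ beyond orthonormal columns is used.
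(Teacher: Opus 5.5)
Your proof is correct and follows the paper's argument: both summands are nonnegative, so each vanishes, which gives $\nabla f(W)P_\perp = 0$ and $\nabla f(W)U = 0$, and the decomposition $I_n = UU^\top + P_\perp$ finishes it. You also state explicitly that the weights $\tfrac{c_a}{C\sqrt{\Theta}}$ and $\tfrac{\alpha_{\mathrm{kl}}}{\Theta}$ are strictly positive; the paper leaves this implicit, but the argument does need it.
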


\label{sec:naive_conv}

% ============================================================================
\section{Experiments}
\label{sec:experiments}
% ============================================================================
We evaluate Pro-KLShampoo at four model scales and two architectures: GPT-2~\citep{radford2019language} (124M, 350M) trained on FineWeb-10B~\citep{penedo2024fineweb} and LLaMA~\citep{touvron2023llama} (134M, 450M) trained on C4~\citep{raffel2020exploring}, all on A100 GPUs. Our primary baseline is KL-Shampoo~\citep{lin2025understanding}; we also compare against AdamW and Muon~\citep{modded_nanogpt_2024}, and provide a head-to-head comparison with COSMOS~\citep{liu2025cosmos} in Appendix~\ref{app:cosmos}. For Muon, KL-Shampoo, and Pro-KLShampoo, the embedding and output weights are trained by AdamW, following~\citet{eschenhagen2026clarifying, lin2025understanding}.

We tune AdamW first, then fix its learning rate and weight decay for the embedding and output layers across all optimizers; the remaining hyperparameters of each method are tuned independently. For Pro-KLShampoo, we additionally sweep $\alpha_{\mathrm{kl}}\in\{0.005, 0.01, 0.015\}$ within the principled range derived in \S\ref{sec:polar}, and report results at $r\in\{32, 64, 128\}$. Other hyperparameters (momentum, $\beta_2$, preconditioner refresh frequency, Newton--Schulz step count) follow the source-release defaults of~\citet{lin2025understanding,modded_nanogpt_2024}; full configuration is in Appendix~\ref{app:sweeps}. KL-Shampoo and Pro-KLShampoo maintain the optimizer state in half precision following~\citet{lin2025understanding}; AdamW and Muon use the full-precision optimizer state of their official releases. Reported memory therefore reflects each method's reference configuration; we did not re-validate other methods at half precision.

\begin{table}[h]
\centering
\setlength{\tabcolsep}{5pt}
\caption{Validation loss (\emph{loss}) and peak per-GPU memory in GiB (\emph{mem}) across pretraining configurations. Best loss across hyperparameter sweeps; memory is from that run.}
\label{tab:main}
\begin{tabular}{cccccccccc}
\toprule
\multirow{2}{*}{Optimizer} & \multirow{2}{*}{$r$} & \multicolumn{2}{c}{GPT-2 124M} & \multicolumn{2}{c}{GPT-2 350M} & \multicolumn{2}{c}{LLaMA 134M} & \multicolumn{2}{c}{LLaMA 450M} \\
\cmidrule(lr){3-4}\cmidrule(lr){5-6}\cmidrule(lr){7-8}\cmidrule(lr){9-10}
& & loss & mem & loss & mem & loss & mem & loss & mem \\
\midrule
AdamW         & --  & 3.4944 & 30.62 & 3.2063 & 74.20 & 3.0871 & 32.19 & 2.8248          & 73.91 \\
Muon          & --  & 3.2841 & 30.14 & 3.0407 & 72.23 & 3.0536 & 31.71 & 2.7578          & 71.75 \\
KL-Shampoo    & --  & 3.2796 & 30.63 & 3.0397 & 73.07 & 3.0504 & 32.09 & 2.7650          & 73.42 \\
\midrule
\multirow{3}{*}{Pro-KLShampoo} & 32  & 3.2758 & 30.16 & 3.0364 & 72.23 & 3.0358 & 31.74 & 2.7461          & 71.85 \\
                               & 64  & 3.2754 & 30.16 & 3.0348 & 72.24 & 3.0345 & 31.75 & \textbf{2.7453} & 71.86 \\
                               & 128 & \textbf{3.2745} & 30.17 & \textbf{3.0344} & 72.25 & \textbf{3.0335} & 31.76 & 2.7469          & 71.89 \\
\bottomrule
\end{tabular}
\end{table}

\subsection{Pretraining experiments}
\label{sec:pretraining}

We pretrain two GPT-2 models using the modded-nanogpt framework~\citep{modded_nanogpt_2024}: a 124M model for $5{,}100$ iterations and a 350M model for $13{,}000$ iterations, corresponding to $2.7$B and $6.8$B tokens respectively. We additionally pretrain two LLaMA models on C4 following the GaLore~\citep{zhao2024galore} convention with a linear warmup--linear decay schedule, $10\%$ warmup, and zero weight decay: a 134M model on a 5B subset for $5{,}000$ iterations and a 450M model on a 10B subset for $6{,}250$ iterations. Architecture details, sweep ranges, and selected hyperparameters are in Appendix~\ref{app:sweeps}.

\paragraph{Main results.}
Across all configurations, Pro-KLShampoo attains the lowest validation loss at every tested rank, with peak per-GPU memory below KL-Shampoo's and on par with Muon's (Table~\ref{tab:main}). Pro-KLShampoo also reaches validation loss level of KL-Shampoo in less total wallclock time at every rank (Figure~\ref{fig:gpt-wallclock}); we call this reduction \emph{wallclock saving}. The best wallclock savings are $4.12\%$ (GPT-2 350M), $2.28\%$ (GPT-2 124M), $13.43\%$ (LLaMA 450M), and $10.96\%$ (LLaMA 134M).
\begin{figure}[h]
\centering
\includegraphics[width=\linewidth]{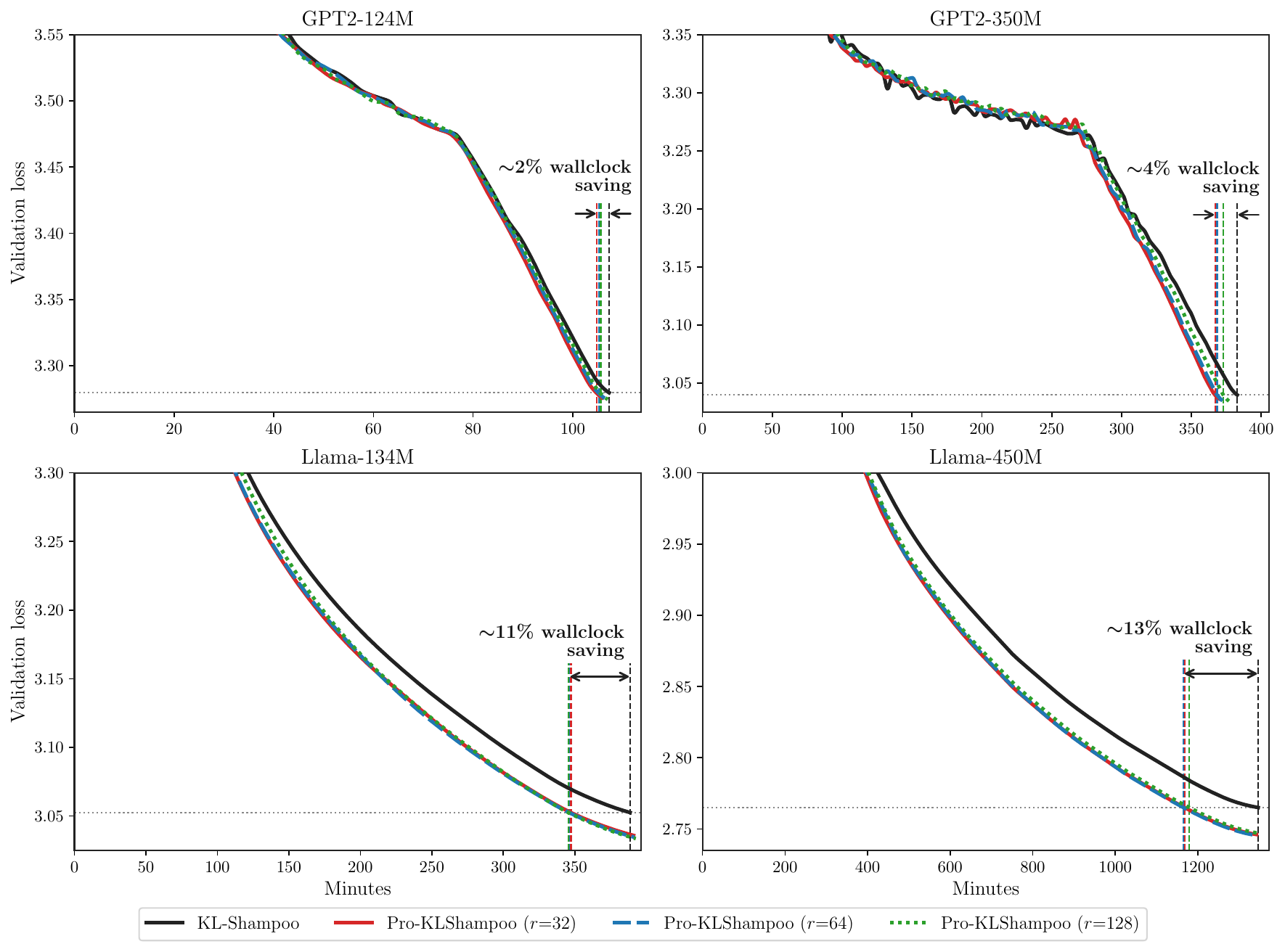}
\caption{Validation loss versus time for KL-Shampoo and Pro-KLShampoo with $r \in \{32, 64, 128\}$ across all four pretraining configurations. Pro-KLShampoo reaches each loss level faster than KL-Shampoo at every rank in every panel. On GPT-2, its curves also terminate earlier at every rank; under the fixed iteration count of each panel, this implies shorter per-step time.}
\label{fig:gpt-wallclock}
\end{figure}

\paragraph{Mechanism of the wallclock saving.}
Wallclock saving has three potential sources: reduced per-step time (each iteration is faster), faster convergence (fewer iterations to reach a given training loss), and a generalization effect (lower validation loss at matched training loss). On GPT-2, per-step time is the primary source: Pro-KLShampoo replaces the QR decomposition on the larger Kronecker factor with one of size $r$, and GPT-2's largest weight dimension ($3072$ at 124M, $4096$ at 350M) is where this reduction outweighs the overhead of subspace tracking (Appendix~\ref{app:cost} compares per-step cost and memory). Convergence does not contribute: Pro-KLShampoo's training loss is comparable to or slightly above KL-Shampoo's(Figure~\ref{fig:gpt-trainloss}, Appendix~\ref{app:trainloss}). A small generalization effect may also contribute: on GPT-2, Pro-KLShampoo's validation loss is lower than KL-Shampoo's despite comparable training loss; we leave its mechanism to future work. On LLaMA, the largest weight dimension is smaller ($2048$ at 134M, $2816$ at 450M), so per-step time is comparable between the two methods. The wallclock saving is instead driven by faster convergence: Pro-KLShampoo's training loss lies visibly below KL-Shampoo's throughout training (Figure~\ref{fig:llama-trainloss}, Appendix~\ref{app:trainloss}).

At smaller ranks such as $r{=}32$, the tracked subspace may not capture all dominant eigenvalues of the preconditioner (Figure~\ref{fig:spike_flat}); orthogonalization on the complement (\S\ref{sec:polar}) compensates by recovering per-direction whitening regardless of the scalar tail's accuracy, and the consistent improvement at $r{=}32$ across all four scales (Table~\ref{tab:main}) confirms this robustness.

\subsection{Ablation studies}
\label{sec:ablation}

We ablate two ingredients of Pro-KLShampoo: orthogonalization on the complement subspace (\S\ref{sec:polar}) and the spike-and-flat decomposition \eqref{eq:update}. All variants run at $r{=}128$. Validation losses are reported in Table~\ref{tab:ablation}; loss curves on GPT-2 124M and LLaMA 134M are shown in Figure~\ref{fig:ablation}.
\begin{figure}[h]
\centering
\includegraphics[width=\linewidth]{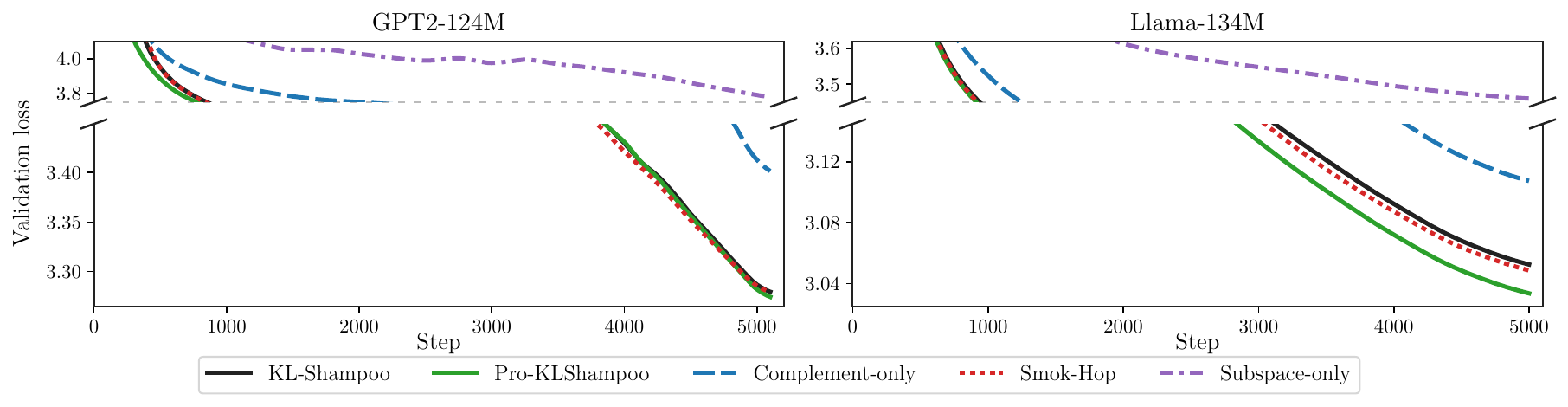}
\caption{Validation loss versus training step on GPT-2 124M (left) and LLaMA 134M (right) for Pro-KLShampoo, KL-Shampoo, and three ablation variants. Subspace-only and complement-only correspond to the spike-and-flat decomposition ablation; Smok-Hop corresponds to the orthogonalization ablation. All Pro-KLShampoo variants run at $r{=}128$.}
\label{fig:ablation}
\end{figure}

\paragraph{Orthogonalization on the complement subspace.}
We compare Pro-KLShampoo against Smok-Hop, the algorithm corresponding to the preconditioned gradient~\eqref{eq:restr_update} without orthogonalization on the complement subspace, isolating orthogonalization's contribution. Table~\ref{tab:ablation} shows Smok-Hop is competitive with KL-Shampoo at the smaller scales but degrades at LLaMA-450M, where the scalar approximation on the complement appears insufficient. Pro-KLShampoo improves over Smok-Hop at every scale; the gap is larger at LLaMA-450M ($0.025$) than at LLaMA-134M ($0.014$), suggesting orthogonalization is more beneficial at the larger LLaMA scale.
\paragraph{Spike-and-flat decomposition.}
We further isolate the two update components: \emph{subspace-only} zeros out the complement update, and \emph{complement-only} zeros out the subspace update. Neither alone matches Pro-KLShampoo or KL-Shampoo (Table~\ref{tab:ablation}, Figure~\ref{fig:ablation}). Complement-only is consistently stronger than subspace-only at both scales. Combining the two recovers the full restricted-KL stationary update, which neither alone can express.

\begin{table}[h]
\centering
\small
\caption{Final validation loss at $r{=}128$. \emph{Subspace-only} and \emph{Complement-only} ablate the spike-and-flat decomposition; \emph{Smok-Hop} ablates orthogonalization. Spike-and-flat ablations on 450M were not run.}
\label{tab:ablation}
\begin{tabular}{lccccc}
\toprule
& Subspace-only & Complement-only & Smok-Hop & Pro-KLShampoo & KL-Shampoo \\
\midrule
GPT-124M    & 3.7803 & 3.4015 & 3.2791 & \textbf{3.2745} & 3.2796 \\
LLaMA-134M  & 3.4598 & 3.1075 & 3.0476 & \textbf{3.0335} & 3.0504 \\
LLaMA-450M  & ---    & ---    & 2.7704 & \textbf{2.7453} & 2.7650 \\
\bottomrule
\end{tabular}
\end{table}

% ============================================================================
\section{Conclusion}
\label{sec:conclusion}

We proposed Pro-KLShampoo, which restricts KL-Shampoo's Kronecker preconditioner to a spike-and-flat structure: full spectral structure on a projected subspace and a single scalar on the complement subspace, with orthogonalization on the complement subspace to recover per-direction whitening. On different pre-training architectures and scales, Pro-KLShampoo improves over KL-Shampoo at every tested rank in validation loss, peak per-GPU memory, and wallclock to matched final loss.

\textbf{Limitations.}
The rank $r$ is set empirically; per-layer or adaptive online calibration of $\alpha_{\mathrm{kl}}$, and a convergence/stability analysis of the EMA-based subspace tracking and preconditioner estimation, are left to future work. Extending the convergence guarantee to the practical variant with Newton--Schulz orthogonalization and Nesterov momentum (Appendix~\ref{app:practical}) remains open.

% \begin{ack}
% \unpolishedstart
% Acknowledgments to fill in
% \unpolishedend
% \end{ack}

% ============================================================================
{
\small
\bibliographystyle{plainnat}
\bibliography{references}
}

% ============================================================================
\appendix
% ============================================================================

\newpage

\section{Related work}
\label{app:related}
 
\paragraph{Kronecker-factored preconditioning.}
Shampoo~\citep{gupta2018shampoo} introduced Kronecker-factored preconditioning as a tractable approximation to full-matrix Adagrad~\citep{duchi2011adaptive}; distributed implementations~\citep{anil2020scalable,shi2023distributed} scaled it to large models, with the Shampoo submission winning the AlgoPerf training-algorithm benchmark. SOAP~\citep{vyas2024soap} improved its practical efficiency by running Adam in Shampoo's eigenbasis. Recent work analyzes Shampoo from multiple angles: convergence theory of one-sided variants~\citep{xie2025structured}, investigation of its heuristics by preconditioner decomposition~\citep{eschenhagen2025purifying}, connection to optimal Kronecker approximation~\citep{morwani2024new}, and broader theory for structured preconditioners and adaptive structured optimization~\citep{an2025asgo}. KL-Shampoo~\citep{lin2025understanding} recasts Shampoo- and SOAP-style estimation through KL minimization, while Clarifying Shampoo~\citep{eschenhagen2026clarifying} sharpens the relationship between Shampoo and Muon-style spectral descent. An earlier line of work, K-FAC~\citep{martens2015optimizing} and its eigenbasis variant E-KFAC~\citep{george2018fast}, predates Shampoo and develops Kronecker-factored curvature approximations for natural-gradient training, though it has seen comparatively slower adoption at LLM scale. Classical Kronecker covariance MLE~\citep{dutilleul1999mle,ros2016existence} provides the estimation-theoretic background.
 
\paragraph{Orthogonalization and norm geometry.}
A complementary line of work interprets matrix-valued updates through norm geometry. Muon~\citep{modded_nanogpt_2024} applies a Newton--Schulz approximation to an orthogonalized momentum update, and subsequent work interprets orthogonalization-style updates through spectral- or modular-norm steepest-descent viewpoints~\citep{bernstein2024old,large2024scalable}; the orthogonalization argument also inherits structure from matrix sign-SGD~\citep{bernstein2018signsgd}, with early roots in spectral descent for deep networks~\citep{carlson2015preconditioned}. We use these geometric perspectives for interpreting the complement step, not for the restricted-KL derivation itself. Vyas et al.~\citep{vyas2025improving} analyze the interaction between SOAP and Muon-style updates through an iterative-whitening lens; Mousse~\citep{zhang2026mousse} reformulates spectral-norm steepest descent in a Kronecker-whitened geometry.
 
\paragraph{Subspace structure in optimization.}
Several recent methods exploit low-rank or subspace structure in gradient updates: GaLore~\citep{zhao2024galore} projects gradients into a low-rank subspace; other approaches include online subspace descent~\citep{liang2024memory}, SubTrack++~\citep{rajabi2025subtrack++}, and GoLore~\citep{he2024subspace}. COSMOS~\citep{liu2025cosmos} decomposes updates into a SOAP-preconditioned subspace and a Muon-treated complement. In our work, the subspace/complement decomposition emerges from restricting the KL objective to a spike-and-flat parametric family (Claims~\ref{claim:stat} and~\ref{claim:opt_sub}), rather than being imposed as an architectural choice.
 
\paragraph{Spectral structure and subspace tracking.}
Our spike-and-flat motivation is related to prior observations that deep-network optimization often concentrates in low-dimensional spectral structure. Gradient Descent Happens in a Tiny Subspace~\citep{gur2018gradient} and related observations on low-rank gradient structure~\citep{jaiswal2024low} support the relevance of low-dimensional gradient subspaces, while Ghorbani et al.~\citep{ghorbani2019investigation} study spectral concentration in Hessian eigenvalues during training. Our use of a spike-and-flat model is also related to classical spiked-covariance theory~\citep{johnstone2001distribution, paul2007asymptotics}. For the subspace-tracking component, we build on the classical online PCA update of Oja~\citep{oja1982simplified}, and finite-sample analyses of streaming PCA~\citep{jain2016streaming,huang2021streaming}, and Davis--Kahan-type eigenspace perturbation tools~\citep{yu2015useful}. Nonconvex stochastic $O(1/\sqrt T)$ convergence is classical~\citep{ghadimi2013stochastic}.

\newpage
\section{Training loss curves}
\label{app:trainloss}

These training loss curves support the wallclock saving mechanism analysis in \S\ref{sec:pretraining}. On GPT-2 (Figure~\ref{fig:gpt-trainloss}), Pro-KLShampoo and KL-Shampoo training losses are close throughout training at every rank, indicating that convergence speed is not the source of wallclock saving. On LLaMA (Figure~\ref{fig:llama-trainloss}), Pro-KLShampoo's training loss lies visibly below KL-Shampoo's throughout training at every rank, indicating that faster convergence is the dominant source.

\begin{figure}[h]
\centering
\includegraphics[width=\linewidth]{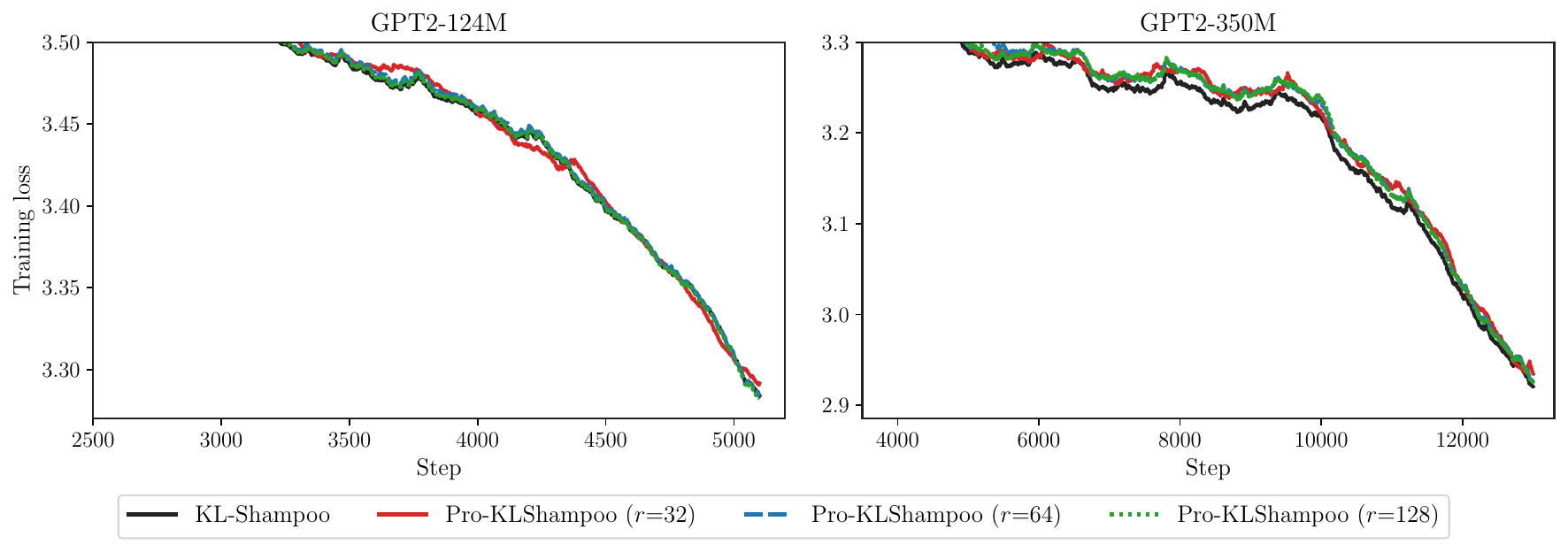}
\caption{Training loss versus training step on GPT-2 (124M, left; 350M, right) for KL-Shampoo and Pro-KLShampoo at $r \in \{32, 64, 128\}$.}
\label{fig:gpt-trainloss}
\end{figure}

\begin{figure}[h]
\centering
\includegraphics[width=\linewidth]{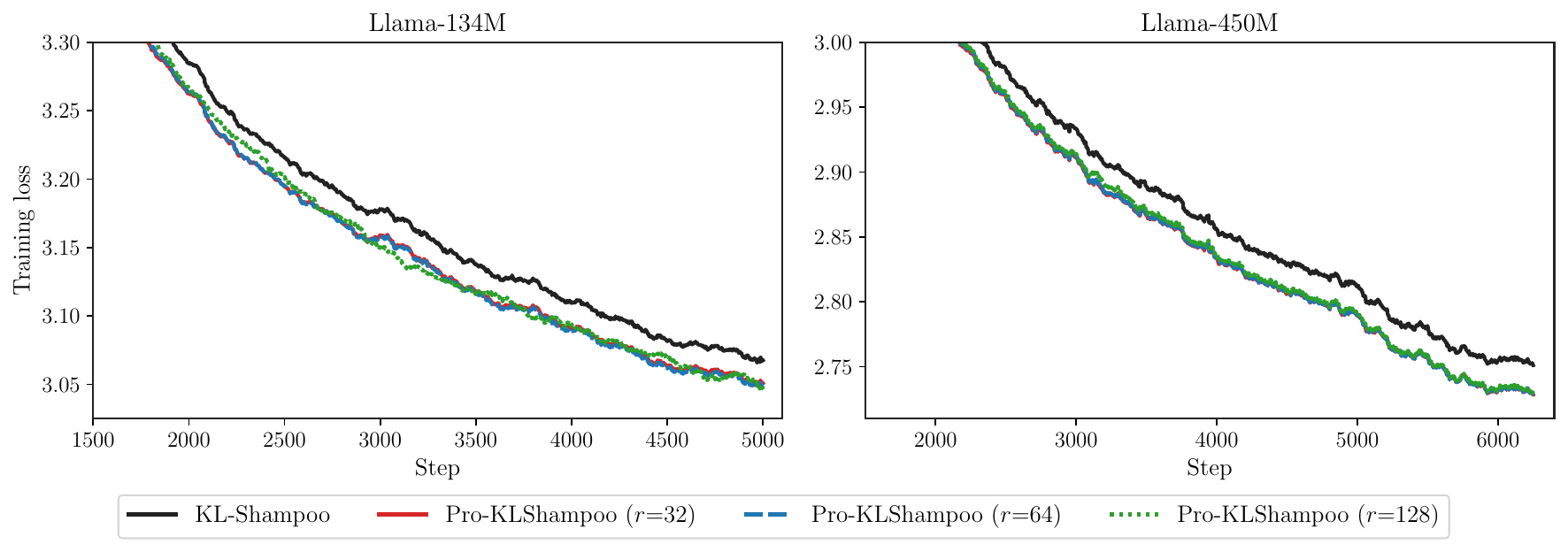}
\caption{Training loss versus training step on LLaMA (134M, left; 450M, right) for KL-Shampoo and Pro-KLShampoo at $r \in \{32, 64, 128\}$.}
\label{fig:llama-trainloss}
\end{figure}

\newpage
\section{Spike-and-flat structure of $\Phi_{\Lr}$}
\label{app:phi_L}

The conjecture in \S\ref{sec:opt_sub} concerns $\Phi_{\Lr}$, the matrix at the restricted-KL stationary point. Since $\Lr$ is unobtainable in practice, we substitute the algorithm's running EMA estimate $L_t$ and approximate $\Phi_{L_t} \coloneqq \E[G^\top L_t^{-1} G]$ by the sample mean over a window of $W = 10$ consecutive minibatches: $\Phi_{L_t} \approx \frac{1}{W}\sum_{i=0}^{W-1} G_{t+i}^\top L_{t+i}^{-1} G_{t+i}$. We repeat this measurement every $250$ training steps. Figure~\ref{fig:phi_L_spike_flat} shows the resulting eigenvalue spectra on GPT-2 (124M).

\begin{figure}[h]
\centering
\includegraphics[width=\linewidth]{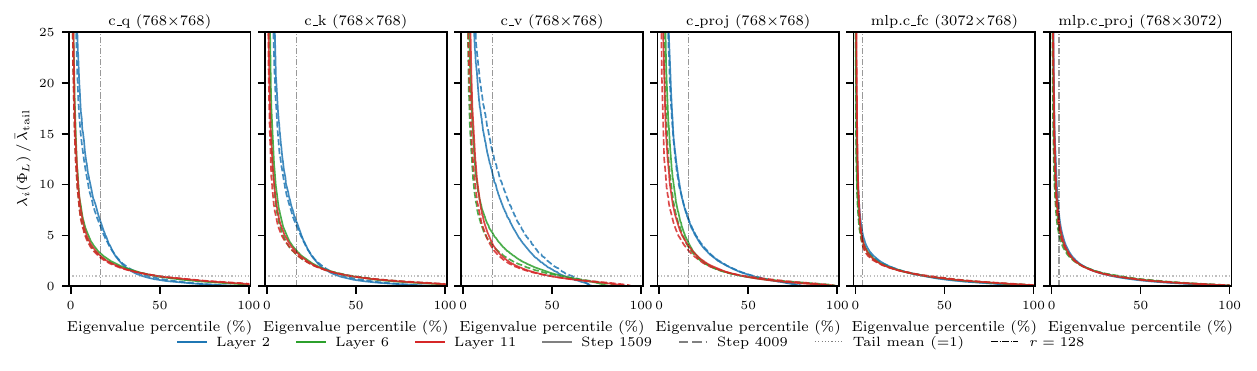}
\caption{Eigenvalue spectra of $\Phi_{L_t}$ on GPT-2 (124M), normalized by the tail mean (vertical dashed line at $r = 128$). The spike-and-flat shape is present across all layer types, depths, and training stages, supporting the conjecture in \S\ref{sec:opt_sub}, though less pronounced than the corresponding spectra of full KL-Shampoo's preconditioner (Figure~\ref{fig:spike_flat}).}
\label{fig:phi_L_spike_flat}
\end{figure}

\section{Spike-and-flat structure on LLaMA}
\label{app:llama_spike}

We confirm the spike-and-flat observation on LLaMA. Figure~\ref{fig:llama_spike_flat} shows the eigenvalue spectra of KL-Shampoo's Kronecker preconditioners during LLaMA training, in the same format as Figure~\ref{fig:spike_flat}.

\begin{figure}[h]
\centering
\includegraphics[width=\linewidth]{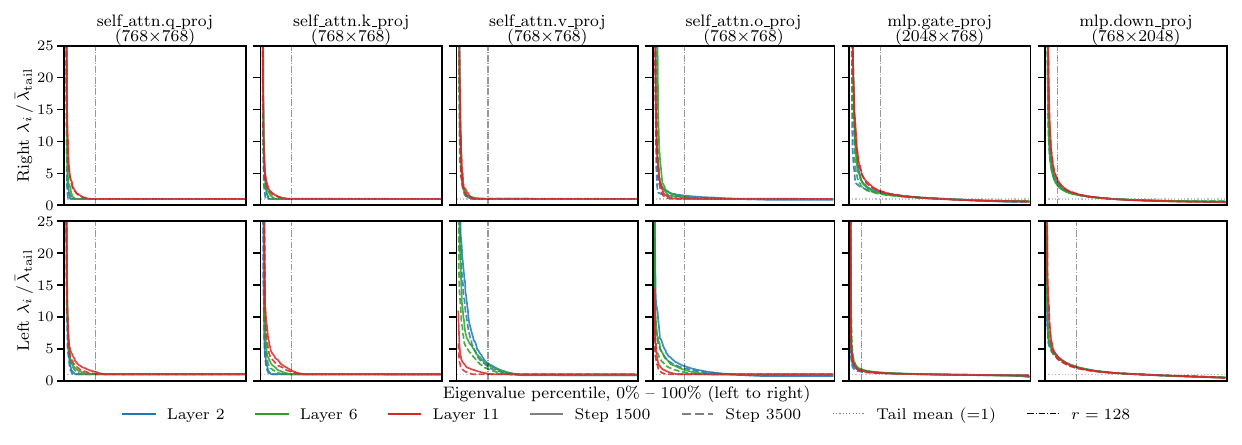}
\caption{% TODO: confirm scale (134M / 450M) and any layer-naming details
Eigenvalue spectra of KL-Shampoo's Kronecker preconditioners on LLaMA, normalized by the tail mean (vertical dashed line at $r = 128$). Both the right-side preconditioner $R$ (top row) and the left-side preconditioner $L$ (bottom row) show a spike-and-flat pattern across all layer types, depths, and training stages, mirroring the structure observed on GPT-2 (Figure~\ref{fig:spike_flat}).}
\label{fig:llama_spike_flat}
\end{figure}

\newpage
\section{Proof of Claim~\ref{claim:gap} (approximation gap)}
\label{app:proof_gap}

\begin{proof}

For the lower bound: for any $U \in \St(n,r)$, $S \in \Spp^r$, and $\mu_\perp > 0$, the matrix $USU^\top + \mu_\perp P_\perp$ belongs to $\Spp^n$.
Therefore the feasible set of~\eqref{eq:restr} is contained in that of~\eqref{eq:full_kl}, which gives $\mathcal{J}^{\mathrm{restr}} \geq \mathcal{J}^{\mathrm{full}}$.

For the upper bound, we exhibit a feasible candidate.
Let $R^* = Q\,\Diag(\mu_1^*,\ldots,\mu_n^*)\,Q^\top$ be the eigendecomposition of the full KL optimum, with eigenvalues ordered $\mu_1^* \geq \cdots \geq \mu_n^* > 0$ and $Q \in \R^{n \times n}$ orthogonal.
Define:
\begin{align}
    U_0 &\coloneqq Q_{:,\,1:r} \in \St(n,r), \label{eq:U0_def} \\
    S_0 &\coloneqq \Diag(\mu_1^*,\ldots,\mu_r^*) \in \Spp^r, \label{eq:S0_def} \\
    \bar\mu &\coloneqq \frac{1}{n-r}\sum_{i=r+1}^n \mu_i^* > 0, \label{eq:mubar_def}
\end{align}
and set
\begin{equation}\label{eq:Rhat_candidate}
    \hat{R} \coloneqq U_0\, S_0\, U_0^\top + \bar\mu\,(I_n - U_0 U_0^\top).
\end{equation}
Since $U_0 \in \St(n,r)$, $S_0 \in \Spp^r$, and $\bar\mu > 0$, the tuple $(L^*, U_0, S_0, \bar\mu)$ is feasible for~\eqref{eq:restr}.
Let $\mathcal{J}(L, R)$ denote the KL objective in~\eqref{eq:full_kl} as a function of the pair $(L, R)$, so that $\mathcal{J}^{\mathrm{full}} = \mathcal{J}(L^*, R^*)$.
Hence
\begin{equation}\label{eq:gap_ub_start}
    \mathcal{J}^{\mathrm{restr}} - \mathcal{J}^{\mathrm{full}} \;\leq\; \mathcal{J}(L^*, \hat{R}) - \mathcal{J}(L^*, R^*).
\end{equation}

\medskip\noindent
We next expand the difference $\mathcal{J}(L^*, \hat R) - \mathcal{J}(L^*, R^*)$.
Using standard Kronecker identities (see, e.g., \citealt{lin2025understanding}, \S2.2), the KL objective expands to
\begin{equation}\label{eq:J_expand}
    \mathcal{J}(L, R) = \tfrac{1}{2}\!\left[n\log\det L + m\log\det R + \Tr\!\bigl(R^{-1}\,\Phi_L\bigr) - \log\det\Sigma - mn\right],
\end{equation}
where $\Phi_L \coloneqq \E[G^\top L^{-1} G] \in \Spp^n$ is the $L$-whitened gradient column second moment.
Since both $\mathcal{J}(L^*, \hat{R})$ and $\mathcal{J}(L^*, R^*)$ share $L = L^*$, the terms $\tfrac{n}{2}\log\det L^*$, $\tfrac{1}{2}\log\det\Sigma$, and $\tfrac{mn}{2}$ in~\eqref{eq:J_expand} cancel, and the difference reduces to
\begin{equation}\label{eq:gap_expand}
    \mathcal{J}(L^*, \hat{R}) - \mathcal{J}(L^*, R^*)
    = \underbrace{\frac{m}{2}\bigl(\log\det\hat{R} - \log\det R^*\bigr)}_{\text{(I)}}
    \;+\; \underbrace{\frac{1}{2}\bigl(\Tr(\hat{R}^{-1}\Phi_{L^*}) - \Tr(R^{*-1}\Phi_{L^*})\bigr)}_{\text{(II)}}.
\end{equation}
We show that (II)~$= 0$ and (I)~$= \frac{m(n-r)}{2}\log\frac{\AM(\mu_{r+1}^*,\ldots,\mu_n^*)}{\GM(\mu_{r+1}^*,\ldots,\mu_n^*)}$, which gives the desired bound.

\medskip\noindent
We compute term (II), the trace difference, first.
By the full stationarity condition~\eqref{eq:full_stat}, $R^* = \frac{1}{m}\,\Phi_{L^*}$, so $\Phi_{L^*} = m\,R^*$.
Substituting:
\begin{equation}\label{eq:trace_sub}
    \Tr(\hat{R}^{-1}\Phi_{L^*}) = m\,\Tr(\hat{R}^{-1}R^*), \qquad
    \Tr(R^{*-1}\Phi_{L^*}) = m\,\Tr(R^{*-1}R^*) = m\,\Tr(I_n) = mn.
\end{equation}
It remains to compute $\Tr(\hat{R}^{-1}R^*)$.
By construction~\eqref{eq:U0_def}--\eqref{eq:Rhat_candidate}, $U_0$ consists of the first $r$ columns of $Q$.
Since $\hat{R}$ is constructed from $R^*$'s eigenbasis, both $Q^\top R^*Q = \Diag(\mu_1^*,\ldots,\mu_n^*)$ and $Q^\top\hat{R}\,Q = \Diag(\mu_1^*,\ldots,\mu_r^*,\bar\mu,\ldots,\bar\mu)$ are diagonal.
Hence $\hat{R}^{-1}R^*$ has eigenvalues $\mu_i^*/\mu_i^* = 1$ for $i \leq r$ and $\mu_i^*/\bar\mu$ for $i > r$, and:
\begin{equation}\label{eq:trace_split}
    \Tr(\hat{R}^{-1}R^*) = \Tr(Q^\top\hat{R}^{-1}R^*\,Q) = r + \frac{1}{\bar\mu}\sum_{i=r+1}^n \mu_i^* = r + \frac{(n-r)\bar\mu}{\bar\mu} = n,
\end{equation}
where we use the definition of $\bar\mu$ in~\eqref{eq:mubar_def}.
Substituting into~\eqref{eq:trace_sub}: $\Tr(\hat{R}^{-1}\Phi_{L^*}) = mn$.
Combined with $\Tr(R^{*-1}\Phi_{L^*}) = mn$, the trace difference (II) equals zero.

\medskip\noindent
We turn to term (I), the log-det difference. By the same diagonalization,
\begin{align}
    \log\det\hat{R} - \log\det R^*
    &= \left[\sum_{i=1}^r \log\mu_i^* + (n{-}r)\log\bar\mu\right] - \sum_{i=1}^n \log\mu_i^* \notag \\
    &= (n{-}r)\log\bar\mu - \sum_{i=r+1}^n \log\mu_i^* \notag \\
    &= (n{-}r)\log\bar\mu - (n{-}r)\log\GM(\mu_{r+1}^*,\ldots,\mu_n^*) \notag \\
    &= (n{-}r)\log\frac{\AM(\mu_{r+1}^*,\ldots,\mu_n^*)}{\GM(\mu_{r+1}^*,\ldots,\mu_n^*)}, \label{eq:logdet_result}
\end{align}
where we use the definition of $\bar\mu$ in~\eqref{eq:mubar_def} and the geometric mean.

\medskip\noindent
Substituting (I) from~\eqref{eq:logdet_result} and (II) $= 0$ into~\eqref{eq:gap_expand}:
\begin{equation}\label{eq:gap_final}
    \mathcal{J}(L^*, \hat{R}) - \mathcal{J}(L^*, R^*) = \frac{m}{2}\cdot(n{-}r)\log\frac{\AM}{\GM} + 0 = \frac{m(n{-}r)}{2}\log\frac{\AM}{\GM}.
\end{equation}
By the AM--GM inequality, $\AM(\mu_{r+1}^*,\ldots,\mu_n^*) \geq \GM(\mu_{r+1}^*,\ldots,\mu_n^*)$ with equality if and only if $\mu_{r+1}^* = \cdots = \mu_n^*$.
Therefore~\eqref{eq:gap_final} is non-negative, and combined with the lower bound shown above this yields~\eqref{eq:gap_bound}.
\end{proof}

\section{Proof of Claim~\ref{claim:stat} (restricted stationarity)}
\label{app:proof_stat}

\begin{proof}
We first compute $\hat R^{-1}$.
Since $\mathrm{range}(U)$ and $\mathrm{range}(P_\perp)$ are orthogonal and span $\R^n$, the inverse of $\hat R = USU^\top + \mu_\perp P_\perp$ can be written blockwise:
\begin{equation}\label{eq:Rhat_inv}
    \hat{R}^{-1} = US^{-1}U^\top + \mu_\perp^{-1}P_\perp.
\end{equation}

\medskip\noindent
We now expand the KL objective.
Substituting~\eqref{eq:Rhat_inv} into the trace term of~\eqref{eq:J_expand} and using trace cyclicity, with $\Phi_L \coloneqq \E[G^\top L^{-1} G] \in \Spp^n$:
\begin{align}
    \Tr\bigl(\hat{R}^{-1}\,\Phi_L\bigr)
    &= \Tr\bigl(US^{-1}U^\top\,\Phi_L\bigr) + \mu_\perp^{-1}\,\Tr\bigl(P_\perp\,\Phi_L\bigr) \notag \\
    &= \Tr\bigl(S^{-1}\,U^\top\Phi_L\,U\bigr) + \mu_\perp^{-1}\,\Tr\bigl(P_\perp\,\Phi_L\bigr). \label{eq:trace_block}
\end{align}

For the log-det term, since $\hat{R}$ has eigenvalues $\{$eigenvalues of $S\} \cup \{\mu_\perp$ with multiplicity $n{-}r\}$:
\begin{equation}\label{eq:logdet_block}
    \log\det\hat{R} = \log\det S + (n{-}r)\log\mu_\perp.
\end{equation}

\medskip\noindent
Isolating from~\eqref{eq:J_expand} the terms involving $(S, \mu_\perp)$:
\begin{equation}\label{eq:g_func}
    g(S, \mu_\perp) = \frac{m}{2}\bigl[\log\det S + (n{-}r)\log\mu_\perp\bigr] + \frac{1}{2}\bigl[\Tr(S^{-1}\,U^\top\Phi_L\,U) + \mu_\perp^{-1}\,\Tr(P_\perp\,\Phi_L)\bigr].
\end{equation}
The remaining terms ($n\log\det L$, $\log\det\Sigma$, $mn$) do not depend on $S$ or $\mu_\perp$.
The objective separates into a matrix part in $S$ and a scalar part in $\mu_\perp$. Setting derivatives to zero:
\begin{align}
    \frac{\partial g}{\partial S^{-1}} &= -\frac{m}{2}\,S + \frac{1}{2}\,U^\top\Phi_L\,U = 0 \notag \\
    \Longrightarrow\quad S^* &= \frac{1}{m}\,U^\top\Phi_L\,U = \frac{1}{m}\,\E[\Gtil^\top L^{-1}\Gtil]. \label{eq:S_opt}
\end{align}
\begin{align}
    \frac{\partial g}{\partial \mu_\perp} &= \frac{m(n-r)}{2\mu_\perp} - \frac{\Tr(P_\perp\,\Phi_L)}{2\mu_\perp^2} = 0 \notag \\
    \Longrightarrow\quad \mu_\perp^* &= \frac{\Tr(P_\perp\,\Phi_L)}{m(n-r)} = \frac{\Tr\bigl(\E[G_\perp^\top L^{-1} G_\perp]\bigr)}{m(n-r)}. \label{eq:mu_opt}
\end{align}

\medskip\noindent
For the optimality of $L$, the terms in $\mathcal{J}$ depending on $L$ are $\frac{n}{2}\log\det L + \frac{1}{2}\Tr(\hat{R}^{-1}\E[G^\top L^{-1}G])$.
Using $\Tr(\hat{R}^{-1}\E[G^\top L^{-1}G]) = \E[\Tr(L^{-1}G\hat{R}^{-1}G^\top)]$ (cyclic property), we differentiate with respect to $L^{-1}$:
\begin{align}
    \frac{\partial\mathcal{J}}{\partial L^{-1}} &= -\frac{n}{2}\,L + \frac{1}{2}\,\E[G\hat{R}^{-1}G^\top] = 0 \notag \\
    \Longrightarrow\quad \Lr &= \frac{1}{n}\,\E[G\,\hat{R}^{-1}\,G^\top]. \label{eq:L_opt}
\end{align}
\end{proof}

\section{Proof of Claim~\ref{claim:opt_sub} (optimal subspace)}
\label{app:proof_opt_sub}

\begin{proof}
The proof has two parts: we first show that every global minimizer of the reduced objective must be an eigenspace of $\Phi_L$ (using first- and second-order optimality), then evaluate the objective on eigenspaces to obtain the AM/GM characterization. Recall $\Phi_L = \E[G^\top L^{-1}G] \in \Spp^n$.

\medskip\noindent
\emph{Reduced objective.}\;
We substitute $S^* = \frac{1}{m}\,U^\top\Phi_L\,U$ and $\mu_\perp^* = \frac{\Tr(P_\perp\Phi_L)}{m(n-r)}$ into $\mathcal{J}$, where these are the Claim~\ref{claim:stat} optima evaluated at the fixed $L$. Using the block decompositions~\eqref{eq:trace_block}--\eqref{eq:logdet_block} from the proof of Claim~\ref{claim:stat}, the trace term becomes
\[
    \Tr(\hat R^{*-1}\Phi_L) = \Tr(S^{*-1}\,U^\top\Phi_L\,U) + \mu_\perp^{*-1}\Tr(P_\perp\Phi_L) = mr + m(n{-}r) = mn,
\]
where the first equality uses $S^{*-1} U^\top\Phi_L U = m\,I_r$ and $\mu_\perp^{*-1}\Tr(P_\perp\Phi_L) = m(n{-}r)$. The log-det term $m\log\det\hat R^*$ expands via~\eqref{eq:logdet_block}:
\begin{align*}
    m\log\det\hat R^* &= m\bigl[\log\det S^* + (n{-}r)\log\mu_\perp^*\bigr].
\end{align*}
Substituting $S^* = \frac{1}{m}\,U^\top\Phi_L\,U$:
\[
    \log\det S^* = \log\det\!\bigl(\tfrac{1}{m}\,U^\top\Phi_L\,U\bigr) = \log\det(U^\top\Phi_L\,U) - r\log m.
\]
Substituting $\mu_\perp^* = \frac{\Tr(P_\perp\Phi_L)}{m(n-r)}$:
\[
    (n{-}r)\log\mu_\perp^* = (n{-}r)\log\Tr(P_\perp\Phi_L) - (n{-}r)\log m - (n{-}r)\log(n{-}r).
\]
Combining:
\[
    m\log\det\hat R^* = m\bigl[\log\det(U^\top\Phi_L U) - r\log m + (n{-}r)\log\Tr(P_\perp\Phi_L) - (n{-}r)\log m - (n{-}r)\log(n{-}r)\bigr].
\]
The $U$-dependent part of $\mathcal{J}$ (up to the positive factor $m/2$) is therefore
\begin{equation}\label{eq:reduced_obj}
    f(U) \;\coloneqq\; \log\det\bigl(U^\top\Phi_L\,U\bigr) \;+\; (n{-}r)\log\Tr(P_\perp\Phi_L).
\end{equation}
The claim reduces to solving
\begin{equation}\label{eq:opt_sub_problem}
    \min_{U \in \St(n,r)} \; f(U).
\end{equation}

\medskip\noindent
\emph{First-order condition.}\;
The Stiefel manifold $\St(n,r) = \{U \in \R^{n \times r} : U^\top U = I_r\}$ is a closed and bounded subset of $\R^{n \times r}$, hence compact. The function $f$ is well-defined and continuous on $\St(n,r)$: the matrix $U^\top\Phi_L\,U$ is positive definite (since $\Phi_L \succ 0$ and $U$ has orthonormal columns, hence full column rank), so $\det(U^\top\Phi_L\,U) > 0$; and $\Tr(P_\perp\Phi_L) > 0$ because $P_\perp$ is a nonzero projection ($r < n$) and $\Phi_L \succ 0$. Therefore the minimum in~\eqref{eq:opt_sub_problem} is attained.

We derive the first-order optimality condition via the Lagrangian for the constraint $U^\top U = I_r$:
\[
    \mathcal{L}(U, \Lambda) \;=\; f(U) \;-\; \tfrac{1}{2}\Tr\bigl(\Lambda(U^\top U - I_r)\bigr), \qquad \Lambda \in \mathrm{Sym}(r).
\]
Setting $\nabla_U \mathcal{L} = 0$ gives $\nabla_U f(U) = U\Lambda$. Computing the Euclidean gradient:
\[
    \nabla_U f(U) = 2\,\Phi_L\,U\bigl(U^\top\Phi_L\,U\bigr)^{-1} - \frac{2(n{-}r)}{\Tr(P_\perp\Phi_L)}\,\Phi_L\,U.
\]
For brevity, write $A \coloneqq U^{*\top}\Phi_L\,U^* \in \Spp^r$ and $\beta \coloneqq (n{-}r)/\Tr(P_\perp\Phi_L) > 0$ at a critical point $U^*$. The KKT system reads
\begin{equation}\label{eq:kkt_opt_sub}
    2\,\Phi_L\,U^*\bigl(A^{-1} - \beta\,I_r\bigr) \;=\; U^*\Lambda.
\end{equation}
Left-multiplying by $U^{*\top}$ gives $\Lambda = 2(I_r - \beta A)$, which is indeed symmetric.

If $A^{-1} - \beta I_r$ is invertible, right-multiplying~\eqref{eq:kkt_opt_sub} by $(A^{-1} - \beta I_r)^{-1}$ and using $\Lambda = 2(I_r - \beta A) = 2A(A^{-1} - \beta I_r)$ gives
\[
    \Phi_L\,U^* \;=\; U^*\,A.
\]
Diagonalize $A = V\Diag(\lambda_1,\ldots,\lambda_r)V^\top$ with orthogonal $V \in \R^{r\times r}$. Then
\[
    \Phi_L\,(U^* V) \;=\; U^*\,A\,V \;=\; (U^* V)\,\Diag(\lambda_1,\ldots,\lambda_r),
\]
so each column of $U^* V$ is an eigenvector of $\Phi_L$. Since $U^* V$ has the same column span as $U^*$, the minimizer $U^*$ spans an eigenspace of $\Phi_L$.

The same reasoning applies if $A^{-1} - \beta I_r$ is singular but $\mathrm{range}(U^*)$ happens to be $\Phi_L$-invariant.

It remains to rule out the possibility that a global minimizer $U^*$ has $A^{-1} - \beta I_r$ singular and $\mathrm{range}(U^*)$ \textbf{not} $\Phi_L$-invariant. We do this by exhibiting a feasible descent direction, contradicting second-order optimality.

\medskip\noindent
\emph{Second-order argument.}\;
Suppose for contradiction that $U^*$ is a global minimizer of~\eqref{eq:opt_sub_problem}, that $A^{-1} - \beta I_r$ is singular, and that $\mathrm{range}(U^*)$ is \emph{not} $\Phi_L$-invariant.

Since $A^{-1} - \beta I_r$ is singular, there exists a unit vector $w \in \R^r$ with $Aw = w/\beta$. From $\Lambda = 2(I_r - \beta A)$ we get
\[
    \Lambda w = 2(I_r - \beta A)w = 2\bigl(w - \beta \cdot w/\beta\bigr) = 0.
\]
Since $\mathrm{range}(U^*)$ is not $\Phi_L$-invariant, there exists a unit vector $u_\perp \perp \mathrm{range}(U^*)$ whose image $\Phi_L u_\perp$ has a nonzero component in $\mathrm{range}(U^*)$:
\[
    \alpha \;\coloneqq\; U^{*\top}\Phi_L\,u_\perp \;\neq\; 0 \quad \in \R^r.
\]
(If no such $u_\perp$ existed, $\Phi_L$ would map $\mathrm{range}(U^*)^\perp$ into itself, and by symmetry of $\Phi_L$ also $\mathrm{range}(U^*)$ into itself---contradicting non-invariance.)

Define $Z \coloneqq u_\perp\,w^\top \in \R^{n \times r}$. We claim $Z$ is a feasible perturbation direction at $U^*$: setting $U(t) \coloneqq U^* + tZ$ and expanding $(U^* + tZ)^\top(U^* + tZ) = I_r + t(U^{*\top}Z + Z^\top U^*) + t^2 Z^\top Z$, the constraint $U(t)^\top U(t) = I_r$ is preserved to first order iff $U^{*\top}Z + Z^\top U^* = 0$. This holds since $U^{*\top}u_\perp = 0$ (because $u_\perp \perp \mathrm{range}(U^*)$).

Recall that $f(U) = \log\det(U^\top\Phi_L\,U) + (n{-}r)\log\Tr(P_\perp\Phi_L)$ is the reduced objective from~\eqref{eq:reduced_obj}. Along the path $U(t)$, define
\[
    A(t) \;\coloneqq\; U(t)^\top\,\Phi_L\,U(t), \qquad s(t) \;\coloneqq\; \Tr(\Phi_L) - \Tr A(t),
\]
so that $f(U(t)) = \log\det A(t) + (n{-}r)\log s(t)$. At $t = 0$ we have $A(0) = A = U^{*\top}\Phi_L\,U^*$ and $s(0) = (n{-}r)/\beta$. We will show that $\frac{d^2}{dt^2}f(U(t))\big|_{t=0} < 0$, meaning $f$ has strictly negative curvature at $U^*$ along $Z$.

We compute $\frac{dA}{dt}\big|_{t=0}$ and $\frac{d^2A}{dt^2}\big|_{t=0}$. Expanding $A(t) = (U^* + tZ)^\top\Phi_L(U^* + tZ) = U^{*\top}\Phi_L U^* + t(Z^\top\Phi_L U^* + U^{*\top}\Phi_L Z) + t^2 Z^\top\Phi_L Z$:
\begin{align}
    \frac{dA}{dt}\bigg|_{t=0} &= Z^\top\Phi_L\,U^* + U^{*\top}\Phi_L\,Z \notag \\
    &= (u_\perp w^\top)^\top\Phi_L\,U^* + U^{*\top}\Phi_L\,(u_\perp w^\top) \notag \\
    &= w\,\underbrace{(u_\perp^\top\Phi_L\,U^*)}_{=\,\alpha^\top} + \underbrace{(U^{*\top}\Phi_L\,u_\perp)}_{=\,\alpha}\,w^\top \notag \\
    &= w\alpha^\top + \alpha w^\top. \label{eq:dAdt}
\end{align}
\begin{align}
    \frac{d^2A}{dt^2}\bigg|_{t=0} &= 2\,Z^\top\Phi_L\,Z = 2\,(u_\perp w^\top)^\top\Phi_L\,(u_\perp w^\top) \notag \\
    &= 2\,w\,(u_\perp^\top\Phi_L\,u_\perp)\,w^\top = 2(u_\perp^\top\Phi_L\,u_\perp)\,ww^\top. \label{eq:d2Adt2}
\end{align}
Note $u_\perp^\top\Phi_L\,u_\perp > 0$ since $\Phi_L \succ 0$.

We now compute $\frac{d^2}{dt^2}\log\det A(t)\big|_{t=0}$ and $\frac{d^2}{dt^2}(n{-}r)\log s(t)\big|_{t=0}$ separately.

\medskip
\noindent\textit{The $\log\det A$ term.}\;
The standard matrix identity gives
\[
    \frac{d^2}{dt^2}\log\det A(t)\bigg|_{t=0} = \Tr\!\left(A^{-1}\,\frac{d^2A}{dt^2}\bigg|_{t=0}\right) - \Tr\!\left(\!\left(A^{-1}\,\frac{dA}{dt}\bigg|_{t=0}\right)^{\!2}\right).
\]
For the first trace, substituting~\eqref{eq:d2Adt2} and using $A^{-1}w = \beta w$ (recall $Aw = w/\beta$):
\begin{align*}
    \Tr\!\left(A^{-1} \cdot 2(u_\perp^\top\Phi_L\,u_\perp)\,ww^\top\right)
    &= 2(u_\perp^\top\Phi_L\,u_\perp)\,\Tr(A^{-1}ww^\top) \\
    &= 2(u_\perp^\top\Phi_L\,u_\perp)\,w^\top A^{-1}w \\
    &= 2(u_\perp^\top\Phi_L\,u_\perp)\,\beta.
\end{align*}
For the second trace, substituting~\eqref{eq:dAdt}:
\[
    A^{-1}\frac{dA}{dt}\bigg|_{t=0} = A^{-1}(w\alpha^\top + \alpha w^\top) = \underbrace{(A^{-1}w)}_{=\,\beta w}\alpha^\top + (A^{-1}\alpha)\,w^\top = \beta\,w\alpha^\top + (A^{-1}\alpha)\,w^\top.
\]
We expand the square of this $r \times r$ matrix and take the trace. There are four terms:
\begin{align*}
    &(\beta\,w\alpha^\top)(\beta\,w\alpha^\top) = \beta^2\,w\,(\alpha^\top w)\,\alpha^\top, &&\Tr = \beta^2\,(\alpha^\top w)^2, \\
    &(\beta\,w\alpha^\top)(A^{-1}\alpha\,w^\top) = \beta\,w\,(\alpha^\top A^{-1}\alpha)\,w^\top, &&\Tr = \beta\,\alpha^\top A^{-1}\alpha, \\
    &(A^{-1}\alpha\,w^\top)(\beta\,w\alpha^\top) = \beta\,(A^{-1}\alpha)(w^\top w)\alpha^\top = \beta\,(A^{-1}\alpha)\alpha^\top, &&\Tr = \beta\,\alpha^\top A^{-1}\alpha, \\
    &(A^{-1}\alpha\,w^\top)(A^{-1}\alpha\,w^\top) = (A^{-1}\alpha)\underbrace{(w^\top A^{-1}\alpha)}_{=\,\beta\,w^\top\alpha}w^\top, &&\Tr = \beta^2\,(\alpha^\top w)^2,
\end{align*}
where we used $w^\top w = 1$ and $w^\top A^{-1}\alpha = (A^{-1}w)^\top\alpha = \beta\,w^\top\alpha$. Summing:
\begin{equation}\label{eq:trB2}
    \Tr\!\left(\!\left(A^{-1}\frac{dA}{dt}\bigg|_{t=0}\right)^{\!2}\right) = 2\beta^2(\alpha^\top w)^2 + 2\beta\,\alpha^\top A^{-1}\alpha.
\end{equation}
The $\log\det$ contribution to $\frac{d^2f}{dt^2}\big|_{t=0}$ is therefore
\begin{equation}\label{eq:logdet_contrib}
    2(u_\perp^\top\Phi_L\,u_\perp)\,\beta \;-\; 2\beta^2(\alpha^\top w)^2 \;-\; 2\beta\,\alpha^\top A^{-1}\alpha.
\end{equation}

\medskip
\noindent\textit{The $(n{-}r)\log s$ term.}\;
Differentiating $s(t) = \Tr(\Phi_L) - \Tr A(t)$:
\begin{align*}
    \frac{ds}{dt}\bigg|_{t=0} &= -\Tr\!\left(\frac{dA}{dt}\bigg|_{t=0}\right) = -\Tr(w\alpha^\top + \alpha w^\top) = -2\,\Tr(w\alpha^\top) = -2\,\alpha^\top w, \\[4pt]
    \frac{d^2s}{dt^2}\bigg|_{t=0} &= -\Tr\!\left(\frac{d^2A}{dt^2}\bigg|_{t=0}\right) = -2(u_\perp^\top\Phi_L\,u_\perp).
\end{align*}
For a scalar function $\frac{d^2}{dt^2}\log s = \frac{1}{s^2}\bigl(s\,\frac{d^2s}{dt^2} - \bigl(\frac{ds}{dt}\bigr)^2\bigr)$. At $t = 0$, $s(0) = (n{-}r)/\beta$, so
\begin{align}
    \frac{d^2}{dt^2}(n{-}r)\log s\bigg|_{t=0}
    &= (n{-}r) \cdot \frac{\frac{n-r}{\beta}\cdot\bigl(-2(u_\perp^\top\Phi_L\,u_\perp)\bigr) - \bigl(-2\alpha^\top w\bigr)^2}{\bigl(\frac{n-r}{\beta}\bigr)^2} \notag \\
    &= (n{-}r) \cdot \frac{-\frac{2(n-r)}{\beta}(u_\perp^\top\Phi_L\,u_\perp) - 4(\alpha^\top w)^2}{\frac{(n-r)^2}{\beta^2}} \notag \\
    &= -2(u_\perp^\top\Phi_L\,u_\perp)\,\beta \;-\; \frac{4\beta^2(\alpha^\top w)^2}{n-r}. \label{eq:logs_contrib}
\end{align}

\medskip
\noindent\textit{Combining and concluding.}\;
Adding~\eqref{eq:logdet_contrib} and~\eqref{eq:logs_contrib}, the $2(u_\perp^\top\Phi_L\,u_\perp)\,\beta$ terms cancel ($+$ from log-det, $-$ from $\log s$):
\begin{equation}\label{eq:hessian_neg}
    \frac{d^2f}{dt^2}\bigg|_{t=0} \;=\; -2\beta\,\alpha^\top A^{-1}\alpha \;-\; 2\beta^2(\alpha^\top w)^2\!\left(1 + \frac{2}{n{-}r}\right) \;<\; 0,
\end{equation}
since $\alpha \neq 0$, $A^{-1} \succ 0$, and $\beta > 0$.

By the second-order necessary condition for equality-constrained optimization~\citep[Theorem~12.5]{nocedal2006numerical}, the second derivative of the Lagrangian $\mathcal{L}(U) = f(U) - \frac{1}{2}\Tr(\Lambda(U^\top U - I_r))$ along the path $U(t)$ must be non-negative at a minimizer. Differentiating $\mathcal{L}(U(t))$ twice:
\[
    \frac{d^2\mathcal{L}}{dt^2}\bigg|_{t=0} = \frac{d^2f}{dt^2}\bigg|_{t=0} - \frac{1}{2}\Tr\!\left(\Lambda\,\frac{d^2(U(t)^\top U(t))}{dt^2}\bigg|_{t=0}\right) = \frac{d^2f}{dt^2}\bigg|_{t=0} - \Tr\!\left(\Lambda\,Z^\top Z\right),
\]
where we used $\frac{d^2}{dt^2}(U^* + tZ)^\top(U^* + tZ)\big|_{t=0} = 2Z^\top Z$. The necessary condition requires $\frac{d^2\mathcal{L}}{dt^2}\big|_{t=0} \geq 0$.
The correction term vanishes: $Z^\top Z = (u_\perp w^\top)^\top(u_\perp w^\top) = ww^\top$, so $\Tr(\Lambda\,ww^\top) = w^\top\Lambda\,w = 0$ (since $\Lambda w = 0$). The condition reduces to $\frac{d^2f}{dt^2}\big|_{t=0} \geq 0$, contradicting~\eqref{eq:hessian_neg}. Hence $U^*$ cannot be a global minimizer, and every minimizer of~\eqref{eq:opt_sub_problem} is an eigenspace of $\Phi_L$.

\medskip\noindent
\emph{Evaluating $f$ on eigenspaces.}\;
Since every global minimizer is an eigenspace of $\Phi_L$, it remains to compare $f$ across the finitely many ($\binom{n}{r}$) eigenspace choices and find the best one. Denote the eigenvalues of $\Phi_L$ by $\phi_1 \geq \cdots \geq \phi_n > 0$. Let $I \subset \{1,\ldots,n\}$ with $|I| = r$ be an eigenspace choice and $J \coloneqq I^c = \{1,\ldots,n\} \setminus I$ its complement ($|J| = n{-}r$). Abusing notation, we write $f(I)$ for the value of $f$ at any $U$ whose columns span the eigenspace indexed by $I$. At such $U$, $U^\top\Phi_L U = \Diag(\phi_i : i \in I)$ and $\Tr(P_\perp\Phi_L) = \sum_{j \in J}\phi_j$, so
\[
    f(I) \;=\; \log\prod_{i \in I}\phi_i \;+\; (n{-}r)\log\sum_{j \in J}\phi_j.
\]
Recall the arithmetic and geometric means of the complement eigenvalues:
\[
    \AM_J \coloneqq \frac{1}{n{-}r}\sum_{j \in J}\phi_j, \qquad \GM_J \coloneqq \Bigl(\prod_{j \in J}\phi_j\Bigr)^{1/(n-r)}.
\]
We rewrite $f(I)$ in terms of these. For the first term, split the product over all indices:
\[
    \log\prod_{i \in I}\phi_i = \log\prod_{i=1}^n\phi_i - \log\prod_{j \in J}\phi_j = \sum_{i=1}^n\log\phi_i - (n{-}r)\log\GM_J.
\]
For the second term, substitute $\sum_{j \in J}\phi_j = (n{-}r)\,\AM_J$:
\[
    (n{-}r)\log\sum_{j \in J}\phi_j = (n{-}r)\log\bigl((n{-}r)\,\AM_J\bigr) = (n{-}r)\log(n{-}r) + (n{-}r)\log\AM_J.
\]
Combining, with $C_0 \coloneqq \sum_{i=1}^n\log\phi_i + (n{-}r)\log(n{-}r)$ independent of the choice $I$:
\begin{equation}\label{eq:f_amgm}
    f(I) \;=\; C_0 \;+\; (n{-}r)\bigl(\log\AM_J - \log\GM_J\bigr) \;=\; C_0 \;+\; (n{-}r)\log\frac{\AM_J}{\GM_J}.
\end{equation}
Since $C_0$ does not depend on $I$, the minimizer over eigenspace choices is
\[
    I^* \;\in\; \argmin_{|I|=r}\; \frac{\AM_J}{\GM_J},
\]
which is the characterization stated in Claim~\ref{claim:opt_sub}. In particular, when $\phi_{r+1} = \cdots = \phi_n$, the complement $J = \{r{+}1,\ldots,n\}$ achieves $\AM_J/\GM_J = 1$ (the minimum possible value), giving $I^* = \{1,\ldots,r\}$---the top-$r$ eigenspace. More generally, whenever the bottom-$(n{-}r)$ eigenvalues have the smallest AM/GM ratio among all $(n{-}r)$-subsets (this is the spike-and-flat condition in Claim~\ref{claim:opt_sub}), the same conclusion $I^* = \{1,\ldots,r\}$ follows directly from the $\argmin$ characterization above.
\end{proof}

\section{Spike-and-flat exactness}
\label{app:spike_proof_remarks}

\begin{lemma}[Exactness of spike-and-flat under rank-$\rho$ signal plus uncorrelated noise]\label{lem:spike_exact}
Suppose $G = AB^\top + \xi$ where $A \in \R^{m\times \rho}$, $B \in \R^{n\times \rho}$ are deterministic with $\rho < \min(m,n)$, $\xi$ has i.i.d.\ mean-zero entries with variance $\sigma^2 > 0$.
Then any KL stationary point~\eqref{eq:full_stat} satisfies $L^* = S_L^* + f_L^* I_m$ and $R^* = S_R^* + f_R^* I_n$ where $S_L^* \succeq 0$, $S_R^* \succeq 0$, $\mathrm{rank}(S_L^*) \leq \rho$, $\mathrm{rank}(S_R^*) \leq \rho$, and $f_L^*, f_R^* > 0$.
\end{lemma}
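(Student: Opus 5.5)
Substituting the model directly into the stationarity equations~\eqref{eq:full_stat} is enough; no optimization argument is needed. Write $G = AB^\top + \xi$ and expand, for any fixed $R \in \Spp^n$,
\[
\E[G R^{-1} G^\top] = A B^\top R^{-1} B A^\top + \E[A B^\top R^{-1}\xi^\top] + \E[\xi R^{-1} B A^\top] + \E[\xi R^{-1}\xi^\top].
\]
The cross terms vanish because $A,B$ are deterministic and $\E\xi = 0$. For the noise term, the $(i,k)$ entry is $\sum_{j,l}(R^{-1})_{jl}\E[\xi_{ij}\xi_{kl}] = \sigma^2\delta_{ik}\Tr(R^{-1})$, using independence and equal variance of the entries. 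Hence
\[
\E[G R^{-1} G^\top] = A\,(B^\top R^{-1} B)\,A^\top + \sigma^2 \Tr(R^{-1})\, I_m .
\]
By the same computation, $\E[G^\top L^{-1} G] = B\,(A^\top L^{-1} A)\,B^\top + \sigma^2\Tr(L^{-1})\, I_n$.

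Now take any stationary pair $(L^*,R^*)$ satisfying~\eqref{eq:full_stat}. Plugging $R = R^*$ into the first identity gives
\[
L^* = \tfrac1n A (B^\top R^{*-1}B) A^\top + \tfrac{\sigma^2}{n}\Tr(R^{*-1})\, I_m .
\]
Set $S_L^* \coloneqq \tfrac1n A (B^\top R^{*-1}B) A^\top$ and $f_L^* \coloneqq \tfrac{\sigma^2}{n}\Tr(R^{*-1})$. Then $S_L^*$ is a congruence of the PSD matrix $B^\top R^{*-1}B$, so it is PSD, and its rank is at most $\mathrm{rank}(A)\le\rho$. Also $f_L^*>0$ because $R^{*}\in\Spp^n$ gives $\Tr(R^{*-1})>0$ and $\sigma^2>0$. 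The symmetric argument with $L = L^*$ yields $S_R^* = \tfrac1m B (A^\top L^{*-1}A) B^\top$ and $f_R^* = \tfrac{\sigma^2}{m}\Tr(L^{*-1})$.

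The only step needing care is the noise second moment. One must use that the entries are independent (or at least uncorrelated) with a common variance, so that $\E[\xi_{ij}\xi_{kl}] = \sigma^2\delta_{ik}\delta_{jl}$; this is what makes the noise contribution an exact multiple of the identity even though $R^{-1}$ is arbitrary. Nothing about existence or uniqueness of the stationary point is required, since the statement concerns any stationary point.

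As a remark, the identity-plus-rank-$\rho$ form shows that $L^*$ has eigenvalue $f_L^*$ with multiplicity at least $m-\rho$, and likewise $R^*$ has eigenvalue $f_R^*$ with multiplicity at least $n-\rho$. This is the ``flat tail'' claimed in the main text.
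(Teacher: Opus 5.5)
Your proposal is correct and follows essentially the same route as the paper's proof. Both substitute $G = AB^\top + \xi$ into the stationarity conditions~\eqref{eq:full_stat} and kill the cross terms using $\E[\xi]=0$. Both then compute the noise term entrywise as $\sigma^2\Tr(R^{*-1})\,I_m$ (and symmetrically $\sigma^2\Tr(L^{*-1})\,I_n$), and read off $S_L^*, f_L^*, S_R^*, f_R^*$ exactly as you define them.
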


\begin{corollary}[Zero KL gap]\label{cor:zero_gap}
Under the model of Lemma~\ref{lem:spike_exact}, $\mathcal{J}^{\mathrm{restr}} = \mathcal{J}^{\mathrm{full}}$ for any $r \geq \rho$.
\end{corollary}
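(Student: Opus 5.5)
The plan is to combine Claim~\ref{claim:gap} with Lemma~\ref{lem:spike_exact}. Fix any full KL optimum $(L^*,R^*)$ of~\eqref{eq:full_kl}. I will assume it exists and is therefore a stationary point satisfying~\eqref{eq:full_stat}. Under the model $G=AB^\top+\xi$, this needs a short justification. The noise term gives $\Sigma \succeq \sigma^2 I_{mn}$, so $\Sigma\in\Spp^{mn}$. The objective~\eqref{eq:J_expand} is coercive on the quotient by the scaling $(L,R)\mapsto(cL,R/c)$, as in classical Kronecker MLE existence results~\citep{ros2016existence,dutilleul1999mle}. I will cite these rather than reprove existence. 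If the claim is instead read as a statement about the infimum, the argument below still works with any stationary point that attains $\mathcal{J}^{\mathrm{full}}$.

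The key step uses Lemma~\ref{lem:spike_exact}. It gives $R^* = S_R^* + f_R^* I_n$ with $S_R^*\succeq 0$, $\mathrm{rank}(S_R^*)\le\rho$, and $f_R^*>0$. Diagonalize $S_R^*$ in an orthonormal basis. Then $R^*$ has eigenvalues $\nu_i+f_R^*$, where at most $\rho$ of the $\nu_i\ge 0$ are nonzero. Ordering them as $\mu_1^*\ge\cdots\ge\mu_n^*$, the bottom $n-\rho$ eigenvalues all equal $f_R^*$: there are at least $n-\rho$ indices with $\nu_i=0$, and these give the smallest values because $\nu_i\ge 0$. So for any $r\ge\rho$ we have $\mu_{r+1}^*=\cdots=\mu_n^*=f_R^*$. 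I also need $r<n$ so that the restricted family~\eqref{eq:restr} is defined with $P_\perp\neq 0$; this is implicit throughout the paper. The degenerate case $r=n$ is trivial anyway, since then the family is all of $\Spp^n$.

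With the tail flat, $\AM(\mu_{r+1}^*,\ldots,\mu_n^*)=\GM(\mu_{r+1}^*,\ldots,\mu_n^*)=f_R^*$, so the right-hand side of~\eqref{eq:gap_bound} is $\frac{m(n-r)}{2}\log 1=0$. Combined with the lower bound in Claim~\ref{claim:gap}, this gives $\mathcal{J}^{\mathrm{restr}}=\mathcal{J}^{\mathrm{full}}$. As a sanity check, one can also exhibit the restricted feasible point directly. Take $U_0$ as the top-$r$ eigenvectors of $R^*$, $S_0=U_0^\top R^*U_0$ and $\mu_\perp=f_R^*$. Then $U_0S_0U_0^\top+f_R^*P_\perp = R^*$ exactly, so $(L^*,U_0,S_0,f_R^*)$ attains $\mathcal{J}^{\mathrm{full}}$ in~\eqref{eq:restr}. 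Together with the feasibility inclusion, this gives equality without even invoking the AM/GM bound.

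The main obstacle is the first step: making sure the full optimum is attained and hence is a stationary point to which Lemma~\ref{lem:spike_exact} applies. The rest is a direct eigenvalue count. If attainment is delicate, the fallback is the direct construction above applied to a minimizing sequence of stationary points. In that case the gap bound shows that the restricted value matches the full value along the sequence.
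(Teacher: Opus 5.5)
Your proposal is correct and is essentially the paper's argument: the paper proves the corollary by exactly your ``sanity check'' construction, choosing $U$ with $\mathrm{range}(S_R^*)\subseteq\mathrm{range}(U)$ so that $US^*U^\top + f_R^* P_\perp = R^*$ and the restricted family contains the full optimum. Your main route through the vanishing AM/GM bound of Claim~\ref{claim:gap} is equivalent, since that claim's proof plugs in the same top-$r$ eigenspace candidate; the paper simply assumes the full optimum is attained, your justification via $\Sigma \succeq \sigma^2 I_{mn}$ and coercivity is sound, and so the minimizing-sequence fallback is unnecessary.
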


\begin{proof}[Proof of Lemma~\ref{lem:spike_exact}]
Substitute $G = AB^\top + \xi$ into the L stationarity~\eqref{eq:full_stat} and use $\E[\xi] = 0$ to eliminate the cross-product terms:
\begin{align*}
    \E[G\,(R^*)^{-1}\,G^\top]
    &\;=\; AB^\top (R^*)^{-1} BA^\top \;+\; \E[\xi (R^*)^{-1} \xi^\top].
\end{align*}
For the noise contribution, write the $(i,i')$ entry explicitly:
\begin{align*}
    \big(\xi (R^*)^{-1} \xi^\top\big)_{i,i'}
    &\;=\; \sum_{j,j'} \xi_{i,j}\,(R^*)^{-1}_{j,j'}\,\xi_{i',j'}.
\end{align*}
Taking expectation, the uncorrelated-noise assumption $\E[\xi_{ij}\xi_{i'j'}] = \sigma^2 \mathbf{1}_{(i,j)=(i',j')}$ collapses the double sum to its diagonal:
\begin{align*}
    \E\!\big[(\xi (R^*)^{-1} \xi^\top)_{i,i'}\big]
    &\;=\; \sigma^2\,\mathbf{1}_{i=i'}\,\sum_{j=1}^{n}(R^*)^{-1}_{j,j}
    \;=\; \sigma^2\,\Tr((R^*)^{-1})\,\mathbf{1}_{i=i'}.
\end{align*}
Hence $\E[\xi (R^*)^{-1} \xi^\top] = \sigma^2\,\Tr((R^*)^{-1})\,I_m$, and the L stationarity becomes
\begin{equation}\label{eq:L_decomp_spike}
    L^* \;=\; \underbrace{\frac{1}{n}\,A\,\big(B^\top (R^*)^{-1} B\big)\,A^\top}_{=:\,S_L^*} \;+\; \underbrace{\frac{\sigma^2\,\Tr((R^*)^{-1})}{n}}_{=:\,f_L^*}\,I_m.
\end{equation}
$S_L^*$ is PSD because $B^\top (R^*)^{-1} B \succeq 0$ (sandwich of $(R^*)^{-1} \succ 0$); its column space lies in $\mathrm{range}(A)$ by construction; and $\mathrm{rank}(S_L^*) \leq \mathrm{rank}(A) \leq \rho$.
The scalar $f_L^*$ is strictly positive because $R^* \succ 0$ implies $\Tr((R^*)^{-1}) > 0$.
This establishes the decomposition $L^* = S_L^* + f_L^*\,I_m$ with $\mathrm{rank}(S_L^*) \leq \rho$ and $f_L^* > 0$.

The same calculation applied to the R stationarity yields
\begin{equation}\label{eq:R_decomp_spike}
    R^* \;=\; \underbrace{\frac{1}{m}\,B\,\big(A^\top (L^*)^{-1} A\big)\,B^\top}_{=:\,S_R^*} \;+\; \underbrace{\frac{\sigma^2\,\Tr((L^*)^{-1})}{m}}_{=:\,f_R^*}\,I_n,
\end{equation}
which establishes the analogous decomposition $R^* = S_R^* + f_R^*\,I_n$ with $S_R^* \succeq 0$, $\mathrm{rank}(S_R^*) \leq \rho$, and $f_R^* > 0$.
\end{proof}

\begin{proof}[Proof of Corollary~\ref{cor:zero_gap}]
By Lemma~\ref{lem:spike_exact}, $R^* = S_R^* + f_R^*\,I_n$ with $S_R^* \succeq 0$ and $\mathrm{rank}(S_R^*) \leq \rho$. Since $S_R^*$ has at least $n - \rho$ zero eigenvalues, $R^*$ has at least $n - \rho$ eigenvalues equal to $f_R^*$.
For any $r \geq \rho$, choose $U \in \St(n,r)$ so that $\mathrm{range}(S_R^*) \subseteq \mathrm{range}(U)$ (possible since $\mathrm{rank}(S_R^*) \leq \rho \leq r$). Then $P_\perp S_R^* = 0$, so on the complement $P_\perp R^* P_\perp = f_R^*\,P_\perp$. Setting $S^* \coloneqq U^\top R^*\,U$ and $\mu_\perp^* \coloneqq f_R^*$, the decomposition~\eqref{eq:spike_flat} satisfies $US^*U^\top + \mu_\perp^* P_\perp = R^*$ exactly, so the restricted family contains the full KL optimum.
Hence $\mathcal{J}^{\mathrm{restr}} \leq \mathcal{J}(L^*, R^*) = \mathcal{J}^{\mathrm{full}}$, and the reverse inequality is trivial.
\end{proof}

\paragraph{Why exact, not just asymptotic.}
The decomposition is exact at any finite $m$, $n$, and $\sigma > 0$: the identity $\E[\xi (R^*)^{-1} \xi^\top] = \sigma^2\Tr((R^*)^{-1})\,I_m$ used in the proof holds at the population level. This contrasts with spiked random matrix theory~\citep{johnstone2001distribution, paul2007asymptotics}, where the spike-plus-bulk structure of the \emph{empirical} covariance is recovered only asymptotically as $m, n \to \infty$ at fixed ratio. Lemma~\ref{lem:spike_exact} therefore says: under signal-plus-uncorrelated-noise gradients, the population-level KL stationary point is \emph{precisely} of spike-and-flat structure regardless of dimension.

\paragraph{Beyond i.i.d.\ noise.}
Lemma~\ref{lem:spike_exact} establishes spike-and-flat exactly under rank-$\rho$ signal plus i.i.d.\ noise. We examine here what changes under more general noise. The flat floor in Lemma~\ref{lem:spike_exact} originates from the identity $\E[\xi R^{-1}\xi^\top] = \sigma^2\Tr(R^{-1})\,I_m$, which is a scalar multiple of identity exactly because $\E[\xi_{ij}\,\xi_{i'j'}] = \sigma^2\,\mathbf{1}_{(i,j)=(i',j')}$. Under separable noise covariance $\E[\mathrm{vec}(\xi)\mathrm{vec}(\xi)^\top] = \Sigma_L^\xi \otimes \Sigma_R^\xi$ --- equivalently, $\E[\xi_{ij}\,\xi_{i'j'}] = (\Sigma_L^\xi)_{ii'}\,(\Sigma_R^\xi)_{jj'}$ --- the same calculation gives
\begin{align*}
\E\!\big[(\xi R^{-1} \xi^\top)_{i,i'}\big]
&\;=\; \sum_{j,j'}(R^{-1})_{j,j'}\,\E[\xi_{ij}\,\xi_{i'j'}]
\;=\; (\Sigma_L^\xi)_{ii'}\,\sum_{j,j'}(R^{-1})_{j,j'}\,(\Sigma_R^\xi)_{jj'} \\
&\;=\; (\Sigma_L^\xi)_{ii'}\,\Tr(R^{-1}\,\Sigma_R^\xi),
\end{align*}
where the last step uses symmetry of $\Sigma_R^\xi$. Hence $\E[\xi R^{-1} \xi^\top] = \Tr(R^{-1}\Sigma_R^\xi)\,\Sigma_L^\xi$, and the L stationarity becomes $L^* = S_L^* + f_L^*\,\Sigma_L^\xi$ with $f_L^* \coloneqq \Tr((R^*)^{-1}\Sigma_R^\xi)/n$. The low-rank spike $S_L^*$ persists, but the floor $f_L^*\,I_m$ becomes $f_L^*\,\Sigma_L^\xi$ and inherits the spectrum of $\Sigma_L^\xi$. The flat-tail structure in Lemma~\ref{lem:spike_exact} thus requires noise isotropy; under more general noise the structure becomes spike-plus-shaped-floor.

\newpage
% ============================================================================
\section{Proofs and additional material for \S\ref{sec:naive_conv}}
\label{app:proof_naive_conv_sec}
% ============================================================================

This appendix states and proves the convergence result and the supporting lemmas referenced in the main text.

\paragraph{Notation reminder.}
Throughout~\S\ref{app:proof_naive_conv_sec}, the algorithm state at step $t$ comprises $L_t \in \Spp^m$, $S_t \in \Spp^r$, $U_t \in \St(n,r)$ (subspace basis with orthonormal columns), and $\mu_{\perp,t} > 0$. We write $P_{\perp,t} \coloneqq I_n - U_t U_t^\top$ for the orthogonal projection onto $\mathrm{range}(U_t)^\perp$, and $\hat R_t \coloneqq U_t S_t U_t^\top + \mu_{\perp,t} P_{\perp,t}$ for the right-side preconditioner. The stochastic gradient is $G_t$ with mean $\nabla f(W_t)$. The polar factor of a matrix $M = U_M \Sigma_M V_M^\top$ (SVD) is $\polar(M) \coloneqq U_M V_M^\top$; it satisfies $\|\polar(M)\|_{\mathrm{op}} \leq 1$ (it is a \emph{partial isometry}: an SVD with all nonzero singular values equal to $1$). We use $\|\cdot\|_F$ for the Frobenius norm, $\|\cdot\|_{\mathrm{op}}$ for the operator (spectral) norm, and $\|\cdot\|_* = \sum_i \sigma_i$ for the nuclear (trace) norm.

\begin{lemma}[Preconditioner bounds]\label{lem:clamp_lb}
Let $C > 0$ be the eigenvalue clip threshold.
Assume that after each EMA update, a per-eigenvalue clip enforces $\lambda_{L,i}, \lambda_{S,j}, \mu_{\perp} \geq 1/C^2$.
Under $\|G_t\|_{\mathrm{op}} \leq G_{\max}$ (Assumption~\ref{ass:bdg} below), set $\Theta \coloneqq \max(\|L_0\|_{\mathrm{op}},\|S_0\|_{\mathrm{op}},\mu_{\perp,0},C^2 G_{\max}^2)$.
Then for all $t \geq 0$:
\begin{equation}\label{eq:spectrum_bounds}
    \tfrac{1}{C^2} \;\leq\; \lambda_{L,i,t},\;\lambda_{S,j,t},\;\mu_{\perp,t} \;\leq\; \Theta
    \qquad \text{for all } i \in \{1,\ldots,m\},\; j \in \{1,\ldots,r\}.
\end{equation}
\end{lemma}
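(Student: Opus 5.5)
The lower bound $1/C^2$ holds by construction, since the clip is applied after every EMA update. The real content is the upper bound $\Theta$, and I will prove it by induction on $t$ using convexity of the EMA. The invariant to maintain is that each tracked quantity is at most $\Theta$ in spectral norm: $\|L_t\|_{\mathrm{op}}$ (equivalently every $\lambda_{L,i,t}$, which are diagonals $q^\top L_t q$ of $L_t$ in an orthonormal basis $Q_L$), $\|S_t\|_{\mathrm{op}}$, and $\mu_{\perp,t}$. At $t=0$ this holds by the definition of $\Theta$.

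For the inductive step, I note that every update in Algorithm~\ref{alg:ideal} has the form $X_{t+1} = \beta_2 X_t + (1-\beta_2)\Delta_t$, followed possibly by clipping. Clipping raises only eigenvalues below $1/C^2 \le \Theta$, so it cannot break the upper bound; here I use $1/C^2 \le C^2 G_{\max}^2 \le \Theta$, or else include $1/C^2$ in $\Theta$ if needed. By the triangle inequality, it therefore suffices to show $\|\Delta_t\|_{\mathrm{op}} \le C^2 G_{\max}^2$.

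The core of the argument is bounding each increment using the clipped lower bounds. By the clip, $Q_L\Diag(\lambda_L^{\odot-1})Q_L^\top \preceq C^2 I_m$, so
\[
\|\Delta_S\|_{\mathrm{op}} \le \tfrac{1}{m} C^2 \|\Gtil\|_{\mathrm{op}}^2 \le \tfrac{C^2 G_{\max}^2}{m} \le C^2G_{\max}^2,
\]
using $\|GU\|_{\mathrm{op}}\le\|G\|_{\mathrm{op}}$. Likewise,
\[
\delta_\perp \le \tfrac{C^2}{m(n-r)}\|G_\perp\|_F^2 \le \tfrac{C^2 \min(m,n-r) G_{\max}^2}{m(n-r)} \le C^2 G_{\max}^2,
\]
since $\operatorname{rank}(G_\perp) \le \min(m,n-r)$. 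For $\Delta_L$, the two terms combine into $\tfrac1n G\,(U S_{\mathrm{clip}}^{-1}U^\top + \mu_\perp^{-1}P_\perp)\,G^\top$, whose middle factor is $\preceq C^2 I_n$. This gives $\|\Delta_L\|_{\mathrm{op}} \le C^2 G_{\max}^2/n$.

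The eigenvalue EMAs $\lambda_{L},\lambda_S$ in Step 4a are convex combinations of the old values and the diagonals $q^\top\Delta q \le \|\Delta\|_{\mathrm{op}}$, so they inherit the same bound. The QR refresh in Step 4b only rotates the basis, and Rayleigh quotients stay below $\|L\|_{\mathrm{op}}$.

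The step I expect to be the main obstacle is the subspace-tracking rotation $S \gets T_U^\top S T_U$. Since $\|T_U\|_{\mathrm{op}}\le 1$, this map does not increase $\|S\|_{\mathrm{op}}$, so the bound survives. I would verify this explicitly, and also check whether the eigenvalue vector $\lambda_S$ must be re-derived after rotating $Q_S$. If it is not re-derived, I will treat the stored $\lambda_S$ as a convex combination of quantities bounded by $\Theta$, which the induction already covers.
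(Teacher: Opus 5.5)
Your induction is the paper's own: each stored quantity is a convex combination of its old value and an increment bounded by $C^2G_{\max}^2$ via the clipped floors, and clipping only raises values.

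\textbf{The gap.} One step fails as written: the justification that clipping cannot break the upper bound. The inequality $1/C^2\le C^2G_{\max}^2$ does not follow from the hypotheses; for example, $C=1$ and $G_{\max}=1/2$ violate it. Your fallback of enlarging $\Theta$ by $1/C^2$ would change the statement.

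\textbf{The fix.} No enlargement is needed, because your base case already supplies the inequality. The lower bound at $t=0$ together with the definition of $\Theta$ gives $1/C^2\le\mu_{\perp,0}\le\Theta$. The paper closes this step the same way, via $\Theta\ge\|L_0\|_{\mathrm{op}}\ge\lambda_{\min}(L_0)\ge 1/C^2$.

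\textbf{Other differences.} These are harmless refinements.
\begin{itemize}
\item You group $\Delta_L=\tfrac1n G\,(UQ_S\Diag(\lambda_S^{\odot-1})Q_S^\top U^\top+\mu_\perp^{-1}P_\perp)\,G^\top$, which gives $C^2G_{\max}^2/n$ instead of the paper's $2C^2G_{\max}^2/n$.
\item Your rank bound on $\|G_\perp\|_F^2$ replaces the paper's $(n-r)G_{\max}^2$. Either suffices.
\item Tracking $\|L_t\|_{\mathrm{op}}$, $\|S_t\|_{\mathrm{op}}$ and the rotation $T_U^\top S T_U$ is more than the lemma needs. The lemma concerns only the stored $\lambda_L,\lambda_S,\mu_\perp$, which Step 3 does not touch.
\item Your ``equivalently'' is inaccurate: $\lambda_L$ is a decoupled EMA and stops equalling the Rayleigh quotients of $L_t$ once $Q_L$ is refreshed. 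Your separate convex-combination argument for Step 4a is what actually proves the claim.
\end{itemize}

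\textbf{Where $\|T_U\|_{\mathrm{op}}\le 1$ matters.} After $Q_S\gets T_U^\top Q_S$, the matrix $Q_S$ is generally no longer orthogonal, but it still satisfies $\|Q_S\|_{\mathrm{op}}\le 1$. Hence $Q_S\Diag(\lambda_S^{\odot-1})Q_S^\top\preceq C^2 I_r$ and $q_j^\top\Delta_S q_j\le\|\Delta_S\|_{\mathrm{op}}$ both survive. The paper's proof implicitly treats $Q_S$ as orthogonal at this point.
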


\begin{proof}
The lower bound $\geq 1/C^2$ holds by the assumption.

For the upper bound, we proceed by induction on $t$. At $t = 0$, $\Theta \geq \|L_0\|_{\mathrm{op}}, \|S_0\|_{\mathrm{op}}, \mu_{\perp,0}$ by definition. Assume the bound holds at step $t$; we show it holds at step $t+1$. Following Algorithm~\ref{alg:ideal}, we omit the step index $t$ on all quantities below.

The eigenvalue EMA updates in Step~4a read (with $\beta_2 \in (0,1)$ the EMA coefficient and $Q_L \in \R^{m\times m}$, $Q_S \in \R^{r\times r}$ the eigenvector matrices from Step~3 of Algorithm~\ref{alg:ideal}):
\begin{align*}
    \lambda_{L,i} \;&\gets\; \beta_2\,\lambda_{L,i} + (1{-}\beta_2)\,\bigl(Q_L^\top \Delta_L\, Q_L\bigr)_{ii},\\
    \lambda_{S,j} \;&\gets\; \beta_2\,\lambda_{S,j} + (1{-}\beta_2)\,\bigl(Q_S^\top \Delta_S\, Q_S\bigr)_{jj},\\
    \mu_\perp &\gets \beta_2\,\mu_\perp + (1{-}\beta_2)\,\delta_\perp,
\end{align*}
where $\Delta_L, \Delta_S, \delta_\perp$ are the covariance targets from Step~2:
\begin{align*}
    \Delta_L &= \tfrac{1}{n}\bigl(\Gtil\,Q_S\Diag(\lambda_S^{\odot -1})Q_S^\top\,\Gtil^\top \;+\; \mu_\perp^{-1}\,G_\perp\,G_\perp^\top\bigr), \\
    \Delta_S &= \tfrac{1}{m}\,\Gtil^\top\,Q_L\Diag(\lambda_L^{\odot -1})Q_L^\top\,\Gtil, \\
    \delta_\perp &= \tfrac{1}{m(n{-}r)}\,\Tr\bigl(G_\perp^\top\,Q_L\Diag(\lambda_L^{\odot -1})Q_L^\top\,G_\perp\bigr),
\end{align*}
where $\lambda_S^{\odot -1} \coloneqq (1/\lambda_{S,1},\ldots,1/\lambda_{S,r})$ denotes the vector of componentwise reciprocals (and similarly $\lambda_L^{\odot -1}$), $\Gtil = GU$ and $G_\perp = G - \Gtil U^\top = G(I_n - UU^\top)$ as in Step~1 of Algorithm~\ref{alg:ideal}. 

Since $\Delta_L$ and $\Delta_S$ are PSD, each diagonal entry of the rotated matrix $Q_L^\top \Delta_L Q_L$ is at most $\|\Delta_L\|_{\mathrm{op}}$ (because a diagonal entry of $Q_L^\top \Delta_L Q_L$ equals $q_i^\top \Delta_L\, q_i \leq \|\Delta_L\|_{\mathrm{op}}$, where $q_i$ is the $i$-th column of $Q_L$), and similarly for $Q_S^\top \Delta_S Q_S$. It therefore suffices to bound $\|\Delta_L\|_{\mathrm{op}}$, $\|\Delta_S\|_{\mathrm{op}}$, and $\delta_\perp$.

By the clip assumption, $\lambda_{S,j} \geq 1/C^2$ for all $j$, so $\lambda_{S,j}^{-1} \leq C^2$ and $\|Q_S\Diag(\lambda_S^{\odot -1})Q_S^\top\|_{\mathrm{op}} = \max_j \lambda_{S,j}^{-1} \leq C^2$. Since $U$ has orthonormal columns, $\|\Gtil\|_{\mathrm{op}} = \|GU\|_{\mathrm{op}} \leq \|G\|_{\mathrm{op}}\,\|U\|_{\mathrm{op}} \leq G_{\max}$ (Assumption~\ref{ass:bdg}). Similarly, $\|G_\perp\|_{\mathrm{op}} = \|G(I_n - UU^\top)\|_{\mathrm{op}} \leq G_{\max}$. The clip also gives $\mu_\perp^{-1} \leq C^2$ and $\lambda_{L,i}^{-1} \leq C^2$ for all $i$. Combine all the bounds:
\begin{align*}
    \|\Delta_L\|_{\mathrm{op}} &\leq \tfrac{1}{n}\bigl(G_{\max}^2\,C^2 + C^2\,G_{\max}^2\bigr) = \tfrac{2C^2 G_{\max}^2}{n} \leq C^2 G_{\max}^2 \leq \Theta, \\
    \|\Delta_S\|_{\mathrm{op}} &\leq \tfrac{1}{m}\,G_{\max}^2\,C^2 \leq C^2 G_{\max}^2 \leq \Theta, \\
    \delta_\perp &\leq \tfrac{C^2\,G_{\max}^2\,(n{-}r)}{m(n{-}r)} = \tfrac{C^2 G_{\max}^2}{m} \leq C^2 G_{\max}^2 \leq \Theta,
\end{align*}
where for $\delta_\perp$ we used $\|G_\perp\|_F^2 = \Tr(G_\perp G_\perp^\top) \leq \|G\|_{\mathrm{op}}^2\,\Tr(I_n - UU^\top) = G_{\max}^2(n{-}r)$.

Each post-EMA eigenvalue is a convex combination of two quantities bounded by $\Theta$:
\[
    \beta_2\,\lambda_{L,i} + (1{-}\beta_2)\,(Q_L^\top\Delta_L Q_L)_{ii} \;\leq\; \beta_2\,\Theta + (1{-}\beta_2)\,\Theta \;=\; \Theta,
\]
and similarly for $\lambda_{S,j}$ and $\mu_\perp$. The clip can only raise eigenvalues (from below $1/C^2$ up to $1/C^2$), and $1/C^2 \leq \Theta$ (since $\Theta \geq \|L_0\|_{\mathrm{op}} \geq \lambda_{\min}(L_0) \geq 1/C^2$ by the clip assumption at $t=0$), so the upper bound is preserved after clipping. This completes the induction.
\end{proof}

Let $\mathcal{F}_t \coloneqq \sigma(G_0, \ldots, G_{t-1})$ denote the natural filtration (all randomness up to but not including step $t$); in particular, $L_t, S_t, U_t, \mu_{\perp,t}$ are $\mathcal{F}_t$-measurable.
We assume:
\begin{enumerate}[leftmargin=*,nosep,label=(\roman*)]
\item\label{ass:lb} Lower boundedness: $f^* \coloneqq \inf_W f(W) > -\infty$.
\item\label{ass:smooth_op} $L_{\mathrm{op}}$-operator-norm smoothness: $f(W')\leq f(W)+\langle\nabla f(W),W'-W\rangle + \tfrac{L_{\mathrm{op}}}{2}\|W'-W\|_{\mathrm{op}}^2$.
\item\label{ass:noise} Unbiased stochastic gradients with bounded Frobenius variance: $\E[\|G_t-\nabla f(W_t)\|_F^2\mid\mathcal F_t]\leq\sigma_F^2$.
\item\label{ass:bdg} Bounded stochastic gradient: $\|G_t\|_{\mathrm{op}} \leq G_{\max}$ a.s.
\item\label{ass:clip} Eigenvalue clip: there is a constant $C > 0$ such that $\lambda_{L,i,t}, \lambda_{S,j,t}, \mu_{\perp,t} \geq 1/C^2$ for all $t \geq 0$, $i \in \{1,\ldots,m\}$, $j \in \{1,\ldots,r\}$.
\end{enumerate}
Operator-norm smoothness is the natural smoothness model for matrix-parameter layers, where updates act as operators on activations~\citep{bernstein2024old,large2024scalable}.

\subsection{Main convergence theorem}
\label{app:polar_conv}

This subsection contains the convergence result for Pro-KLShampoo (Algorithm~\ref{alg:ideal}). Smok-Hop (without polar orthogonalization) is analyzed separately as a technical companion in~\S\ref{app:naive_conv_sub} below.

\subsubsection{Single-step descent inequality}
\label{app:single_step}

We first state the per-step descent inequality from which Theorem~\ref{thm:polar_conv} is obtained by telescoping. The constant $\sigma_{kl}^2$, defined in Theorem~\ref{thm:polar_conv} of the main text, satisfies $\E[\|L_t^{-1/2}G_t U_t S_t^{-1/2}\|_{\mathrm{op}}^2 \mid \mathcal F_t] \leq \sigma_{kl}^2$ a.s. for all $t \geq 0$ by construction (see Remark~\ref{rem:sigma_kl} for reference values).
Throughout, we write $W_{t+1} = W_t + \eta\,\Delta W_t$ for the Pro-KLShampoo update from line~\ref{line:comp} of Algorithm~\ref{alg:ideal}. By the scale invariance $\polar(\alpha M) = \polar(M)$ for $\alpha > 0$ and $\mu_{\perp,t} > 0$, the complement term in~\eqref{eq:update} simplifies to $\polar(L_t^{-1/2}\,G_t\,P_{\perp,t})$, so $\Delta W_t$ takes the form
\[
    \Delta W_t \;=\; -\alpha_{\mathrm{kl}}\,L_t^{-1/2}\,G_t\,U_t\,S_t^{-1/2}\,U_t^\top \;-\; c_a\,\polar\bigl(L_t^{-1/2}\,G_t\,P_{\perp,t}\bigr).
\]

\begin{proposition}[Descent inequality]\label{prop:polar_descent}
Under Assumptions~\ref{ass:lb}--\ref{ass:clip}, Algorithm~\ref{alg:ideal} satisfies for every $t\geq 0$:
\begin{align}\label{eq:polar_descent}
\E\!\left[f(W_{t+1}) - f(W_t)\,\big|\,\mathcal F_t\right]
\;\le\;
&-\tfrac{\eta\,c_a}{C}\,\bigl\|L_t^{-1/2}\nabla f(W_t)\,P_{\perp,t}\bigr\|_*
\;-\; \tfrac{\eta\,\alpha_{\mathrm{kl}}}{\sqrt{\Theta}}\,\bigl\|L_t^{-1/4}\nabla f(W_t)\,U_t\bigr\|_F^2 \notag\\
&+\; 2\eta\,c_a\sqrt{k}\,\sigma_F
\;+\; \eta^2 L_{\mathrm{op}}\bigl(c_a^2 + \alpha_{\mathrm{kl}}^2\,\sigma_{kl}^2\bigr),
\end{align}
where $c_a = \sqrt{\max(1,m/n)}$ and $k = \min(m,n{-}r)$.
The complement descent is measured by the nuclear norm $\|M\|_* \coloneqq \sum_i \sigma_i(M)$ (from the identity $\langle M,\polar(M)\rangle = \|M\|_*$); the subspace descent is measured by the Frobenius norm squared.
\end{proposition}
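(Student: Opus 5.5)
The plan is to start from operator-norm smoothness (Assumption~\ref{ass:smooth_op}) applied to $W_{t+1}=W_t+\eta\Delta W_t$:
\[
f(W_{t+1})\le f(W_t)+\eta\langle\nabla f(W_t),\Delta W_t\rangle+\tfrac{L_{\mathrm{op}}\eta^2}{2}\|\Delta W_t\|_{\mathrm{op}}^2,
\]
and then take $\E[\cdot\mid\mathcal F_t]$. The quadratic term is handled first, since it is routine. Use $\|a+b\|_{\mathrm{op}}^2\le 2\|a\|_{\mathrm{op}}^2+2\|b\|_{\mathrm{op}}^2$, $\|\polar(\cdot)\|_{\mathrm{op}}\le 1$, and $\|XU_t^\top\|_{\mathrm{op}}=\|X\|_{\mathrm{op}}$ for column-orthonormal $U_t$. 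Together with the definition of $\sigma_{kl}^2$, this gives $\tfrac{L_{\mathrm{op}}\eta^2}{2}\E\|\Delta W_t\|_{\mathrm{op}}^2\le \eta^2L_{\mathrm{op}}(c_a^2+\alpha_{\mathrm{kl}}^2\sigma_{kl}^2)$, which is exactly the last term of~\eqref{eq:polar_descent}.

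Next, the linear term is split along the two pieces of $\Delta W_t$. For the subspace piece, $G_t$ enters linearly and $L_t,U_t,S_t$ are $\mathcal F_t$-measurable, so unbiasedness gives
\[
-\eta\alpha_{\mathrm{kl}}\,\Tr\bigl(\nabla f^\top L_t^{-1/2}\nabla f\,U_tS_t^{-1/2}U_t^\top\bigr)=-\eta\alpha_{\mathrm{kl}}\bigl\|L_t^{-1/4}\nabla f\,U_t S_t^{-1/4}\bigr\|_F^2 .
\]
Lemma~\ref{lem:clamp_lb} gives $S_t^{-1/2}\succeq\Theta^{-1/2}I$, so this is at most $-\tfrac{\eta\alpha_{\mathrm{kl}}}{\sqrt\Theta}\|L_t^{-1/4}\nabla f\,U_t\|_F^2$. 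No noise appears here.

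The complement piece is the main obstacle, because $\polar$ is nonlinear in $G_t$. Write $M\coloneqq L_t^{-1/2}\nabla f\,P_{\perp,t}$ and $\hat M\coloneqq L_t^{-1/2}G_tP_{\perp,t}$. Since $P_{\perp,t}\polar(\hat M)=\polar(\hat M)$ (its row space lies in $\mathrm{range}(P_{\perp,t})$), the term becomes
\[
\langle \nabla f,\polar(\hat M)\rangle=\langle L_t^{1/2}M,\polar(\hat M)\rangle .
\]
I would first handle the $L_t^{1/2}$ factor. A cleaner route is to rewrite $\langle\nabla f P_\perp,\polar(\hat M)\rangle=\langle M, L_t^{1/2}\polar(\hat M)\rangle$ and use $L_t^{1/2}\succeq C^{-1}I$ to extract $\tfrac1C$. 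This is delicate, since a sign-indefinite inner product does not simply scale, so it needs a decomposition. I would use the standard sign-SGD/Muon argument
\[
\langle A,\polar(\hat A)\rangle\ge\|A\|_*-2\|A-\hat A\|_*,
\]
which follows from $\langle\hat A,\polar(\hat A)\rangle=\|\hat A\|_*$, Hölder with $\|\polar\|_{\mathrm{op}}\le1$, and the triangle inequality. This inequality is applied with $A$ the preconditioned true complement gradient and $\hat A$ its stochastic version, absorbing $L_t^{\pm1/2}$ via the clip bound $\lambda(L_t^{-1/2})\ge\Theta^{-1/2}$, $\lambda(L_t^{1/2})\ge 1/C$ to get the $\tfrac{1}{C}\|M\|_*$ leading term. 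The error is then bounded by $\|X\|_*\le\sqrt{\mathrm{rank}X}\,\|X\|_F$ with $\mathrm{rank}\le k=\min(m,n-r)$, together with Jensen, $\E\|G_t-\nabla f\|_F\le\sigma_F$, and the preconditioner factors, yielding $2\eta c_a\sqrt{k}\,\sigma_F$.

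If the constants from $L_t^{\pm1/2}$ do not cancel cleanly, I would fall back to applying the polar inequality directly to $A=\nabla f P_\perp$ and accept a $\sqrt{\Theta}C$-type factor. Summing the three contributions gives~\eqref{eq:polar_descent}.
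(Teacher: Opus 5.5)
Your treatment of the quadratic term and of the subspace piece coincides with the paper's Steps~3 and~2. The complement piece, which is the heart of the proposition, is not actually carried out.

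In your notation, $M=L_t^{-1/2}\nabla f(W_t)P_{\perp,t}$ and $\hat M=L_t^{-1/2}G_tP_{\perp,t}$. The inequality you invoke, $\langle A,\polar(\hat A)\rangle\ge\|A\|_*-2\|A-\hat A\|_*$, controls $\langle M,\polar(\hat M)\rangle$. The descent lemma instead needs $\langle L_t^{1/2}M,\polar(\hat M)\rangle$. As you note yourself, you cannot pass between these by scaling, and ``absorbing $L_t^{\pm1/2}$ via the clip bound'' does not say how.

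The missing idea is a \emph{weighted} self-pairing bound that replaces the identity $\langle\hat A,\polar(\hat A)\rangle=\|\hat A\|_*$. With the SVD $\hat M=\sum_i\sigma_i u_iv_i^\top$,
\[
\langle L_t^{1/2}\hat M,\polar(\hat M)\rangle=\sum_i\sigma_i\,u_i^\top L_t^{1/2}u_i\;\ge\;\tfrac{1}{C}\,\|\hat M\|_*,
\]
since $\lambda_{\min}(L_t)\ge 1/C^2$. Your instinct to extract $1/C$ from $L_t^{1/2}\succeq C^{-1}I$ is right, but it only works on this self-pairing, where every term is nonnegative.

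You then decompose around $\hat M$ rather than $M$. Set $N_t\coloneqq G_t-\nabla f(W_t)$, so that $L_t^{1/2}(\hat M-M)=N_tP_{\perp,t}$. Then
\[
\langle L_t^{1/2}M,\polar(\hat M)\rangle
=\langle L_t^{1/2}\hat M,\polar(\hat M)\rangle-\langle N_tP_{\perp,t},\polar(\hat M)\rangle
\;\ge\;\tfrac{1}{C}\bigl(\|M\|_*-\|L_t^{-1/2}N_tP_{\perp,t}\|_*\bigr)-\|N_tP_{\perp,t}\|_*.
\]
Both error terms are at most $\sqrt{k}\,\|N_t\|_F$, because each matrix has rank at most $k$. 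For the first term, $\|L_t^{-1/2}\|_{\mathrm{op}}\le C$ exactly cancels the prefactor $1/C$. This yields $\tfrac1C\|M\|_*-2\sqrt{k}\,\|N_t\|_F$. Multiplying by $c_a$ and applying Jensen gives precisely the $2\eta c_a\sqrt{k}\,\sigma_F$ term. This is the paper's Step~1, its $\spadesuit_t/\diamondsuit_t$ split.

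The bound $\lambda(L_t^{-1/2})\ge\Theta^{-1/2}$ that you cite plays no role here. Only the clip lower bound on $L_t$ enters, and it enters twice. Within the proposition, $\Theta$ appears only through $S_t$.

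Your fallback would not rescue the argument. The polar factor in the update is that of $L_t^{-1/2}G_tP_{\perp,t}$, which is not a noisy copy of $\nabla f(W_t)P_{\perp,t}$. Suppose you apply the unweighted inequality with $A=\nabla f(W_t)P_{\perp,t}$ and $\hat A=L_t^{-1/2}G_tP_{\perp,t}$. The ``error'' $\|A-\hat A\|_*$ then contains the noise-free part $(I_m-L_t^{-1/2})\nabla f(W_t)P_{\perp,t}$. For example, with $L_t^{-1/2}=cI_m$, $c>3/2$ and $\sigma_F=0$, the lower bound $\|A\|_*-2\|A-\hat A\|_*$ is negative. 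So the fallback loses the descent altogether rather than worsening a constant, and it could not give the stated inequality in any case.

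A minor slip: the projection identity should read $\polar(\hat M)\,P_{\perp,t}=\polar(\hat M)$, with $P_{\perp,t}$ acting on the right.
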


\subsubsection{Proof of Proposition~\ref{prop:polar_descent}}
\label{app:proof_polar_descent}

\begin{proof}[Proof of Proposition~\ref{prop:polar_descent}]
We define several shorthand quantities. The noise is $N_t \coloneqq G_t - \nabla f(W_t)$, and:
\begin{alignat*}{2}
    A_t &\coloneqq L_t^{-1/2}\,\nabla f(W_t)\,P_{\perp,t} \quad&&\text{(complement gradient, with $L_t^{-1/2}$ on the left),} \\
    Z_t &\coloneqq L_t^{-1/2}\,N_t\,P_{\perp,t} \quad&&\text{(complement noise, with $L_t^{-1/2}$ on the left),} \\
    V_t &\coloneqq L_t^{-1/4}\,\nabla f(W_t)\,U_t \quad&&\text{(subspace gradient, with $L_t^{-1/4}$ on the left).}
\end{alignat*}
All three are $\mathcal F_t$-measurable. Note that $L_t^{-1/2}G_t P_{\perp,t} = A_t + Z_t$. We write $\langle X, Y\rangle \coloneqq \Tr(X^\top Y)$ for the Frobenius (trace) inner product on matrices.

\medskip
\noindent\textbf{Step 0.}\;
By operator-norm smoothness (Assumption~\ref{ass:smooth_op}) and $W_{t+1} = W_t + \eta\,\Delta W_t$:
\begin{equation}\label{eq:polar_step0}
    f(W_{t+1}) - f(W_t) \;\leq\; \eta\,\langle\nabla f(W_t),\,\Delta W_t\rangle \;+\; \tfrac{\eta^2 L_{\mathrm{op}}}{2}\,\|\Delta W_t\|_{\mathrm{op}}^2.
\end{equation}
From~\eqref{eq:update},
\[
    \Delta W_t = -\alpha_{\mathrm{kl}}\,L_t^{-1/2}\,G_t\,U_t\,S_t^{-1/2}\,U_t^\top \;-\; c_a\,\polar\!\bigl(L_t^{-1/2}\,G_t\,P_{\perp,t}\bigr),
\]
so $\langle\nabla f(W_t),\Delta W_t\rangle$ splits into a subspace piece and a complement piece. Steps~1--3 below bound the conditional expectations of the two inner-product pieces and the second-order term $\|\Delta W_t\|_{\mathrm{op}}^2$.

\medskip
\noindent\textbf{Step 1.}\;
We prove the lower bound
\begin{equation}\label{eq:polar_lowerbd}
    \E\bigl[\langle\nabla f(W_t),\,c_a\polar(L_t^{-1/2}G_t P_{\perp,t})\rangle\bigm|\mathcal F_t\bigr] \;\geq\; \tfrac{c_a}{C}\,\|A_t\|_* \;-\; 2c_a\sqrt{k}\,\sigma_F.
\end{equation}
Write $M \coloneqq A_t + Z_t = L_t^{-1/2}G_t P_{\perp,t}$. Since $P_{\perp,t}$ is a projection, the SVD of $M$ has right singular vectors in $\mathrm{range}(P_{\perp,t})$, so $\polar(M)\,P_{\perp,t} = \polar(M)$. Using $P_{\perp,t}^\top = P_{\perp,t}$ and trace cyclicity:
\begin{align*}
    \langle\nabla f(W_t),\,\polar(M)\rangle
    \;&=\; \langle\nabla f(W_t)\,P_{\perp,t},\,\polar(M)\rangle \\
    \;&=\; \langle L_t^{1/2}\,A_t,\,\polar(M)\rangle \\
    \;&=\; \underbrace{\langle L_t^{1/2}\,M,\,\polar(M)\rangle}_{=:\,\spadesuit_t} \;-\; \underbrace{\langle L_t^{1/2}\,Z_t,\,\polar(M)\rangle}_{=:\,\diamondsuit_t},
\end{align*}
where the last step uses $A_t = M - Z_t$.

\emph{Bounding $\spadesuit_t$ from below.}\;
Let $M = \sum_i \sigma_i\,u_i v_i^\top$ be the SVD, so $\polar(M) = \sum_i u_i v_i^\top$. Then
\[
    \spadesuit_t \;=\; \sum_i \sigma_i\,(u_i^\top L_t^{1/2} u_i)
    \;\geq\; \sum_i \sigma_i \cdot \tfrac{1}{C}
    \;=\; \tfrac{1}{C}\,\|M\|_*,
\]
where we used $u_i^\top L_t^{1/2} u_i \geq \sqrt{\lambda_{\min}(L_t)} \geq 1/C$ (Assumption~\ref{ass:clip}). Since the target~\eqref{eq:polar_lowerbd} is in terms of $\|A_t\|_*$ rather than $\|M\|_*$, we use the nuclear-norm triangle inequality $\|M\|_* = \|A_t + Z_t\|_* \geq \|A_t\|_* - \|Z_t\|_*$ and bound $\|Z_t\|_*$:
\begin{align*}
    \|Z_t\|_* \;&=\; \|L_t^{-1/2}\,N_t\,P_{\perp,t}\|_* \\
    \;&\leq\; \sqrt{\mathrm{rank}(Z_t)}\,\|Z_t\|_F \\
    \;&\leq\; \sqrt{\mathrm{rank}(Z_t)}\,\|L_t^{-1/2}\|_{\mathrm{op}}\,\|N_t\,P_{\perp,t}\|_F \\
    \;&\leq\; \sqrt{k}\,\|L_t^{-1/2}\|_{\mathrm{op}}\,\|N_t\|_F \\
    \;&\leq\; \sqrt{k}\,C\,\|N_t\|_F.
\end{align*}
The first inequality uses $\|X\|_* \leq \sqrt{\mathrm{rank}(X)}\,\|X\|_F$. The second splits the Frobenius norm via sub-multiplicativity $\|MN\|_F \leq \|M\|_{\mathrm{op}}\,\|N\|_F$. The third combines $\mathrm{rank}(Z_t) \leq \min(m, n{-}r) = k$ and $\|N_t P_{\perp,t}\|_F \leq \|N_t\|_F$ (since $\|P_{\perp,t}\|_{\mathrm{op}} \leq 1$). The last step uses $\|L_t^{-1/2}\|_{\mathrm{op}} = 1/\sqrt{\lambda_{\min}(L_t)} \leq C$ by Assumption~\ref{ass:clip}.
Combining,
\[
    \spadesuit_t \;\geq\; \tfrac{1}{C}\bigl(\|A_t\|_* - \|Z_t\|_*\bigr) \;\geq\; \tfrac{1}{C}\,\|A_t\|_* - \sqrt{k}\,\|N_t\|_F.
\]

\emph{Bounding $|\diamondsuit_t|$ from above.}\;
Since $\|\polar(M)\|_{\mathrm{op}} \leq 1$, the matrix H\"older inequality $|\langle X, Y\rangle| \leq \|X\|_*\,\|Y\|_{\mathrm{op}}$ gives
\[
    |\diamondsuit_t|
    \;\leq\; \|L_t^{1/2}\,Z_t\|_*
    \;=\; \|N_t\,P_{\perp,t}\|_*
    \;\leq\; \sqrt{k}\,\|N_t\,P_{\perp,t}\|_F
    \;\leq\; \sqrt{k}\,\|N_t\|_F,
\]
where the equality uses $L_t^{1/2}\,Z_t = L_t^{1/2}\cdot L_t^{-1/2}\,N_t\,P_{\perp,t} = N_t\,P_{\perp,t}$, and the next step uses $\|X\|_* \leq \sqrt{\mathrm{rank}(X)}\,\|X\|_F$ with $\mathrm{rank}(N_t P_{\perp,t}) \leq \min(m, n{-}r) = k$.

\emph{Combining.}\;
\[
    \langle\nabla f(W_t),\,\polar(M)\rangle \;=\; \spadesuit_t - \diamondsuit_t
    \;\geq\; \tfrac{1}{C}\,\|A_t\|_* - 2\sqrt{k}\,\|N_t\|_F.
\]
Multiplying by $c_a$, taking conditional expectation, and applying Jensen's inequality $\E[\|N_t\|_F\mid\mathcal F_t] \leq \sqrt{\E[\|N_t\|_F^2\mid\mathcal F_t]} \leq \sigma_F$ (Assumption~\ref{ass:noise}) yields~\eqref{eq:polar_lowerbd}.

\medskip
\noindent\textbf{Step 2.}\;
We prove the lower bound
\begin{equation}\label{eq:kl_lowerbd}
    \E\bigl[\langle\nabla f(W_t),\,\alpha_{\mathrm{kl}}\,L_t^{-1/2}G_tU_tS_t^{-1/2}U_t^\top\rangle\bigm|\mathcal F_t\bigr]
    \;\geq\; \tfrac{\alpha_{\mathrm{kl}}}{\sqrt{\Theta}}\,\|V_t\|_F^2.
\end{equation}
Since $L_t, S_t, U_t$ are $\mathcal F_t$-measurable and $\E[G_t\mid\mathcal F_t] = \nabla f(W_t)$, the conditional expectation of the left-hand side equals
\[
    \alpha_{\mathrm{kl}}\,\Tr\bigl(U_t^\top \nabla f(W_t)^\top\, L_t^{-1/2}\,\nabla f(W_t)\,U_t \cdot S_t^{-1/2}\bigr)
    \;=\; \alpha_{\mathrm{kl}}\,\Tr\bigl(V_t^\top V_t \cdot S_t^{-1/2}\bigr),
\]
where we used $V_t^\top V_t = U_t^\top \nabla f(W_t)^\top L_t^{-1/2}\,\nabla f(W_t)\,U_t$ (since $L_t^{-1/4}\cdot L_t^{-1/4} = L_t^{-1/2}$). Since $S_t^{-1/2} \succeq \Theta^{-1/2}\,I_r$ (from $\lambda_{\max}(S_t) \leq \Theta$, Lemma~\ref{lem:clamp_lb}) and $V_t^\top V_t \succeq 0$:
\[
    \Tr(V_t^\top V_t \cdot S_t^{-1/2})
    \;\geq\; \tfrac{1}{\sqrt{\Theta}}\,\Tr(V_t^\top V_t)
    \;=\; \tfrac{1}{\sqrt{\Theta}}\,\|V_t\|_F^2,
\]
proving~\eqref{eq:kl_lowerbd}.

\medskip
\noindent\textbf{Step 3.}\;
We prove the upper bound
\begin{equation}\label{eq:second_order_bound}
    \E\bigl[\|\Delta W_t\|_{\mathrm{op}}^2 \bigm| \mathcal F_t\bigr]
    \;\leq\; 2\alpha_{\mathrm{kl}}^2\,\sigma_{kl}^2 + 2c_a^2.
\end{equation}
By the triangle inequality for operator norm and $(a+b)^2 \leq 2a^2 + 2b^2$:
\[
    \|\Delta W_t\|_{\mathrm{op}}^2
    \;\leq\; 2\,\|\alpha_{\mathrm{kl}}\,L_t^{-1/2}G_tU_tS_t^{-1/2}U_t^\top\|_{\mathrm{op}}^2
    \;+\; 2\,\|c_a\,\polar(L_t^{-1/2}G_t P_{\perp,t})\|_{\mathrm{op}}^2.
\]
For the first term, $\|XU_t^\top\|_{\mathrm{op}} = \|X\|_{\mathrm{op}}$ since $U_t$ has orthonormal columns, so it equals $2\alpha_{\mathrm{kl}}^2\,\|L_t^{-1/2}G_tU_tS_t^{-1/2}\|_{\mathrm{op}}^2$. For the second term, $\|\polar(\cdot)\|_{\mathrm{op}} \leq 1$ deterministically gives $2c_a^2$. Taking conditional expectation and applying $\E[\|L_t^{-1/2}G_tU_tS_t^{-1/2}\|_{\mathrm{op}}^2 \mid \mathcal F_t] \leq \sigma_{kl}^2$ a.s. proves~\eqref{eq:second_order_bound}.

\medskip
\noindent\textbf{Step 4.}\;
Take conditional expectation of both sides of~\eqref{eq:polar_step0}. The inner product $\langle\nabla f(W_t),\Delta W_t\rangle$ splits according to~\eqref{eq:update}; applying~\eqref{eq:polar_lowerbd} (with sign flipped) and~\eqref{eq:kl_lowerbd} (with sign flipped):
\[
    \eta\,\E\bigl[\langle\nabla f(W_t),\,\Delta W_t\rangle\bigm|\mathcal F_t\bigr]
    \;\leq\;
    -\tfrac{\eta\,\alpha_{\mathrm{kl}}}{\sqrt{\Theta}}\,\|V_t\|_F^2 \;-\; \tfrac{\eta\,c_a}{C}\,\|A_t\|_* \;+\; 2\eta\,c_a\sqrt{k}\,\sigma_F.
\]
Applying~\eqref{eq:second_order_bound} to the second-order piece:
\[
    \tfrac{\eta^2 L_{\mathrm{op}}}{2}\,\E\bigl[\|\Delta W_t\|_{\mathrm{op}}^2\bigm|\mathcal F_t\bigr]
    \;\leq\;
    \eta^2 L_{\mathrm{op}}\,(\alpha_{\mathrm{kl}}^2\,\sigma_{kl}^2 + c_a^2).
\]
Substituting into~\eqref{eq:polar_step0} and recalling $A_t = L_t^{-1/2}\nabla f(W_t)\,P_{\perp,t}$ and $V_t = L_t^{-1/4}\nabla f(W_t)\,U_t$ yields~\eqref{eq:polar_descent}.
\end{proof}
\subsubsection{Scaling inequalities}

\begin{lemma}[Scaling inequalities]\label{lem:scaling}
Let $L\in\R^{m\times m}$ be SPD with $\lambda_{\max}(L)\leq\Theta$. For any $X\in\R^{m\times n}$:
\begin{enumerate}[label=(\roman*),leftmargin=2em,nosep]
\item $\|L^{-1/2}X\|_* \geq \Theta^{-1/2}\|X\|_*$.
\item $\|L^{-1/4}X\|_F^2 \geq \Theta^{-1/2}\|X\|_F^2$.
\end{enumerate}
\end{lemma}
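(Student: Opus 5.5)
Both parts rest on one observation: $\lambda_{\max}(L)\le\Theta$ means every eigenvalue of $L^{-1}$ is at least $\Theta^{-1}$. By the spectral theorem we then get the Loewner bounds $L^{-1/2}\succeq\Theta^{-1/2}I_m$ and $L^{-1}\succeq\Theta^{-1}I_m$, and more generally $L^{-s}\succeq\Theta^{-s}I_m$ for every $s>0$, since $x\mapsto x^{-s}$ is monotone on the positive reals.

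\emph{Part (ii).} This is the easier part, so we handle it first. Write
\[
\|L^{-1/4}X\|_F^2=\Tr\bigl(X^\top L^{-1/2}X\bigr).
\]
Because $L^{-1/2}-\Theta^{-1/2}I_m\succeq 0$, congruence by $X$ preserves positive semidefiniteness:
\[
X^\top\bigl(L^{-1/2}-\Theta^{-1/2}I_m\bigr)X\succeq 0 .
\]
The trace of a PSD matrix is nonnegative, so $\Tr(X^\top L^{-1/2}X)\ge\Theta^{-1/2}\Tr(X^\top X)=\Theta^{-1/2}\|X\|_F^2$. This is the same argument already used in Step~2 of the proof of Proposition~\ref{prop:polar_descent}.

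\emph{Part (i).} The nuclear norm is not a trace of a quadratic form, so the Loewner bound cannot be applied directly. Instead we factor $X=L^{1/2}(L^{-1/2}X)$ and use the submultiplicativity bound $\|AB\|_*\le\|A\|_{\mathrm{op}}\|B\|_*$, a consequence of $\sigma_i(AB)\le\|A\|_{\mathrm{op}}\sigma_i(B)$. This gives
\[
\|X\|_*\le\|L^{1/2}\|_{\mathrm{op}}\,\|L^{-1/2}X\|_*\le\sqrt{\Theta}\,\|L^{-1/2}X\|_*,
\]
and rearranging yields (i).

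An alternative route uses duality. Write $\|X\|_*=\langle X,\polar(X)\rangle=\langle L^{-1/2}X,\,L^{1/2}\polar(X)\rangle$ and apply matrix H\"older, noting that $\|L^{1/2}\polar(X)\|_{\mathrm{op}}\le\sqrt\Theta$. Either route works.

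The only step needing care is justifying the singular-value inequality $\sigma_i(AB)\le\|A\|_{\mathrm{op}}\sigma_i(B)$. It follows from Courant--Fischer, or directly from $B^\top A^\top AB\preceq\|A\|_{\mathrm{op}}^2B^\top B$ together with monotonicity of eigenvalues under the Loewner order. Nothing else in the argument is delicate.
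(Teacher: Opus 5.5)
Your proof is correct, and part (i) is exactly the paper's argument: factor $X = L^{1/2}(L^{-1/2}X)$ and apply $\|MN\|_*\le\|M\|_{\mathrm{op}}\|N\|_*$ with $\|L^{1/2}\|_{\mathrm{op}}=\sqrt{\lambda_{\max}(L)}\le\sqrt{\Theta}$. For part (ii) you use the Loewner bound $L^{-1/2}\succeq\Theta^{-1/2}I_m$ and congruence by $X$. The paper instead reuses the factorization device, writing $X = L^{1/4}(L^{-1/4}X)$ and applying $\|MN\|_F\le\|M\|_{\mathrm{op}}\|N\|_F$ with $\|L^{1/4}\|_{\mathrm{op}}^2=\sqrt{\lambda_{\max}(L)}$; both are one-line arguments, and the paper's choice simply handles (i) and (ii) with a single tool.
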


\begin{proof}
(i) The key step is sub-multiplicativity of the nuclear norm against the operator norm: $\|MN\|_* \leq \|M\|_{\mathrm{op}}\,\|N\|_*$ for any matrices $M, N$.
Apply this to $X = L^{1/2}\cdot L^{-1/2}X$:
\[
    \|X\|_* \;\leq\; \|L^{1/2}\|_{\mathrm{op}}\,\|L^{-1/2}X\|_* \;=\; \sqrt{\lambda_{\max}(L)}\,\|L^{-1/2}X\|_* \;\leq\; \sqrt{\Theta}\,\|L^{-1/2}X\|_*.
\]
Dividing by $\sqrt{\Theta}$ gives (i).

\medskip
(ii) The key step is sub-multiplicativity of the Frobenius norm: $\|MN\|_F \leq \|M\|_{\mathrm{op}}\,\|N\|_F$ for any matrices $M, N$. Squaring and applying to $X = L^{1/4}\cdot L^{-1/4}X$:
\[
    \|X\|_F^2 \;\leq\; \|L^{1/4}\|_{\mathrm{op}}^2\,\|L^{-1/4}X\|_F^2 \;=\; \sqrt{\lambda_{\max}(L)}\,\|L^{-1/4}X\|_F^2 \;\leq\; \sqrt{\Theta}\,\|L^{-1/4}X\|_F^2,
\]
using $\|L^{1/4}\|_{\mathrm{op}}^2 = \lambda_{\max}(L)^{1/2}$. Dividing by $\sqrt{\Theta}$ gives (ii).
\end{proof}

\subsubsection{Proof of Theorem~\ref{thm:polar_conv}}
\label{app:proof_polar_conv}

\begin{proof}[Proof of Theorem~\ref{thm:polar_conv}]
From Proposition~\ref{prop:polar_descent}, taking total expectation and writing $K \coloneqq c_a^2 + \alpha_{\mathrm{kl}}^2\,\sigma_{kl}^2$:
\begin{align}\label{eq:polar_desc_total}
\E[f(W_{t+1}) - f(W_t)]
\;\leq\;
&-\tfrac{\eta c_a}{C}\E\|L_t^{-1/2}\nabla f(W_t)\,P_{\perp,t}\|_*
- \tfrac{\eta\alpha_{\mathrm{kl}}}{\sqrt{\Theta}}\E\|L_t^{-1/4}\nabla f(W_t)\,U_t\|_F^2 \notag\\
&+ 2\eta c_a\sqrt k\sigma_F + \eta^2 L_{\mathrm{op}} K.
\end{align}

\medskip
\noindent\textbf{Removing the preconditioner.}\;
Apply Lemma~\ref{lem:scaling}(i) with $X = \nabla f(W_t)\,P_{\perp,t}$ and $L = L_t$:
\[
    \|L_t^{-1/2}\nabla f(W_t)\,P_{\perp,t}\|_* \;\geq\; \Theta^{-1/2}\,\|\nabla f(W_t)\,P_{\perp,t}\|_*.
\]
Apply Lemma~\ref{lem:scaling}(ii) with $X = \nabla f(W_t)\,U_t$:
\[
    \|L_t^{-1/4}\nabla f(W_t)\,U_t\|_F^2 \;\geq\; \Theta^{-1/2}\,\|\nabla f(W_t)\,U_t\|_F^2.
\]
Substituting into~\eqref{eq:polar_desc_total}:
\begin{align*}
\E[f(W_{t+1}) - f(W_t)]
\;\leq\;
&-\eta\,\E\!\left[\tfrac{c_a}{C\sqrt{\Theta}}\|\nabla f(W_t)\,P_{\perp,t}\|_* + \tfrac{\alpha_{\mathrm{kl}}}{\Theta}\|\nabla f(W_t)\,U_t\|_F^2\right] \\
&+ 2\eta c_a\sqrt k\sigma_F + \eta^2 L_{\mathrm{op}} K.
\end{align*}

\medskip
\noindent\textbf{Telescoping.}\;
Summing over $t=0,\ldots,T-1$, the left-hand side telescopes to $\E[f(W_T)-f(W_0)]\geq -\Delta_0$ (Assumption~\ref{ass:lb}). Rearranging and dividing by $\eta T$:
\begin{align*}
\frac{1}{T}\sum_{t=0}^{T-1}\E\!\left[\tfrac{c_a}{C\sqrt{\Theta}}\|\nabla f(W_t)\,P_{\perp,t}\|_* + \tfrac{\alpha_{\mathrm{kl}}}{\Theta}\|\nabla f(W_t)\,U_t\|_F^2\right]
\;\leq\;
\tfrac{\Delta_0}{\eta T} + 2c_a\sqrt k\sigma_F + \eta L_{\mathrm{op}} K.
\end{align*}

\medskip
\noindent\textbf{Optimizing the step size.}\;
The $\eta$-dependent terms on the right are $\Delta_0/(\eta T) + \eta L_{\mathrm{op}} K$. Their product $\Delta_0 L_{\mathrm{op}} K / T$ is independent of $\eta$. The sum is minimized at $\eta = \sqrt{\Delta_0/(T L_{\mathrm{op}} K)}$, giving $\Delta_0/(\eta T) = \eta L_{\mathrm{op}} K = \sqrt{\Delta_0 L_{\mathrm{op}} K / T}$. Substituting yields~\eqref{eq:polar_conv}.
\end{proof}

\subsubsection{Proof of Lemma~\ref{lem:equiv}}

\begin{proof}[Proof of Lemma~\ref{lem:equiv}]
$(\Leftarrow)$ trivial.
$(\Rightarrow)$ both summands are non-negative, so both vanish.
$\|\nabla f(W)\,P_\perp\|_* = 0$ gives $\nabla f(W)\,P_\perp = 0$;
$\|\nabla f(W)\,U\|_F = 0$ gives $\nabla f(W)\,U = 0$.
Therefore $\nabla f(W) = \nabla f(W)(UU^\top + P_\perp) = \nabla f(W)\,U\,U^\top + \nabla f(W)\,P_\perp = 0$.
\end{proof}

\subsubsection{Interpreting \texorpdfstring{$\sigma_{kl}^2$}{sigma\_kl^2}}
\label{rem:sigma_kl}

$\sigma_{kl}^2$ admits a worst-case upper bound and a stationary reference value for the integrand:
\begin{itemize}[leftmargin=*,itemsep=1pt,topsep=2pt]
\item \textbf{Worst case (via Assumption~\ref{ass:clip} and Assumption~\ref{ass:bdg}).} Submultiplicativity gives
\[
    \|L^{-1/2}GUS^{-1/2}\|_{\mathrm{op}} \;\leq\; \|L^{-1/2}\|_{\mathrm{op}}\,\|G\|_{\mathrm{op}}\,\|S^{-1/2}\|_{\mathrm{op}} \;\leq\; C\cdot G_{\max}\cdot C \;=\; C^2 G_{\max},
\]
hence $\sigma_{kl}^2\leq C^4 G_{\max}^2$.
\item \textbf{At restricted KL stationarity (Claim~\ref{claim:stat}).} For any subspace $U$, the $S$-stationarity condition~\eqref{eq:restr_S} reads $S^* = \tfrac{1}{m}\E[U^\top G^\top \Lr^{-1}G\,U]$, where $S^*$ and $L^*_{\mathrm{restr}}$ are the corresponding optimal preconditioners for that $U$. Evaluating $\sigma_{kl}^2$ at $(L_t, S_t) = (\Lr, S^*)$ for the algorithm's current $U_t = U$:
\[
    \E\!\left[\|\Lr^{-1/2}GU\,S^{*-1/2}\|_{\mathrm{op}}^2\right] \;\leq\; \E\!\left[\|\Lr^{-1/2}GU\,S^{*-1/2}\|_F^2\right] \;=\; \Tr(S^{*-1}\cdot mS^*) \;=\; mr,
\]
where the inequality uses $\|X\|_{\mathrm{op}}^2 \leq \|X\|_F^2$, and the equality uses trace cyclicity together with the $S$-stationarity. Thus $mr$ is a \emph{stationary reference value} for the integrand---independent of $C$---rather than a bound on the global constant $\sigma_{kl}^2$, which by definition takes an essential supremum over all $t$. In our practical algorithm (\S\ref{app:practical}), each preconditioner's eigenvalues are clipped at a dimension-aware ceiling of the form $\max(10, \min(\mathrm{dim}, 4000))$; uniformly across all preconditioners, $C$ is therefore of order $\max(m, n)$ for the layer dimensions of interest. The worst-case bound $C^4 G_{\max}^2$ thus scales as $\max(m,n)^4$, while the stationary reference value $mr$ is linear in $m$.
\end{itemize}
Because the algorithm's EMA updates in Algorithm~\ref{alg:practical} track Claim~\ref{claim:stat}'s fixed point, we expect $\sigma_{kl}^2 = O(mr)$ whenever the algorithm's state is close to the restricted KL stationary point; a quantitative bound along the trajectory requires a tracking analysis of the EMA and is left to future work.

\subsubsection{Relation to Frobenius-norm stationarity}
\label{rem:frob_conversion}

The left-hand side of~\eqref{eq:polar_conv} is \emph{state-dependent}: it reads the gradient through the algorithm's own subspace decomposition $(U_t, P_{\perp,t})$. It is nevertheless a genuine stationarity measure by Lemma~\ref{lem:equiv}. The irreducible noise floor $2c_a\sqrt k\,\sigma_F$ is linear in $\sigma_F$, a feature inherited from the sign-SGD--style argument~\citep{bernstein2018signsgd} underlying the orthogonalization identity; this differs from the quadratic-in-$\sigma_F$ floor in analyses where the update is a positive-definite linear function of the gradient (cf.~Theorem~\ref{thm:naive_conv} below, which has no floor after step-size balancing).

A crude conversion to Frobenius stationarity uses $\|X\|_F^2\leq\|X\|_{\mathrm{op}}\|X\|_*\leq G_{\max}\|X\|_*$ on the complement piece, and divides through by the weight $\alpha_{\mathrm{kl}}/\Theta$ on the subspace piece, giving
\[
\frac{1}{T}\sum_t\E\|\nabla f(W_t)\|_F^2 \;\leq\; 2\max\!\left(G_{\max}\tfrac{C\sqrt{\Theta}}{c_a},\,\tfrac{\Theta}{\alpha_{\mathrm{kl}}}\right)\!\left[2\sqrt{\tfrac{\Delta_0 L_{\mathrm{op}}K}{T}} + 2c_a\sqrt k\,\sigma_F\right].
\]
The constant in front of the bracket is loose---each of its two arguments scales at least as $C^2$ since $\Theta$ is of order $C^2$ and $\sqrt\Theta$ of order $C$---so the Frobenius-converted bound is reported only for comparison; the mixed-norm bound~\eqref{eq:polar_conv} is the result we establish.

\paragraph{Rate asymmetry and noise floor.}
The complement's nuclear-norm stationarity converges at $O(T^{-1/2})$, while the subspace's Frobenius-norm stationarity converges at $O(T^{-1/4})$ (taking the square root of the squared-Frobenius term). This does not imply smaller $r$ is preferable: the noise floor $2c_a\sqrt{k}\,\sigma_F$ grows as $r$ shrinks ($k = \min(m, n{-}r)$), offsetting the faster complement rate. The floor is intrinsic to orthogonalization: as a nonlinear operation, it creates an irreducible bias from gradient noise that does not vanish with the step size. The subspace update, being linear in the gradient, has no such floor (cf.\ Theorem~\ref{thm:naive_conv}).

\subsection{Technical companion: Smok-Hop}
\label{app:naive_conv_sub}

Smok-Hop (\S\ref{sec:stationarity}) replaces the polar orthogonalization in line~\ref{line:comp} of Algorithm~\ref{alg:ideal} with the scalar scaling $\mu_{\perp}^{-1/2}\,L^{-1/2}\,G_\perp$, giving the update direction $\Delta W = L^{-1/2}\,G\,\hat R^{-1/2}$. This is a linear function of $G$ and admits a simpler analysis than Pro-KLShampoo's polar update. We include the convergence result here for comparison; it is \emph{not} the algorithm used in practice.

This subsection replaces operator-norm smoothness with the stronger Frobenius smoothness:

\noindent\textbf{(ii)$'$}\label{ass:smooth_F}~~$L_F$-Frobenius smoothness: $f(W')\leq f(W)+\langle\nabla f(W),W'-W\rangle + \tfrac{L_F}{2}\|W'-W\|_F^2$.

\begin{theorem}[Convergence of Smok-Hop]\label{thm:naive_conv}
Define $\sigma_P^2$ as the conditional second moment (in Frobenius norm) of the preconditioned stochastic gradient:
\begin{equation}\label{eq:sigma_P_def}
    \sigma_P^2 \;\coloneqq\; \sup_{t \geq 0}\,\operatorname{ess\,sup}\, \E\!\left[\bigl\|L_t^{-1/2}\,G_t\,\hat R_t^{-1/2}\bigr\|_F^2 \,\bigm|\, \mathcal F_t\right],
\end{equation}
where $\hat R_t = U_t S_t U_t^\top + \mu_{\perp,t}\,P_{\perp,t}$ is the spike-and-flat right-side preconditioner at step $t$.
Under Assumptions~\ref{ass:lb}, \ref{ass:noise}, \ref{ass:bdg}, \ref{ass:clip} and~(ii)$'$ above, for any $T \geq 1$ and step size $\eta = \eta_0/\sqrt T$ with $\eta_0 > 0$:
\begin{equation}\label{eq:naive_conv}
    \frac{1}{T}\sum_{t=0}^{T-1}\E\|\nabla f(W_t)\|_F^2
    \;\leq\;
    \frac{\Theta\,\Delta_0}{\eta_0\,\sqrt T}
    \;+\;
    \frac{\eta_0\,L_F\,\Theta\,\sigma_P^2}{2\sqrt T},
\end{equation}
where $\Delta_0 \coloneqq f(W_0) - f^*$.
The balanced choice $\eta_0^\star = \sqrt{2\Delta_0/(L_F \sigma_P^2)}$ gives the rate
\[
    \frac{1}{T}\sum_{t=0}^{T-1}\E\|\nabla f(W_t)\|_F^2 \;=\; O\!\left(\Theta \sqrt{\frac{L_F \Delta_0 \sigma_P^2}{T}}\right).
\]
The clip threshold $C$ enters the rate only through $\sigma_P^2$ (see~\S\ref{rem:sigma_P} for reference values).
\end{theorem}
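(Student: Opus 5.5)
The plan is the standard smoothness-plus-telescoping argument, which works here because the Smok-Hop direction $\Delta W_t = -L_t^{-1/2} G_t \hat R_t^{-1/2}$ is linear in $G_t$ and its coefficients are $\mathcal F_t$-measurable.

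\textbf{Step 1 (one-step descent).} Apply Frobenius smoothness (ii)$'$ to $W_{t+1} = W_t + \eta\Delta W_t$:
\[
f(W_{t+1}) \le f(W_t) - \eta\langle \nabla f(W_t), L_t^{-1/2} G_t \hat R_t^{-1/2}\rangle + \tfrac{\eta^2 L_F}{2}\|L_t^{-1/2}G_t\hat R_t^{-1/2}\|_F^2 .
\]
Take $\E[\cdot\mid\mathcal F_t]$. Unbiasedness and measurability of $(L_t,\hat R_t)$ turn the inner-product term into $-\eta\,\Tr\bigl(\nabla f^\top L_t^{-1/2}\nabla f\,\hat R_t^{-1/2}\bigr)$. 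The quadratic term is bounded by $\tfrac{\eta^2 L_F}{2}\sigma_P^2$ directly from the definition \eqref{eq:sigma_P_def}.

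\textbf{Step 2 (lower bound on the descent term).} Lemma~\ref{lem:clamp_lb} gives $\lambda_{\max}(L_t)\le\Theta$. The eigenvalues of $\hat R_t$ are those of $S_t$ together with $\mu_{\perp,t}$, all at most $\Theta$, so $\lambda_{\max}(\hat R_t)\le\Theta$ as well. Write the trace as $\|L_t^{-1/4}\nabla f(W_t)\hat R_t^{-1/4}\|_F^2$. Then apply sub-multiplicativity twice, exactly as in Lemma~\ref{lem:scaling}(ii), once on each side:
\[
\|X\|_F^2 \le \|L_t^{1/4}\|_{\mathrm{op}}^2\,\|\hat R_t^{1/4}\|_{\mathrm{op}}^2\,\|L_t^{-1/4}X\hat R_t^{-1/4}\|_F^2 \le \Theta\,\|L_t^{-1/4}X\hat R_t^{-1/4}\|_F^2 .
\]
This yields
\[
\E[f(W_{t+1})-f(W_t)\mid\mathcal F_t] \le -\tfrac{\eta}{\Theta}\|\nabla f(W_t)\|_F^2 + \tfrac{\eta^2L_F}{2}\sigma_P^2 .
\]

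\textbf{Step 3 (telescoping).} Take total expectation, sum over $t$, and use $f(W_T)\ge f^*$ to get
\[
\tfrac{1}{T}\sum_t\E\|\nabla f(W_t)\|_F^2 \le \tfrac{\Theta\Delta_0}{\eta T} + \tfrac{\eta L_F\Theta\sigma_P^2}{2}.
\]
Substituting $\eta=\eta_0/\sqrt T$ gives \eqref{eq:naive_conv}. Minimizing $\Delta_0/\eta_0 + \eta_0L_F\sigma_P^2/2$ over $\eta_0$ gives $\eta_0^\star=\sqrt{2\Delta_0/(L_F\sigma_P^2)}$ and the stated $O\bigl(\Theta\sqrt{L_F\Delta_0\sigma_P^2/T}\bigr)$ rate.

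\textbf{Main obstacle.} The only nontrivial point is Step 2, the two-sided removal of the preconditioner. I expect this to be mild: it needs the upper bound on $\hat R_t$ through its block structure, and Lemma~\ref{lem:clamp_lb} bounds $S_t$ and $\mu_{\perp,t}$ separately. I also need to check that $\sigma_P^2$ is finite; the clip gives $\sigma_P^2\le C^4 G_{\max}^2\min(m,n)$, but the theorem takes $\sigma_P^2$ as given, so this is only a sanity check.
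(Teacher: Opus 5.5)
Your proposal is correct and follows the paper's proof: Frobenius smoothness, conditional expectation using linearity in $G_t$, the quadratic term bounded by $\sigma_P^2$, and telescoping with $\eta=\eta_0/\sqrt T$. Your Step 2 gives the same $\Theta^{-1}$ bound as the paper, which writes it as $\lambda_{\min}\bigl(\hat R_t^{-1/2}\otimes L_t^{-1/2}\bigr)\ge\Theta^{-1}$ rather than as two-sided sub-multiplicativity.
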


\subsubsection{Proof of Theorem~\ref{thm:naive_conv}}
\label{app:proof_naive_conv}

Throughout this subsubsection, the scalar-complement update direction at step $t$ is
\[
    \Delta W_t \;\coloneqq\; L_t^{-1/2}\,G_t\,\hat R_t^{-1/2},
    \qquad \hat R_t \coloneqq U_t S_t U_t^\top + \mu_{\perp,t} P_{\perp,t},
\]
so that $W_{t+1} = W_t - \eta\,\Delta W_t$. (This is the scalar-complement update; the Pro-KLShampoo update direction in~\S\ref{app:polar_conv} is different.)

\medskip
\noindent\textbf{Linear structure.}\;
The map $G_t \mapsto \Delta W_t$ is linear in $G_t$; its $\mathrm{vec}$ form is multiplication by the matrix $P_t \coloneqq \hat R_t^{-1/2}\otimes L_t^{-1/2}$, which is symmetric positive definite with eigenvalues in $[\Theta^{-1}, C^2]$ (the lower bound is from Lemma~\ref{lem:clamp_lb}, the upper bound from Assumption~\ref{ass:clip}). The lower bound on $\lambda_{\min}(P_t)$ implies, for any matrix $M$ of the same shape as $G_t$,
\begin{equation}\label{eq:D_lb}
    \langle M,\,L_t^{-1/2}\,M\,\hat R_t^{-1/2}\rangle
    \;=\; \langle \mathrm{vec}(M),\,P_t\,\mathrm{vec}(M)\rangle
    \;\geq\; \Theta^{-1}\|M\|_F^2.
\end{equation}

\medskip
\noindent\textbf{Single-step descent.}\;
By Frobenius smoothness~(ii)$'$, with $W_{t+1} - W_t = -\eta\,\Delta W_t$:
\[
    f(W_{t+1}) \;\leq\; f(W_t) - \eta\,\langle\nabla f(W_t),\,\Delta W_t\rangle + \tfrac{\eta^2 L_F}{2}\,\|\Delta W_t\|_F^2.
\]
Take conditional expectation given $\mathcal F_t$ (so $L_t, S_t, U_t, \mu_{\perp,t}$ are measurable; only $G_t$ is random). Since $\Delta W_t$ is linear in $G_t$ and $\E[G_t\mid\mathcal F_t] = \nabla f(W_t)$ (Assumption~\ref{ass:noise}),
\[
    \E[\Delta W_t \mid \mathcal F_t] \;=\; L_t^{-1/2}\,\nabla f(W_t)\,\hat R_t^{-1/2}.
\]
Apply~\eqref{eq:D_lb} with $M = \nabla f(W_t)$ to the first-order term, and bound the second-order term by $\E[\|\Delta W_t\|_F^2\mid\mathcal F_t] \leq \sigma_P^2$ from~\eqref{eq:sigma_P_def}:
\begin{equation}\label{eq:onestep_descent}
    \E[f(W_{t+1})\mid\mathcal F_t]
    \;\leq\; f(W_t) - \tfrac{\eta}{\Theta}\,\|\nabla f(W_t)\|_F^2 + \tfrac{\eta^2 L_F}{2}\,\sigma_P^2.
\end{equation}

\medskip
\noindent\textbf{Telescoping.}\;
Taking total expectation, summing over $t = 0,\ldots,T-1$, and using $\E[f(W_T)] \geq f^*$:
\[
    \tfrac{\eta}{\Theta}\sum_{t=0}^{T-1}\E\|\nabla f(W_t)\|_F^2 \;\leq\; \Delta_0 + \tfrac{T\eta^2 L_F\sigma_P^2}{2}.
\]
Substituting $\eta = \eta_0/\sqrt T$ and dividing by $T\eta/\Theta = \eta_0\sqrt T/\Theta$:
\[
    \frac{1}{T}\sum_{t=0}^{T-1}\E\|\nabla f(W_t)\|_F^2
    \;\leq\; \frac{\Theta\Delta_0}{\eta_0\sqrt T} + \frac{\eta_0 L_F\,\Theta\,\sigma_P^2}{2\sqrt T},
\]
which is~\eqref{eq:naive_conv}. The balanced choice $\eta_0^\star = \sqrt{2\Delta_0/(L_F\sigma_P^2)}$ minimizes the right-hand side, giving the rate $\Theta\sqrt{2 L_F\Delta_0\sigma_P^2/T}$.\qed

\subsubsection{Interpreting \texorpdfstring{$\sigma_P^2$}{sigma\_P^2}}
\label{rem:sigma_P}

From its definition~\eqref{eq:sigma_P_def}, $\sigma_P^2$ is the conditional second moment (in Frobenius norm) of the preconditioned stochastic gradient. We give two reference values:
\begin{itemize}[leftmargin=*,itemsep=1pt,topsep=2pt]
    \item \textbf{Worst case (Assumption~\ref{ass:clip}).} The spectral bounds $\|L_t^{-1/2}\|_{\mathrm{op}}, \|\hat R_t^{-1/2}\|_{\mathrm{op}} \leq C$ give $\|L_t^{-1/2}G_t\hat R_t^{-1/2}\|_F^2 \leq C^4\|G_t\|_F^2$, and Assumptions~\ref{ass:bdg} and~\ref{ass:noise} imply $\E\|G_t\|_F^2 \leq \sigma_F^2 + \min(m,n)\,G_{\max}^2$. Hence $\sigma_P^2 \leq C^4(\sigma_F^2 + \min(m,n)G_{\max}^2)$.
    \item \textbf{Restricted KL stationarity (Claim~\ref{claim:stat}).} At the restricted KL fixed point $(\Lr, \hat R^*)$, the L-stationarity condition~\eqref{eq:restr_L} reads $\Lr = \tfrac{1}{n}\,\E[G\,\hat R^{*-1}\,G^\top]$. By trace cyclicity,
    \begin{equation}\label{eq:sigmaP_stationary}
        \E\!\left[\bigl\|\Lr^{-1/2}\,G\,\hat R^{*-1/2}\bigr\|_F^2\right]
        \;=\; \Tr\!\bigl(\Lr^{-1}\,\E[G\,\hat R^{*-1}\,G^\top]\bigr)
        \;=\; n\,\Tr(I_m)
        \;=\; mn.
    \end{equation}
    Hence $\sigma_P^2 = mn$ \emph{exactly} at stationarity---independent of $C$.
\end{itemize}
Because the algorithm's EMA updates in Algorithm~\ref{alg:practical} track Claim~\ref{claim:stat}'s fixed point, we expect $\sigma_P^2 = O(mn)$ whenever the algorithm's state is close to the restricted KL stationary point. A quantitative bound on $\sigma_P^2$ along the entire trajectory requires a tracking analysis of the EMA and is left to future work.

\subsection{Calibration of the mixing weight $\alpha_{\mathrm{kl}}$}
\label{app:lambda_star}

This subsection derives the closed-form magnitude estimate of $\alpha_{\mathrm{kl}}$ used in~\S\ref{sec:polar}. The principle is to choose $\alpha_{\mathrm{kl}}$ so that the relative scale between the subspace and complement components of the Pro-KLShampoo update~\eqref{eq:update} matches that of the scalar-complement update~\eqref{eq:restr_update} at the restricted KL stationary state. This way, replacing the scalar-complement whitening on the complement with polar does not, in expectation, alter the relative scale between subspace and complement prescribed by the restricted-KL solution. Operator norm is the natural matching norm: the polar map yields a partial isometry, so its output has a deterministic operator norm. Claim~\ref{claim:stat} determines $\E\|\Delta_{\mathrm{res}}^{\mathrm{naive}}\|_F^2$ but not $\E\|\Delta_{\mathrm{res}}^{\mathrm{naive}}\|_{\mathrm{op}}^2$; combining this with the bound $\mathrm{rank}(\Delta_{\mathrm{res}}^{\mathrm{naive}}) \leq k$ a.s.\ gives a bracketed range on $\alpha_{\mathrm{kl}}^*$.

\paragraph{Update components at the stationary state.}
At the restricted KL stationary state, write the subspace component (shared by both updates) and the two complement components as
\begin{align*}
    \Delta_{\mathrm{kl}} &\coloneqq (\Lr)^{-1/2}\,G\,U^*(S^*)^{-1/2}\,U^{*\top},\\
    \Delta_{\mathrm{res}}^{\mathrm{naive}} &\coloneqq (\mu_\perp^*)^{-1/2}\,(\Lr)^{-1/2}\,G_\perp, &\text{(scalar-complement, scalar-whitened)}\\
    \Delta_{\mathrm{res}}^{\mathrm{polar}} &\coloneqq c_a\,\polar\!\bigl((\Lr)^{-1/2}\,G_\perp\bigr), &\text{(Pro-KLShampoo, orthogonalized)}
\end{align*}
with $c_a = \sqrt{\max(1, m/n)}$. The scalar-complement update direction is $\Delta_{\mathrm{kl}} + \Delta_{\mathrm{res}}^{\mathrm{naive}}$; the Pro-KLShampoo update direction is $\alpha_{\mathrm{kl}}\,\Delta_{\mathrm{kl}} + \Delta_{\mathrm{res}}^{\mathrm{polar}}$. Since $\Delta_{\mathrm{kl}}$ is identical in both, the calibration only needs to match the complement components in scale (under a chosen unitarily invariant norm; in expectation where applicable).

\paragraph{Polar complement: deterministic norms.}
The polar map produces a partial isometry: $\polar(X) = U_X V_X^\top$ where $X = U_X\Sigma_X V_X^\top$ is the SVD. Write $\bar m \coloneqq \min(m,n)$ and $\bar n \coloneqq \max(m,n)$. The input to polar is $(\Lr)^{-1/2}G_\perp \in \R^{m\times n}$ in the right projection case ($m \leq n$) and $G_\perp\,(R^*)^{-1/2} \in \R^{m\times n}$ in the left projection case ($m > n$, Remark~\ref{rem:left_proj}); under any absolutely continuous gradient distribution it has rank
\[
    k \;\coloneqq\; \min(\bar m,\,\bar n - r) \;=\; \begin{cases} \min(m,\,n-r), & m \leq n, \\ \min(m-r,\,n), & m > n, \end{cases}
\]
almost surely. Hence both norms of the polar update are deterministic:
\begin{equation}\label{eq:polar_norms}
    \|\Delta_{\mathrm{res}}^{\mathrm{polar}}\|_F \;=\; c_a\sqrt{k},
    \qquad
    \|\Delta_{\mathrm{res}}^{\mathrm{polar}}\|_{\mathrm{op}} \;=\; c_a,
\end{equation}
where $c_a = \sqrt{\max(1, m/n)}$ is computed from the original matrix dimensions (matching the implementation).

\paragraph{Scalar-complement: norms in expectation.}
For the scalar-complement,
\[
    \|\Delta_{\mathrm{res}}^{\mathrm{naive}}\|_F^2
    \;=\; (\mu_\perp^*)^{-1}\,\Tr\!\bigl(G_\perp^\top (\Lr)^{-1}\,G_\perp\bigr).
\]
At the restricted KL stationary state, $\mu_\perp$-stationarity~\eqref{eq:restr_mu} gives $\Tr\!\bigl(\E[G_\perp^\top (\Lr)^{-1} G_\perp]\bigr) = m(n-r)\,\mu_\perp^*$, hence $\E\|\Delta_{\mathrm{res}}^{\mathrm{naive}}\|_F^2 = m(n-r)$ for $m \leq n$. The left projection case ($m > n$) is symmetric under $L \leftrightarrow R$ and gives $n(m-r)$. Uniformly,
\begin{equation}\label{eq:naive_frob_exp}
    \sqrt{\E\|\Delta_{\mathrm{res}}^{\mathrm{naive}}\|_F^2} \;=\; \sqrt{\bar m(\bar n - r)}.
\end{equation}
For the operator norm, since $\mathrm{rank}(\Delta_{\mathrm{res}}^{\mathrm{naive}}) \leq k$ a.s., the inequalities $\|X\|_F^2/\mathrm{rank}(X) \leq \|X\|_{\mathrm{op}}^2 \leq \|X\|_F^2$ give
\begin{equation}\label{eq:naive_op_brackets}
    \frac{\bar m(\bar n - r)}{k} \;\leq\; \E\|\Delta_{\mathrm{res}}^{\mathrm{naive}}\|_{\mathrm{op}}^2 \;\leq\; \bar m(\bar n - r),
\end{equation}
with the lower (resp.\ upper) bound attained when the singular values of $\Delta_{\mathrm{res}}^{\mathrm{naive}}$ are uniformly spread (resp.\ concentrated in one direction).

\paragraph{Matching condition: bracketed range.}
We choose $\alpha_{\mathrm{kl}}^*$ so that the subspace-to-complement norm ratio is preserved across the two update rules, with $\Delta_{\mathrm{kl}}$ shared:
\[
    \frac{\|\alpha_{\mathrm{kl}}^*\,\Delta_{\mathrm{kl}}\|_\diamond}{\|\Delta_{\mathrm{res}}^{\mathrm{polar}}\|_\diamond}
    \;=\;
    \frac{\|\Delta_{\mathrm{kl}}\|_\diamond}{\sqrt{\E\|\Delta_{\mathrm{res}}^{\mathrm{naive}}\|_\diamond^2}}
    \quad\text{if and only if}\quad
    \alpha_{\mathrm{kl}}^* \;=\; \frac{\|\Delta_{\mathrm{res}}^{\mathrm{polar}}\|_\diamond}{\sqrt{\E\|\Delta_{\mathrm{res}}^{\mathrm{naive}}\|_\diamond^2}},
\]
where $\|\cdot\|_\diamond$ is a unitarily invariant norm.

The natural matching norm is the operator norm: by~\eqref{eq:polar_norms}, $\|\Delta_{\mathrm{res}}^{\mathrm{polar}}\|_{\mathrm{op}} = c_a$ deterministically, regardless of the gradient's singular value distribution. Under operator-norm matching,
\[
    \alpha_{\mathrm{kl}}^* \;=\; \frac{c_a}{\sqrt{\E\|\Delta_{\mathrm{res}}^{\mathrm{naive}}\|_{\mathrm{op}}^2}}.
\]
Claim~\ref{claim:stat} determines $\E\|\Delta_{\mathrm{res}}^{\mathrm{naive}}\|_F^2$ but not $\E\|\Delta_{\mathrm{res}}^{\mathrm{naive}}\|_{\mathrm{op}}^2$. Substituting the bracket~\eqref{eq:naive_op_brackets} into the operator-norm matching formula gives
\begin{equation}\label{eq:lambda_star_app}
    \boxed{\;
    \frac{c_a}{\sqrt{\bar m(\bar n - r)}}
    \;\leq\; \alpha_{\mathrm{kl}}^* \;\leq\;
    \frac{c_a\sqrt{k}}{\sqrt{\bar m(\bar n - r)}}.
    \;}
\end{equation}
The lower bound corresponds to a maximally concentrated scalar-complement update (Frobenius mass in a single direction; op-norm equals Frobenius); the upper bound corresponds to a uniformly spread one (op-norm equals Frobenius$/\sqrt{k}$). The upper bound coincides with the simpler \emph{Frobenius matching} formula $\alpha_{\mathrm{kl}}^* = c_a\sqrt{k}/\sqrt{\bar m(\bar n - r)}$, recovered from~\eqref{eq:polar_norms}--\eqref{eq:naive_frob_exp} by setting $\|\cdot\|_\diamond = \|\cdot\|_F$. The bracket has multiplicative width $\sqrt{k}$. For square layers ($m = n$, so $\bar m = \bar n = n$, $c_a = 1$, $k = n-r$), it reduces to $1/\sqrt{n(n-r)} \leq \alpha_{\mathrm{kl}}^* \leq 1/\sqrt{n}$.

\paragraph{Numerical values.}
Evaluating the bracket~\eqref{eq:lambda_star_app} at $r = 128$ for the four model configurations used in our experiments:

\begin{center}
\small
\begin{tabular}{lcccccc}
\toprule
Layer & $m$ & $n$ & $r$ & $c_a$ & $k$ & bracket on $\alpha_{\mathrm{kl}}^*$ \\
\midrule
\multicolumn{7}{l}{\textit{GPT-2 124M} ($d{=}768$, MLP intermediate $4d$, 12 layers)} \\
Attention QKV / out         & 768  & 768  & 128 & $1$            & $640$  & $[0.0014,\;0.036]$ \\
MLP up   ($4d \times d$)    & 3072 & 768  & 128 & $2$            & $768$  & $[0.0013,\;0.037]$ \\
MLP down ($d \times 4d$)    & 768  & 3072 & 128 & $1$            & $768$  & $[0.00067,\;0.018]$ \\
\midrule
\multicolumn{7}{l}{\textit{GPT-medium 350M} ($d{=}1024$, MLP intermediate $4d$, 24 layers)} \\
Attention QKV / out         & 1024 & 1024 & 128 & $1$            & $896$  & $[0.0010,\;0.031]$ \\
MLP up   ($4d \times d$)    & 4096 & 1024 & 128 & $2$            & $1024$ & $[0.0010,\;0.032]$ \\
MLP down ($d \times 4d$)    & 1024 & 4096 & 128 & $1$            & $1024$ & $[0.00050,\;0.016]$ \\
\midrule
\multicolumn{7}{l}{\textit{LLaMA-134M} ($d{=}768$, MLP intermediate $e{=}2048$, 12 layers, SwiGLU)} \\
Attention Q/K/V/O           & 768  & 768  & 128 & $1$            & $640$  & $[0.0014,\;0.036]$ \\
MLP gate / up ($e \times d$)& 2048 & 768  & 128 & $\sqrt{8/3}$   & $768$  & $[0.0013,\;0.037]$ \\
MLP down ($d \times e$)     & 768  & 2048 & 128 & $1$            & $768$  & $[0.00082,\;0.023]$ \\
\midrule
\multicolumn{7}{l}{\textit{LLaMA-450M} ($d{=}1024$, MLP intermediate $e{=}2816$, 30 layers, SwiGLU)} \\
Attention Q/K/V/O           & 1024 & 1024 & 128 & $1$            & $896$  & $[0.0010,\;0.031]$ \\
MLP gate / up ($e \times d$)& 2816 & 1024 & 128 & $\sqrt{11/4}$  & $1024$ & $[0.0010,\;0.032]$ \\
MLP down ($d \times e$)     & 1024 & 2816 & 128 & $1$            & $1024$ & $[0.00060,\;0.019]$ \\
\bottomrule
\end{tabular}
\end{center}

In our experiments we sweep $\alpha_{\mathrm{kl}} \in \{0.005,\,0.01,\,0.015\}$ as a single value shared across layers (\S\ref{sec:polar}). Intersecting the per-layer brackets in the table gives the all-layer-feasible interval $[\max_i \mathrm{lower}_i,\;\min_i \mathrm{upper}_i] \approx [1.4\times 10^{-3},\;1.6\times 10^{-2}]$; the swept values $\{0.005, 0.01, 0.015\}$ all sit inside this intersection. The swept optimum lies between $0.005$ and $0.01$, varying across the four model configurations. Pinning $\alpha_{\mathrm{kl}}^*$ to a point would require characterizing the singular value concentration of $\Delta_{\mathrm{res}}^{\mathrm{naive}}$ along the trajectory, which we leave to future work.

\newpage
\section{Practical algorithm}
\label{app:practical}

Algorithm~\ref{alg:practical} below is the implementation form actually used in our experiments. The three practical additions from Algorithm~\ref{alg:ideal} are (P1) Nesterov momentum, (P2) damping and clipping in the inverse-square-roots, and (P3) Newton--Schulz approximation to the polar factor; the eigenvalue EMA and subspace-tracking step mirror Algorithm~\ref{alg:ideal} and are inlined into the algorithm below. We present the case $m \leq n$; $m > n$ is symmetric (Remark~\ref{rem:left_proj}). For numerical stability under bfloat16 we store $\lambda_L^{\odot -1/2}, \lambda_S^{\odot -1/2}, \mu_\perp^{-1/2}$ rather than $\lambda_L, \lambda_S, \mu_\perp$~\citep{lin2025understanding}.

\paragraph{State.}
Per parameter $W \in \R^{m \times n}$: momentum buffer $M$, subspace basis $U \in \R^{n \times r}$ with $U^\top U = I_r$, left factor $L \in \Spp^m$ and subspace factor $S \in \Spp^r$ with eigendecompositions $L = Q_L\,\Diag(\lambda_L)\,Q_L^\top$ and $S = Q_S\,\Diag(\lambda_S)\,Q_S^\top$, complement scalar $\mu_\perp > 0$.

\paragraph{Hyperparameter defaults.}
$\mu = \beta_2 = 0.95$ (momentum and EMA weight), $\varepsilon = 10^{-8}$ (damping), $\alpha_{\mathrm{kl}} = 0.01$ (mixing weight), $\tau = 10$ (QR refresh period), $T_{\mathrm{NS}} = 5$ (Newton--Schulz iterations), $r \in \{32, 64, 128\}$ (subspace rank), $\lambda_0 = 0.1$ (initial eigenvalue scale). Learning rate $\eta$ and weight decay $\lambda_{\mathrm{wd}}$ are swept per configuration.

\paragraph{Initialization.}
$M \gets 0$; $U$ is the top-$r$ right singular vectors of the first observed gradient $G^{(0)}$; $L \gets \tfrac{1-\beta_2}{r}\Gtil\Gtil^\top$, $S \gets \tfrac{1-\beta_2}{m}\Gtil^\top\Gtil$ with $\Gtil = G^{(0)}U$; $\lambda_L, \lambda_S, \mu_\perp \gets \lambda_0$ (fixed initialization, since single-sample eigenvalues are noisy); $Q_L, Q_S$ are the eigenvectors of $L, S$ sorted by descending eigenvalue.

\begin{algorithm}[H]
\caption{Pro-KLShampoo (practical; one step, $m \leq n$)}
\label{alg:practical}
\begin{algorithmic}[1]
\Require gradient $G \in \R^{m \times n}$
\Statex
\vspace{-6pt}
\Statex \emph{(1) Momentum and projection}
\State $M \gets \mu\,M + G$;\quad $\hat{G} \gets G + \mu\,M$
\State $\hat{G}_\parallel \gets \hat{G}\,U$;\quad $\hat{G}_\perp \gets \hat{G} - \hat{G}_\parallel\,U^\top$
\Statex
\vspace{-6pt}
\Statex \emph{(2) Update direction}
\State $\Delta_{\mathrm{sub}} \gets Q_L\,\Diag(\lambda_L^{\odot -1/2})\,Q_L^\top\,\hat{G}_\parallel\,Q_S\,\Diag(\lambda_S^{\odot -1/2})\,Q_S^\top\,U^\top$
\State $\Delta_{\mathrm{res}} \gets \mathrm{NS}\!\big(Q_L\,\Diag(\lambda_L^{\odot -1/2})\,Q_L^\top\,\hat{G}_\perp\big)\cdot\sqrt{\max(1,\,m/n)}$
\State $W \gets (1 - \eta\,\lambda_{\mathrm{wd}})\,W$
\State $W \gets W - \eta\,(\Delta_{\mathrm{res}} + \alpha_{\mathrm{kl}}\,\Delta_{\mathrm{sub}})$
\Statex
\vspace{-6pt}
\Statex \emph{(3) Statistics update (raw $G$)}
\State $\Gtil \gets G\,U$;\quad $G_\perp \gets G - \Gtil\,U^\top$
\State $\mathrm{ldiag} \gets \mathrm{mean}\!\big([Q_L^\top\,\Gtil\,Q_S\,\Diag(\lambda_S^{\odot -1/2})]^{\odot 2},\; 1\big)$
\State $\mathrm{rdiag} \gets \mathrm{mean}\!\big([\Diag(\lambda_L^{\odot -1/2})\,Q_L^\top\,\Gtil\,Q_S]^{\odot 2},\; 0\big)$
\State $e^{\mathrm{res}}_i \gets (Q_L^\top\,G_\perp\,G_\perp^\top\,Q_L)_{ii}$ for all $i$
\State $\lambda_{L,i} \gets \beta_2\,\lambda_{L,i} + (1{-}\beta_2)\,(r\,\mathrm{ldiag}_i + \mu_\perp^{-1} e^{\mathrm{res}}_i)/n$ for all $i$
\State $\lambda_{S,j} \gets \beta_2\,\lambda_{S,j} + (1{-}\beta_2)\,\mathrm{rdiag}_j$ for all $j$
\State $\mu_\perp \gets \beta_2\,\mu_\perp + (1{-}\beta_2)\,\dfrac{1}{m(n-r)}\sum_{i=1}^{m} e^{\mathrm{res}}_i\,(\lambda_{L,i}^{-1/2})^2$
\State $L \gets \beta_2\,L + \dfrac{1-\beta_2}{n}\Big([\Gtil\,Q_S\,\Diag(\lambda_S^{\odot -1/2})][\Gtil\,Q_S\,\Diag(\lambda_S^{\odot -1/2})]^\top + \mu_\perp^{-1}\,G_\perp G_\perp^\top\Big)$
\State $S \gets \beta_2\,S + \dfrac{1-\beta_2}{m}\,[\Diag(\lambda_L^{\odot -1/2})\,Q_L^\top\,\Gtil]^\top[\Diag(\lambda_L^{\odot -1/2})\,Q_L^\top\,\Gtil]$
\Statex
\vspace{-6pt}
\Statex \emph{(4) Subspace tracking}
\State $U_{\mathrm{new}} \gets \mathrm{qr}\!\Big(\beta_2\,U\,S + \dfrac{1-\beta_2}{m}\,G^\top\,Q_L\,\Diag(\lambda_L^{\odot -1})\,Q_L^\top\,G\,U\Big)$
\State $T_{\mathrm{rot}} \gets U^\top\,U_{\mathrm{new}}$
\State $S \gets T_{\mathrm{rot}}^\top\,S\,T_{\mathrm{rot}}$;\quad $Q_S \gets T_{\mathrm{rot}}^\top\,Q_S$;\quad $U \gets U_{\mathrm{new}}$
\Statex
\vspace{-6pt}
\Statex \emph{(5) Periodic eigenbasis refresh (every $\tau$ steps)}\label{line:qr_refresh}
\State $Q_L \gets \qr(L\,Q_L)$;\quad $Q_S \gets \qr(S\,Q_S)$
\end{algorithmic}
\end{algorithm}

\textbf{Smok-Hop}~\eqref{eq:restr_update} is recovered by replacing $\Delta_{\mathrm{res}}$ with
\[
    \mu_\perp^{-1/2}\,Q_L\,\Diag(\lambda_L^{\odot -1/2})\,Q_L^\top\,\hat G_\perp,
\]
setting $\alpha_{\mathrm{kl}} = 1$ and using EMA momentum in place of Nesterov.

\paragraph{(P1) Nesterov momentum.}
The update direction uses the Nesterov-lookahead gradient $\hat G \coloneqq G + \mu M$ in place of the raw gradient $G$. The EMAs of $L, S, \mu_\perp$ continue to use the raw $G$.

\paragraph{(P2) Damping and clipping.}
We apply $L^{-1/2} = Q_L\,\Diag(\lambda_L^{\odot -1/2})\,Q_L^\top$ (similarly $S^{-1/2}$). Each elementwise $\lambda_i^{-1/2}$ is damped to $1/(\sqrt{\lambda_i} + \varepsilon)$ and additionally clipped from above by a dimension-aware ceiling (Eigenvalue clipping below). Algorithm~\ref{alg:practical} writes the un-damped, un-clipped form.

\paragraph{Eigenvalue clipping.}
After every per-eigenvalue EMA update we clip the inverse-square-roots:
\begin{equation}\label{eq:clamp_practical}
    \lambda_{L,i}^{-1/2} \gets \min\!\bigl(\lambda_{L,i}^{-1/2},\,C_L\bigr),\quad
    \lambda_{S,j}^{-1/2} \gets \min\!\bigl(\lambda_{S,j}^{-1/2},\,C_S\bigr),\quad
    \mu_\perp^{-1/2} \gets \min\!\bigl(\mu_\perp^{-1/2},\,C_\mu\bigr),
\end{equation}
with dimension-aware ceilings $C_L = \max(10, \min(m, 4000))$, $C_S = \max(10, \min(n, 4000))$, $C_\mu = \max(10, \min(\max(m,n), 4000))$, equivalently flooring $\lambda$ at $1/C^2$ where $C \coloneqq \max(C_L, C_S, C_\mu)$. The clip enforces the lower-bound assumption of Lemma~\ref{lem:clamp_lb} and provides numerical stability for the inverse-square-roots. It also fixes a scalar gauge: $L \otimes \hat R$ is invariant under $(L, \hat R) \mapsto (cL,\, c^{-1}\hat R)$ for any $c > 0$, so the KL stationarity system determines only the Kronecker product and not the individual factors; clipping all three preconditioners simultaneously breaks this rescaling freedom. We use $C \in [768, 4000]$.

\paragraph{(P3) Newton--Schulz orthogonalization.}
The exact polar factor in Algorithm~\ref{alg:ideal} is approximated by $T_{\mathrm{NS}}$ iterations of the Muon~\citep{modded_nanogpt_2024} polynomial $X \mapsto a\,X + b\,X X^\top X + c\,(X X^\top)^2 X$ with $(a,b,c) = (3.4445,-4.7750,2.0315)$, starting from $X/\|X\|_F$.

\vspace{10pt}

\begin{remark}[Left projection, $m > n$]\label{rem:left_proj}
When $m > n$, we project the left-side preconditioner $L$ rather than the right-side $R$. This is implemented by transposing all roles: the $r$-dimensional subspace $U \in \R^{m\times r}$ acts on the left, $\Gtil \coloneqq U^\top G \in \R^{r\times n}$, complement $G_\perp \coloneqq G - U\Gtil$, and the orthogonalized update becomes $\Delta_{\mathrm{res}} = \mathrm{NS}(\hat{G}_\perp\,R^{-1/2})\cdot\sqrt{\max(1,\,m/n)}$. The stationarity conditions and stability propositions hold by symmetry.
\end{remark}

\newpage

\section{Memory and computational cost}
\label{app:cost}

We compare the memory and per-step computational cost of Pro-KLShampoo and KL-Shampoo for one weight $W \in \R^{m \times n}$ with $m \leq n$, Newton--Schulz iteration count $T_{\mathrm{NS}}$, subspace rank $r \ll n$, and eigenbasis-refresh period $\tau$ (Table~\ref{tab:cost}, in the row layout of~\citet{lin2025understanding}). Memory rows give exact element counts; compute rows give leading-order matmul cost with constants suppressed.

\begin{table}[h]
\centering
\setlength{\tabcolsep}{6pt}
\caption{Memory and per-step computational cost for one weight $W \in \R^{m \times n}$, $m \leq n$, $r \ll n$.}
\label{tab:cost}
\begin{tabular}{lcc}
\toprule
& KL-Shampoo & Pro-KLShampoo \\
\midrule
\multicolumn{3}{l}{\emph{Memory (per parameter)}} \\
Kronecker / subspace factors ($L,R$ vs $L,S$) & $m^2 + n^2$ & $m^2 + r^2$ \\
Eigenbases ($Q_L, Q_R$ vs $Q_L, Q_S$) & $m^2 + n^2$ & $m^2 + r^2$ \\
Eigenvalues ($\lambda_L, \lambda_R$ vs $\lambda_L, \lambda_S$) & $m + n$ & $m + r$ \\
Subspace basis $U$ & N/A & $nr$ \\
Complement scalar $\mu_\perp$ & N/A & $1$ \\
Momentum buffer & $mn$ & $mn$ \\
\midrule
\multicolumn{3}{l}{\emph{Compute (per step, leading order)}} \\
Matmul (preconditioning $+$ statistics) & $mn^2$ & $T_{\mathrm{NS}}\,m^2 n + mnr$ \\
Eigenbasis QR (amortized over $\tau$ steps) & $(m^3 + n^3)/\tau$ & $(m^3 + r^3)/\tau$ \\
\bottomrule
\end{tabular}
\end{table}

\paragraph{Memory.}
The right-side $n \times n$ Kronecker factor $R$ and its eigenbasis $Q_R$ are replaced by the $r \times r$ pair $(S, Q_S)$, plus the $n \times r$ subspace basis $U$ and the scalar $\mu_\perp$; for $r \ll n$ the right-side state shrinks by a factor of about $n/r$.

\paragraph{Compute.}
KL-Shampoo's per-step cost is dominated by the right-side $mn^2$ matmul (applying $R^{-1/2}$) and the periodic $n^3$ QR refresh of $Q_R$. Pro-KLShampoo replaces the full preconditioning by a rank-$r$ Kronecker step on the projection and $T_{\mathrm{NS}}$ Newton--Schulz iterations on the $m \times n$ residual, and the right-side QR shrinks to $r^3$; the subspace projection and basis tracking add $mnr$ overhead (e.g., forming $\hat G U$). Newton--Schulz is dense matrix multiplication and runs efficiently on GPU, whereas QR is sequential and does not---so the $n^3 \to r^3$ QR shrink is the main per-step wallclock saving, partly offset by the subspace overhead when $n$ is moderate.

\newpage

\section{Wallclock comparison with Muon}
\label{app:muon-wallclock}

The main results report Pro-KLShampoo's wallclock saving relative to KL-Shampoo (\S\ref{sec:pretraining}). Here we report the wallclock saving relative to Muon~\citep{modded_nanogpt_2024} on LLaMA. Validation loss and memory against Muon at all four scales are reported in Table~\ref{tab:main}.

\begin{figure}[h]
\centering
\includegraphics[width=\linewidth]{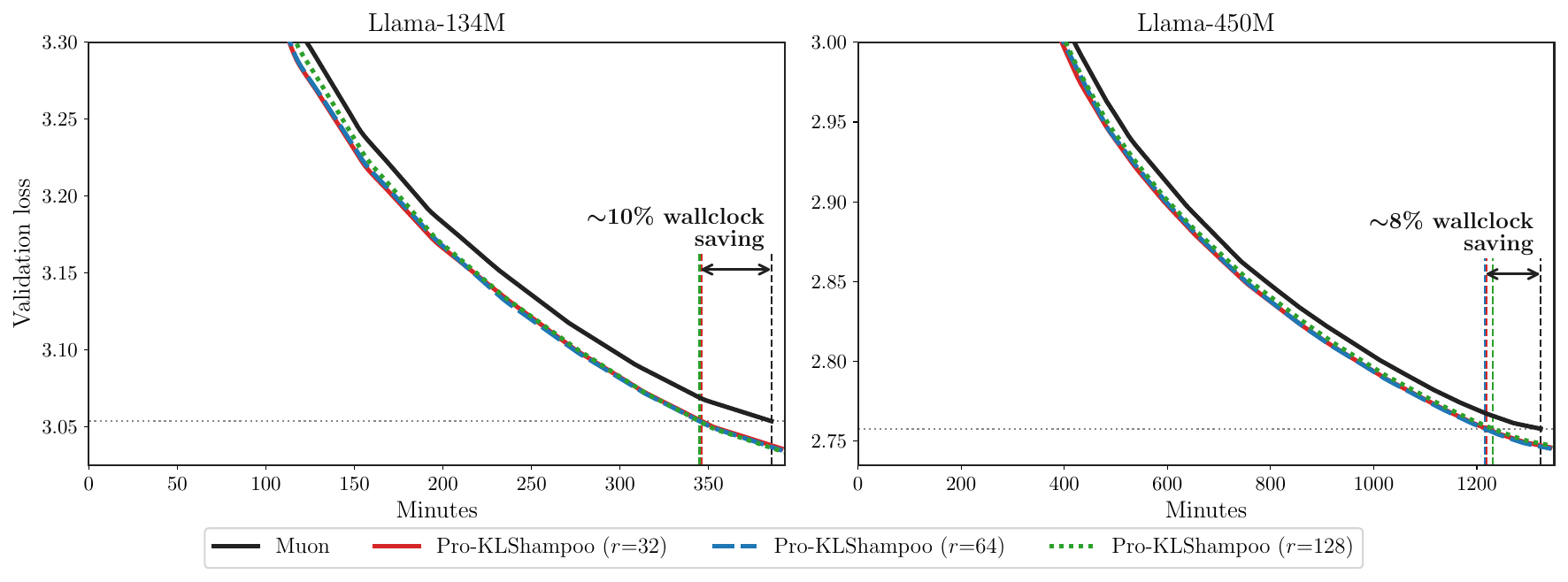}
\caption{Validation loss versus wallclock time for Muon and Pro-KLShampoo at $r \in \{32, 64, 128\}$ on LLaMA (134M, left; 450M, right). Pro-KLShampoo reaches Muon's final validation loss in approximately $10\%$ less wallclock time at 134M and $8\%$ less at 450M.}
\label{fig:llama-wallclock-muon}
\end{figure}

On LLaMA (Figure~\ref{fig:llama-wallclock-muon}), Pro-KLShampoo reaches Muon's final validation loss in less wallclock time at every rank: approximately $10\%$ less at 134M and $8\%$ less at 450M. On GPT-2, Pro-KLShampoo does not show a wallclock saving over Muon.

\section{Comparison with COSMOS}
\label{app:cosmos}

We provide a head-to-head comparison with COSMOS~\citep{liu2025cosmos}, which combines SOAP-style estimation inside a tracked subspace with Muon on the complement, at matched ranks across all four scales.

\paragraph{COSMOS configuration.}
We use the official COSMOS release with default hyperparameters (Nesterov momentum $0.95$, $5$-step Newton--Schulz, mixing weight $\gamma = 0.25$). Embedding and output weights are trained by AdamW, matching \S\ref{sec:experiments}. The subspace rank $r$ and learning rate are swept independently per scale (Appendix~\ref{app:sweeps}).

\paragraph{Results.}
Table~\ref{tab:cosmos-comparison} reports the head-to-head. At matched rank $r \in \{64, 128\}$, Pro-KLShampoo reaches lower validation loss than COSMOS at three of the four scales (GPT-2 124M, GPT-2 350M, LLaMA 134M), with margins of $0.004$--$0.026$. On LLaMA 450M the gap is small ($0.005$ in favor of COSMOS at $\mathrm{WD}{=}0$) and reverses under weight decay (below). Pro-KLShampoo also uses lower peak GPU memory at every entry (largest reduction $0.76$\,GiB on LLaMA 450M).

\begin{table}[h]
\centering
\caption{Head-to-head comparison with COSMOS at matched rank, across all four scales. Validation loss is the best across hyperparameter sweeps; memory (mem, in GiB) is the corresponding peak per-GPU memory of that run. Memory is comparable only within each scale. Pro-KLShampoo maintains the optimizer state in half precision, while COSMOS uses the full-precision optimizer state of its official release.}
\label{tab:cosmos-comparison}
\begin{tabular}{lccccccc}
\toprule
\multirow{2}{*}{Scale} & \multirow{2}{*}{$r$} & \multicolumn{2}{c}{COSMOS} & \multicolumn{2}{c}{Pro-KLShampoo} & \multicolumn{2}{c}{$\Delta$ (Pro $-$ COSMOS)} \\
\cmidrule(lr){3-4}\cmidrule(lr){5-6}\cmidrule(lr){7-8}
& & val loss & mem & val loss & mem & val loss & mem \\
\midrule
\multirow{2}{*}{GPT-2 124M}   & 64  & 3.3014 & 30.24 & 3.2754 & 30.16 & $-0.0260$ & $-0.08$ \\
                              & 128 & 3.2985 & 30.25 & 3.2745 & 30.17 & $-0.0240$ & $-0.08$ \\
\midrule
\multirow{2}{*}{GPT-2 350M}   & 64  & 3.0492 & 72.39 & 3.0348 & 72.24 & $-0.0144$ & $-0.15$ \\
                              & 128 & 3.0474 & 72.41 & 3.0344 & 72.25 & $-0.0130$ & $-0.16$ \\
\midrule
\multirow{2}{*}{LLaMA 134M}   & 64  & 3.0383 & 31.90 & 3.0345 & 31.75 & $-0.0038$ & $-0.15$ \\
                              & 128 & 3.0386 & 31.91 & 3.0335 & 31.76 & $-0.0051$ & $-0.15$ \\
\midrule
\multirow{2}{*}{LLaMA 450M}   & 64  & 2.7404 & 72.62 & 2.7453 & 71.86 & $+0.0049$ & $-0.76$ \\
                              & 128 & 2.7416 & 72.64 & 2.7469 & 71.89 & $+0.0053$ & $-0.75$ \\
\bottomrule
\end{tabular}
\end{table}

\paragraph{Weight decay ablation on LLaMA 450M.}
The LLaMA experiments above follow the GaLore convention of $\mathrm{WD}{=}0$ (\S\ref{sec:experiments}). We additionally evaluate both methods with weight decay on LLaMA 450M. COSMOS is swept over $\mathrm{WD} \in \{0.1, 0.3, 0.5\}$ at its previously-selected learning rate, reaching $2.7264$ at $r{=}64$. Pro-KLShampoo at $\mathrm{WD}{=}0.03$ (no further sweep) reaches $2.7142$ at $r{=}64$---a gap of $-0.0122$ over COSMOS.

\section{Hyperparameter sweeps}
\label{app:sweeps}

This appendix lists the hyperparameter sweep ranges and the selected values for every method and every model scale used in the experiments of \S\ref{sec:experiments}. The tuning protocol is that of \S\ref{sec:experiments}: AdamW is tuned first; its selected learning rate and weight decay are then fixed for the embedding and output layers under all matrix optimizers; the matrix-side hyperparameters of each method are then swept independently per method. For Pro-KLShampoo, the mixing weight $\alpha_{\mathrm{kl}}$ is swept jointly with the learning rate. On LLaMA, weight decay is fixed to $0$ throughout, following the GaLore convention~\citep{zhao2024galore}; on GPT-2, weight decay is swept jointly with the learning rate. Other hyperparameters follow the source releases of \citet{lin2025understanding,modded_nanogpt_2024,liu2025cosmos}: AdamW uses $(\beta_1, \beta_2) = (0.9, 0.95)$; Muon and Pro-KLShampoo use Nesterov momentum with coefficient $0.95$; KL-Shampoo uses momentum coefficient $0.95$; KL-Shampoo and Pro-KLShampoo use $\beta_2 = 0.95$ and refresh the preconditioner every $10$ steps; Muon, COSMOS, and Pro-KLShampoo use 5-step Newton--Schulz iteration for orthogonalization.

\begin{table}[H]
\centering
\small
\caption{Sweep ranges and selected values, GPT-2 124M on FineWeb-10B. ``--'' indicates the hyperparameter is not used by the method. Selected values are given in parentheses.}
\label{tab:sweep-gpt124m}
\resizebox{\textwidth}{!}{%
\begin{tabular}{lccc}
\toprule
Method & lr & wd & $\alpha_{\mathrm{kl}}$ \\
\midrule
AdamW         & $\{1\mathrm{e}{-3}, 2\mathrm{e}{-3}, 3\mathrm{e}{-3}\}$ ($2\mathrm{e}{-3}$)        & $\{2\mathrm{e}{-2}, 5\mathrm{e}{-2}, 1\mathrm{e}{-1}\}$ ($5\mathrm{e}{-2}$)              & --  \\
Muon          & $\{1\mathrm{e}{-2}, 2\mathrm{e}{-2}, 3\mathrm{e}{-2}\}$ ($2\mathrm{e}{-2}$)            & $\{0, 2\mathrm{e}{-2}, 3\mathrm{e}{-2}, 5\mathrm{e}{-2}, 1\mathrm{e}{-1}\}$ ($2\mathrm{e}{-2}$)     & --  \\
KL-Shampoo    & $\{1\mathrm{e}{-3}, 2\mathrm{e}{-3}, 3\mathrm{e}{-3}\}$ ($2\mathrm{e}{-3}$)        & $\{5\mathrm{e}{-2}, 1\mathrm{e}{-1}, 1.5\mathrm{e}{-1}, 2\mathrm{e}{-1}, 2.5\mathrm{e}{-1}, 3\mathrm{e}{-1}\}$ ($2.5\mathrm{e}{-1}$) & --  \\
COSMOS ($r{=}64$)  & $\{2\mathrm{e}{-4}, 5\mathrm{e}{-4}, 1\mathrm{e}{-3}, 2\mathrm{e}{-3}\}$ ($5\mathrm{e}{-4}$) & $\{1\mathrm{e}{-1}, 2.5\mathrm{e}{-1}, 5\mathrm{e}{-1}\}$ ($2.5\mathrm{e}{-1}$) & --  \\
COSMOS ($r{=}128$) & $\{2\mathrm{e}{-4}, 5\mathrm{e}{-4}, 1\mathrm{e}{-3}, 2\mathrm{e}{-3}\}$ ($5\mathrm{e}{-4}$) & $\{1\mathrm{e}{-1}, 2.5\mathrm{e}{-1}, 5\mathrm{e}{-1}\}$ ($2.5\mathrm{e}{-1}$) & --  \\
Pro-KLShampoo ($r{=}32$)   & $\{1\mathrm{e}{-2}, 2\mathrm{e}{-2}, 3\mathrm{e}{-2}\}$ ($2\mathrm{e}{-2}$) & $\{1\mathrm{e}{-2}, 2\mathrm{e}{-2}, 3\mathrm{e}{-2}, 5\mathrm{e}{-2}, 1\mathrm{e}{-1}\}$ ($3\mathrm{e}{-2}$) & $\{5\mathrm{e}{-3}, 1\mathrm{e}{-2}, 1.5\mathrm{e}{-2}\}$ ($1\mathrm{e}{-2}$) \\
Pro-KLShampoo ($r{=}64$)   & $\{1\mathrm{e}{-2}, 2\mathrm{e}{-2}, 3\mathrm{e}{-2}\}$ ($2\mathrm{e}{-2}$) & $\{1\mathrm{e}{-2}, 2\mathrm{e}{-2}, 3\mathrm{e}{-2}, 5\mathrm{e}{-2}, 1\mathrm{e}{-1}\}$ ($3\mathrm{e}{-2}$) & $\{5\mathrm{e}{-3}, 1\mathrm{e}{-2}, 1.5\mathrm{e}{-2}\}$ ($1\mathrm{e}{-2}$) \\
Pro-KLShampoo ($r{=}128$)  & $\{1\mathrm{e}{-2}, 2\mathrm{e}{-2}, 3\mathrm{e}{-2}\}$ ($2\mathrm{e}{-2}$) & $\{1\mathrm{e}{-2}, 2\mathrm{e}{-2}, 3\mathrm{e}{-2}, 5\mathrm{e}{-2}, 1\mathrm{e}{-1}\}$ ($3\mathrm{e}{-2}$) & $\{5\mathrm{e}{-3}, 1\mathrm{e}{-2}, 1.5\mathrm{e}{-2}\}$ ($1\mathrm{e}{-2}$) \\
Smok-Hop ($r{=}128$) & uses KL-Shampoo's selected lr ($2\mathrm{e}{-3}$)        & $2.5\mathrm{e}{-1}$  & --  \\
\bottomrule
\end{tabular}%
}
\end{table}

\begin{table}[H]
\centering
\small
\caption{Sweep ranges and selected values, GPT-2 350M on FineWeb-10B. Selected values are given in parentheses.}
\label{tab:sweep-gpt350m}
\resizebox{\textwidth}{!}{%
\begin{tabular}{lccc}
\toprule
Method & lr & wd & $\alpha_{\mathrm{kl}}$ \\
\midrule
AdamW         & $\{1\mathrm{e}{-3}, 2\mathrm{e}{-3}, 3\mathrm{e}{-3}\}$ ($2\mathrm{e}{-3}$)         & $\{5\mathrm{e}{-2}, 1\mathrm{e}{-1}, 1.5\mathrm{e}{-1}\}$ ($5\mathrm{e}{-2}$)              & --  \\
Muon          & $\{1\mathrm{e}{-2}, 2\mathrm{e}{-2}, 3\mathrm{e}{-2}, 5\mathrm{e}{-2}\}$ ($2\mathrm{e}{-2}$)       & $\{0, 2\mathrm{e}{-2}, 3\mathrm{e}{-2}, 5\mathrm{e}{-2}, 1\mathrm{e}{-1}\}$ ($3\mathrm{e}{-2}$)     & --  \\
KL-Shampoo    & $\{1\mathrm{e}{-3}, 2\mathrm{e}{-3}, 3\mathrm{e}{-3}\}$ ($2\mathrm{e}{-3}$)         & $\{5\mathrm{e}{-2}, 1\mathrm{e}{-1}, 1.5\mathrm{e}{-1}, 2\mathrm{e}{-1}, 2.5\mathrm{e}{-1}, 3\mathrm{e}{-1}\}$ ($1.5\mathrm{e}{-1}$) & --  \\
COSMOS ($r{=}64$)  & $\{2\mathrm{e}{-4}, 5\mathrm{e}{-4}, 1\mathrm{e}{-3}, 2\mathrm{e}{-3}\}$ ($1\mathrm{e}{-3}$) & $\{2\mathrm{e}{-1}, 3\mathrm{e}{-1}, 5\mathrm{e}{-1}\}$ ($5\mathrm{e}{-1}$) & --  \\
COSMOS ($r{=}128$) & $\{2\mathrm{e}{-4}, 5\mathrm{e}{-4}, 1\mathrm{e}{-3}, 2\mathrm{e}{-3}\}$ ($1\mathrm{e}{-3}$) & $\{2\mathrm{e}{-1}, 3\mathrm{e}{-1}, 5\mathrm{e}{-1}\}$ ($5\mathrm{e}{-1}$) & --  \\
Pro-KLShampoo ($r{=}32$)  & $\{1\mathrm{e}{-2}, 2\mathrm{e}{-2}, 3\mathrm{e}{-2}\}$ ($2\mathrm{e}{-2}$)  & $\{1\mathrm{e}{-2}, 2\mathrm{e}{-2}, 3\mathrm{e}{-2}, 5\mathrm{e}{-2}, 1\mathrm{e}{-1}\}$ ($3\mathrm{e}{-2}$) & $\{5\mathrm{e}{-3}, 1\mathrm{e}{-2}, 1.5\mathrm{e}{-2}\}$ ($1\mathrm{e}{-2}$) \\
Pro-KLShampoo ($r{=}64$)  & $\{1\mathrm{e}{-2}, 2\mathrm{e}{-2}, 3\mathrm{e}{-2}\}$ ($2\mathrm{e}{-2}$)  & $\{1\mathrm{e}{-2}, 2\mathrm{e}{-2}, 3\mathrm{e}{-2}, 5\mathrm{e}{-2}, 1\mathrm{e}{-1}\}$ ($3\mathrm{e}{-2}$) & $\{5\mathrm{e}{-3}, 1\mathrm{e}{-2}, 1.5\mathrm{e}{-2}\}$ ($1\mathrm{e}{-2}$) \\
Pro-KLShampoo ($r{=}128$) & $\{1\mathrm{e}{-2}, 2\mathrm{e}{-2}, 3\mathrm{e}{-2}\}$ ($2\mathrm{e}{-2}$)  & $\{1\mathrm{e}{-2}, 2\mathrm{e}{-2}, 3\mathrm{e}{-2}, 5\mathrm{e}{-2}, 1\mathrm{e}{-1}\}$ ($3\mathrm{e}{-2}$) & $\{5\mathrm{e}{-3}, 1\mathrm{e}{-2}, 1.5\mathrm{e}{-2}\}$ ($1\mathrm{e}{-2}$) \\
\bottomrule
\end{tabular}%
}
\end{table}

\begin{table}[H]
\centering
\small
\caption{Sweep ranges and selected values, LLaMA 134M on C4. Weight decay is $0$ for all methods, following the GaLore convention. Selected values are given in parentheses.}
\label{tab:sweep-llama134m}
\begin{tabular}{lcc}
\toprule
Method & lr & $\alpha_{\mathrm{kl}}$ \\
\midrule
AdamW         & $\{1\mathrm{e}{-3}, 2\mathrm{e}{-3}, 3\mathrm{e}{-3}, 5\mathrm{e}{-3}\}$ ($3\mathrm{e}{-3}$)        & --  \\
Muon          & $\{5\mathrm{e}{-3}, 1\mathrm{e}{-2}, 2\mathrm{e}{-2}, 3\mathrm{e}{-2}\}$ ($2\mathrm{e}{-2}$)            & --  \\
KL-Shampoo    & $\{2\mathrm{e}{-3}, 3\mathrm{e}{-3}, 4\mathrm{e}{-3}, 5\mathrm{e}{-3}, 8\mathrm{e}{-3}\}$ ($5\mathrm{e}{-3}$) & --  \\
COSMOS ($r{=}64$)             & $\{2\mathrm{e}{-4}, 5\mathrm{e}{-4}, 1\mathrm{e}{-3}, 2\mathrm{e}{-3}\}$ ($1\mathrm{e}{-3}$) & -- \\
COSMOS ($r{=}128$)            & $\{2\mathrm{e}{-4}, 5\mathrm{e}{-4}, 1\mathrm{e}{-3}, 2\mathrm{e}{-3}\}$ ($1\mathrm{e}{-3}$) & -- \\
Pro-KLShampoo ($r{=}32$)      & $\{1\mathrm{e}{-2}, 2\mathrm{e}{-2}, 3\mathrm{e}{-2}\}$ ($2\mathrm{e}{-2}$)               & $\{5\mathrm{e}{-3}, 1\mathrm{e}{-2}, 1.5\mathrm{e}{-2}\}$ ($1\mathrm{e}{-2}$) \\
Pro-KLShampoo ($r{=}64$)      & $\{1\mathrm{e}{-2}, 2\mathrm{e}{-2}, 3\mathrm{e}{-2}\}$ ($2\mathrm{e}{-2}$)               & $\{5\mathrm{e}{-3}, 1\mathrm{e}{-2}, 1.5\mathrm{e}{-2}\}$ ($1\mathrm{e}{-2}$) \\
Pro-KLShampoo ($r{=}128$)     & $\{1\mathrm{e}{-2}, 2\mathrm{e}{-2}, 3\mathrm{e}{-2}\}$ ($3\mathrm{e}{-2}$)               & $\{5\mathrm{e}{-3}, 1\mathrm{e}{-2}, 1.5\mathrm{e}{-2}\}$ ($1\mathrm{e}{-2}$) \\
Smok-Hop ($r{=}128$) & $\{2\mathrm{e}{-3}, 3\mathrm{e}{-3}, 5\mathrm{e}{-3}\}$ ($3\mathrm{e}{-3}$)               & --  \\
\bottomrule
\end{tabular}
\end{table}

The subspace-only and complement-only variants on LLaMA 134M use the selected hyperparameters of Pro-KLShampoo at $r{=}128$ with the learning rate scaled by $1.5$ as on GPT-2 124M (same rationale).

\begin{table}[H]
\centering
\small
\caption{Sweep ranges and selected values, LLaMA 450M on C4. Weight decay is $0$ for all methods. Selected values are given in parentheses.}
\label{tab:sweep-llama450m}
\begin{tabular}{lcc}
\toprule
Method & lr & $\alpha_{\mathrm{kl}}$ \\
\midrule
AdamW         & $\{2\mathrm{e}{-3}, 3\mathrm{e}{-3}, 5\mathrm{e}{-3}\}$ ($2\mathrm{e}{-3}$)        & --  \\
Muon          & $\{2\mathrm{e}{-2}, 3\mathrm{e}{-2}, 5\mathrm{e}{-2}\}$ ($2\mathrm{e}{-2}$)            & --  \\
KL-Shampoo    & $\{2\mathrm{e}{-3}, 3\mathrm{e}{-3}, 5\mathrm{e}{-3}\}$ ($3\mathrm{e}{-3}$)        & --  \\
COSMOS ($r{=}64$)             & $\{2\mathrm{e}{-4}, 5\mathrm{e}{-4}, 1\mathrm{e}{-3}, 2\mathrm{e}{-3}\}$ ($5\mathrm{e}{-4}$) & -- \\
COSMOS ($r{=}128$)            & $\{2\mathrm{e}{-4}, 5\mathrm{e}{-4}, 1\mathrm{e}{-3}, 2\mathrm{e}{-3}\}$ ($5\mathrm{e}{-4}$) & -- \\
Pro-KLShampoo ($r{=}32$)      & $\{2\mathrm{e}{-2}, 3\mathrm{e}{-2}\}$ ($2\mathrm{e}{-2}$)                      & $\{5\mathrm{e}{-3}, 1\mathrm{e}{-2}\}$ ($5\mathrm{e}{-3}$) \\
Pro-KLShampoo ($r{=}64$)      & $\{2\mathrm{e}{-2}, 3\mathrm{e}{-2}\}$ ($2\mathrm{e}{-2}$)                      & $\{5\mathrm{e}{-3}, 1\mathrm{e}{-2}\}$ ($5\mathrm{e}{-3}$) \\
Pro-KLShampoo ($r{=}128$)     & $\{2\mathrm{e}{-2}, 3\mathrm{e}{-2}\}$ ($2\mathrm{e}{-2}$)                      & $\{5\mathrm{e}{-3}, 1\mathrm{e}{-2}\}$ ($5\mathrm{e}{-3}$) \\
Smok-Hop ($r{=}128$) & uses KL-Shampoo's selected lr ($3\mathrm{e}{-3}$)        & --  \\
\bottomrule
\end{tabular}
\end{table}

\paragraph{Ablation hyperparameters.}
The subspace-only variant and the complement-only variant reuse the selected hyperparameters from the corresponding Pro-KLShampoo $r{=}128$ runs, except with a learning rate scaled by $1.5$ to compensate for the removed component (rationale: under $\alpha_{\mathrm{kl}}$ calibration the subspace and complement update components have approximately matched operator norms; removing one halves the squared-Frobenius norm of the update, and $\sqrt{2} \approx 1.5$ rescales it back).

\end{document}